\newcommand{\argmax}{\mathop{\rm arg~max}\limits}
\newcommand{\argmin}{\mathop{\rm arg~min}\limits}
\theoremstyle{plain}
\newtheorem{theorem}{Theorem}[section]
\newtheorem{remark}{Remark}[section]
\long\def\@makecaption#1#2{%
  \normalsize
  \vskip\abovecaptionskip
  \sbox\@tempboxa{#1: #2}%
  \ifdim \wd\@tempboxa >\hsize
    #1: #2\par
  \else
    \global \@minipagefalse
    \hb@xt@\hsize{\hfil\box\@tempboxa\hfil}%
  \fi
  \vskip\belowcaptionskip}
\title{Selective Inference for Latent Block Models}
\author[1]{Chihiro Watanabe\thanks{watanabe-chihiro763@g.ecc.u-tokyo.ac.jp}}
\author[1,2]{Taiji Suzuki\thanks{taiji@mist.i.u-tokyo.ac.jp}}
\affil[1]{{\normalsize Graduate School of Information Science Technology, The University of Tokyo, Tokyo, Japan}}
\affil[2]{{\normalsize Center for Advanced Intelligence Project (AIP), RIKEN, Tokyo, Japan}}
\date{}
\begin{document}
\maketitle

\begin{abstract}
Model selection in latent block models has been a challenging but important task in the field of statistics. Specifically, a major challenge is encountered when constructing a test on a block structure obtained by applying a specific clustering algorithm to a finite size matrix. In this case, it becomes crucial to consider the selective bias in the block structure, that is, the block structure is selected from all the possible cluster memberships based on some criterion by the clustering algorithm. To cope with this problem, this study provides a selective inference method for latent block models. Specifically, we construct a statistical test on a set of row and column cluster memberships of a latent block model, which is given by a squared residue minimization algorithm. The proposed test, by its nature, includes and thus can also be used as the test on the set of row and column cluster numbers. We also propose an approximated version of the test based on simulated annealing to avoid combinatorial explosion in searching the optimal block structure. The results show that the proposed exact and approximated tests work effectively, compared to the naive test that did not take the selective bias into account. 

\smallskip
\noindent \textit{\textbf{Keywords.}} Latent block model, Selective inference, Relational data analysis
\end{abstract}


\section{Introduction}
\label{sec:introduction}

A latent block model or an LBM \cite{Govaert2003, Hartigan1972} has been widely used as a generative model of a relational data matrix, where the rows and columns represent different objects (e.g., customers and items), and its $(i, j)$th element shows some relationship between objects $i$ and $j$ (e.g., how many times the customer $i$ purchased item $j$). Until now, its effectiveness has been shown in various practical datasets, including customer-product transaction relationships \cite{Shan2008} and gene expression data \cite{Saber2011, Wyse2012}. In LBMs, we assume that there is an underlying block structure (i.e., a set of row and column cluster memberships) behind the observed data matrix and that each element of the matrix is generated independently from an identical distribution, given such a block structure. Particularly, a Gaussian LBM \cite{Lomet2012, Nadif2010} is useful to model a relational data matrix with real elements; this type of LBM is the focus of the current study. In a Gaussian LBM, we assume that each entry follows a Gaussian distribution, whose mean and variance are fixed constants in the same block (a formal description of Gaussian LBMs is given in Section \ref{sec:notation}). 

Besides estimating the block structure from a given observed data matrix based on an LBM, it is also important to \textit{test} the validity of a model (i.e., the number of blocks) or an estimation result. Until now, several tests \cite{Bickel2016, Hu2020, Lei2016, Watanabe2021, Yuan2018} have been proposed for determining the number of blocks in block models, such as a stochastic block model (SBM), which is a model for a square symmetric matrix (e.g., an adjacency matrix of the network structure). Among these studies, only \cite{Watanabe2021}'s test can be applied to the LBM setting; however, its target is different from ours in that it is limited to the number of blocks, not to the cluster memberships. Moreover, it is an asymptotic test, and thus its guarantee cannot be verified with a finite size observed matrix. 

In regard to an SBM, several studies have proposed a statistical test for a given set of community memberships of an observed matrix \cite{Gangrade2019, Hu2020, Karwa2016}. In \cite{Gangrade2019}, based on the numbers of edges within and across the clusters, two tests were proposed for an SBM; one of these tests included a goodness-of-fit test of community memberships. Although this study's objective is similar to ours, its problem setting is quite different from ours in various aspects, such as the setting of the alternative hypothesis and the assumptions in the network structure (e.g., there are two equal-sized communities in a given network and more intra-community edges than inter-community ones). Another study \cite{Hu2020} proposed an asymptotic test on both the number of communities and the community memberships of an SBM, whose validity is guaranteed with the infinite matrix size. This study is different from ours in that our proposed test is validated with a finite size matrix. \cite{Karwa2016} proposed a non-asymptotic test for an SBM setting; they generate finite samples of networks from the distribution of an SBM, conditioned on its sufficient statistics based on Markov chain Monte Carlo (MCMC), and, subsequently, compute the estimator of the $p$-value as the ratio of the test statistics of sampled networks being equal to or larger than that of an observed network. This study is somewhat similar to ours in that it tries to approximate the $p$-value under the condition that some function value of an observed matrix is given; however, it is fully based on a Metropolis-Hastings (MH) algorithm, and thus the resulting $p$-value is \textit{not} exact with finite samples. 

There have been many studies on statistical tests for SBMs, but none of them have enabled us to test the cluster memberships of LBMs. Particularly, in this study, we derive an \textit{exact} $p$-value in the following context, which is a typical case in practice. First, we estimate an underlying block structure or cluster memberships of the rows and columns of an observed data matrix, based on a specific criterion. For instance, as a criterion, we use the \textit{squared residue} or the sample variance within the same block \cite{Cho2004, Hartigan1972}, whose formal definition is given in Section \ref{sec:srm}. Subsequently, we perform a statistical test on the clustering result, \textit{which has been selected as an optimal block structure based on the data matrix}, in terms of the criterion described above. In regard to the construction of a valid statistical test, one concern is that it necessitates taking into account the \textit{selective bias} \cite{Berk2013, Lee2016, Loftus2015}. A test on cluster memberships tends to be inappropriately positive, that is, it tends not to reject the hypothesis that the estimated cluster memberships are correct, when the test fails to consider the fact that the hypothetical set of cluster memberships was selected by using the information of a data matrix. 

In order to perform a valid statistical inference in such a situation, \cite{Berk2013, Lee2016} introduced the methodology of post-selection inference. Particularly, \textit{selective inference} methods facilitate inference of a hypothesis selected based on some criterion, where we use the same data for the hypothesis selection as well as for its inference \cite{Lee2016}. The main idea behind the selective inference is to reveal the probability distribution of a given test statistic under the selection condition. By conditioning on the selection event, we can appropriately construct a test without the selective bias. Such selective inference methods have been developed for various problem settings, including variable selection in linear regression with L1 regularization \cite{Lee2016} and that with marginal screening \cite{Lee2014} and k-means clustering \cite{Inoue2017}. Concerning the problems related to the analysis of relational data matrices, several studies have proposed selective inference methods for biclustering \cite{Henriques2018, Lee2015}. Although they also concern a block (or multiple blocks) in a relational data matrix, their problem settings are different from ours. In our problem setting, a block structure corresponds to a set of cluster memberships of \textit{all} the rows and columns of an observed matrix. In other words, by rearranging the indices of rows and columns, a block structure is represented by a regular lattice on a matrix. However, \cite{Henriques2018, Lee2015} aimed to find a submatrix (or multiple submatrices) of the original data matrix whose mean is significantly larger than zero. 
Figure \ref{fig:As_existing} illustrates the difference between the optimal cluster memberships of the proposed and existing methods \cite{Lee2015}\footnote{To plot Figure \ref{fig:As_existing}, we randomly generated data matrices with the sizes of $(n, p) = (9, 9)$. We set the null and hypothetical sets of cluster numbers at $(2, 2)$; we defined the null cluster memberships as $g^{\mathrm{(N)}, (1)}_i = (i \bmod 2) + 1$, for all $i$, and $g^{\mathrm{(N)}, (2)}_j = (j \bmod 2) + 1$ for all $j$. In regard to the mean vector, we used the following setting: 
$\bm{\mu}_0 = \mathrm{vec} \left( \begin{bmatrix}
0.5 & 0 \\
0 & 0 \\
\end{bmatrix} \right)$. Based on the above settings, we generated a data vector by $\bm{x} \sim N(\bm{\mu}_0, 0.75^2 I)$ and applied the biclustering algorithms of the proposed and existing methods \cite{Lee2015}. The biclustering algorithm of the proposed method outputs a regular-grid bicluster structure based on the squared residue minimization, while that of \cite{Lee2015} outputs an $n_0 \times p_0$ submatrix with the largest sample mean, where we set $n_0 = p_0 = 5$.}. Since they are based on the mutually different assumptions on the latent bicluster structure, their ``optimal'' cluster memberships are not always identical, even with the same observed matrix. 
To the best of our knowledge, no study has proposed a selective inference method for the LBM setting, despite the effectiveness of LBMs in relational data analysis. 

\begin{figure}[t]
  \centering
  \includegraphics[width=0.32\hsize]{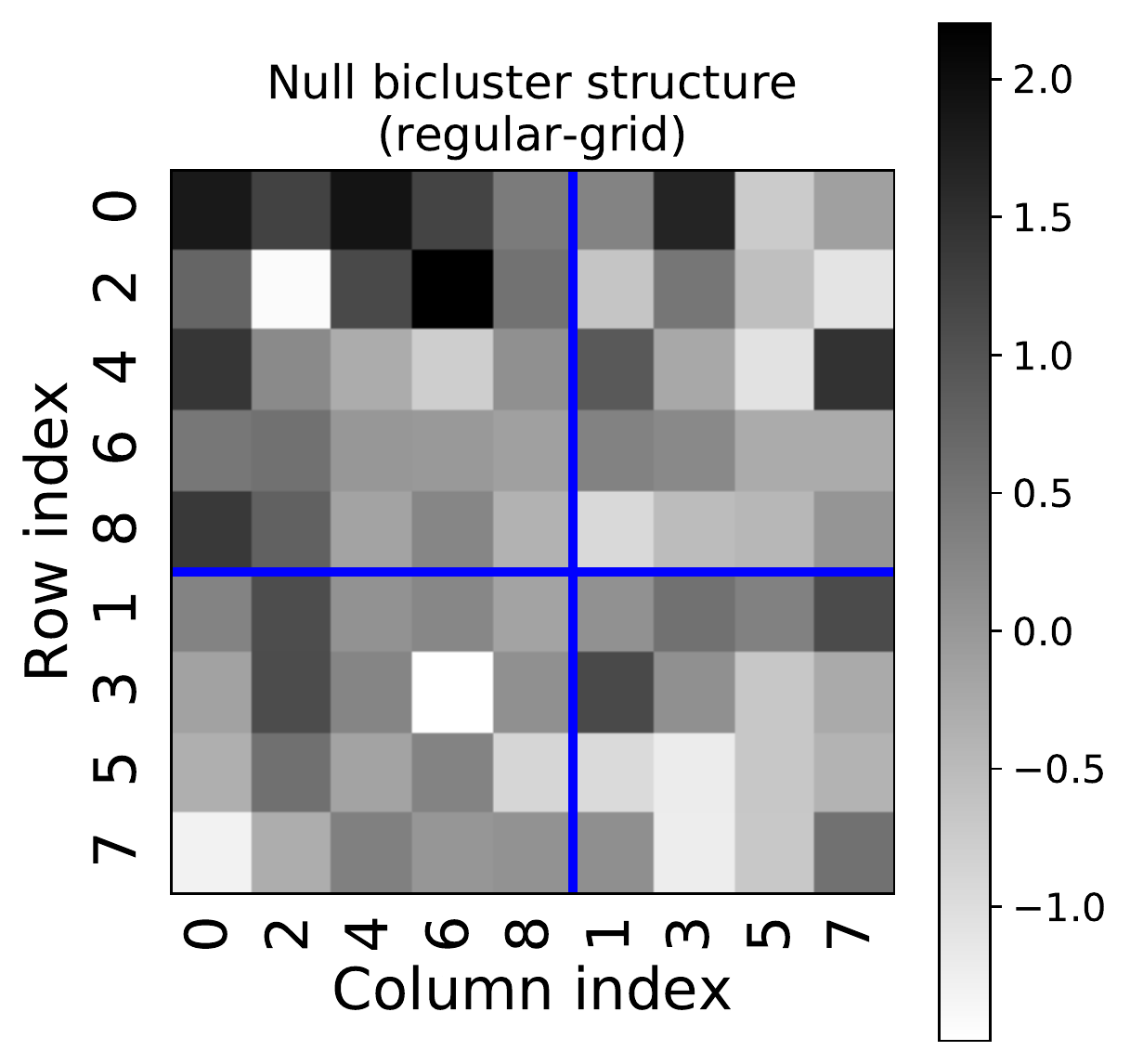}
  \includegraphics[width=0.32\hsize]{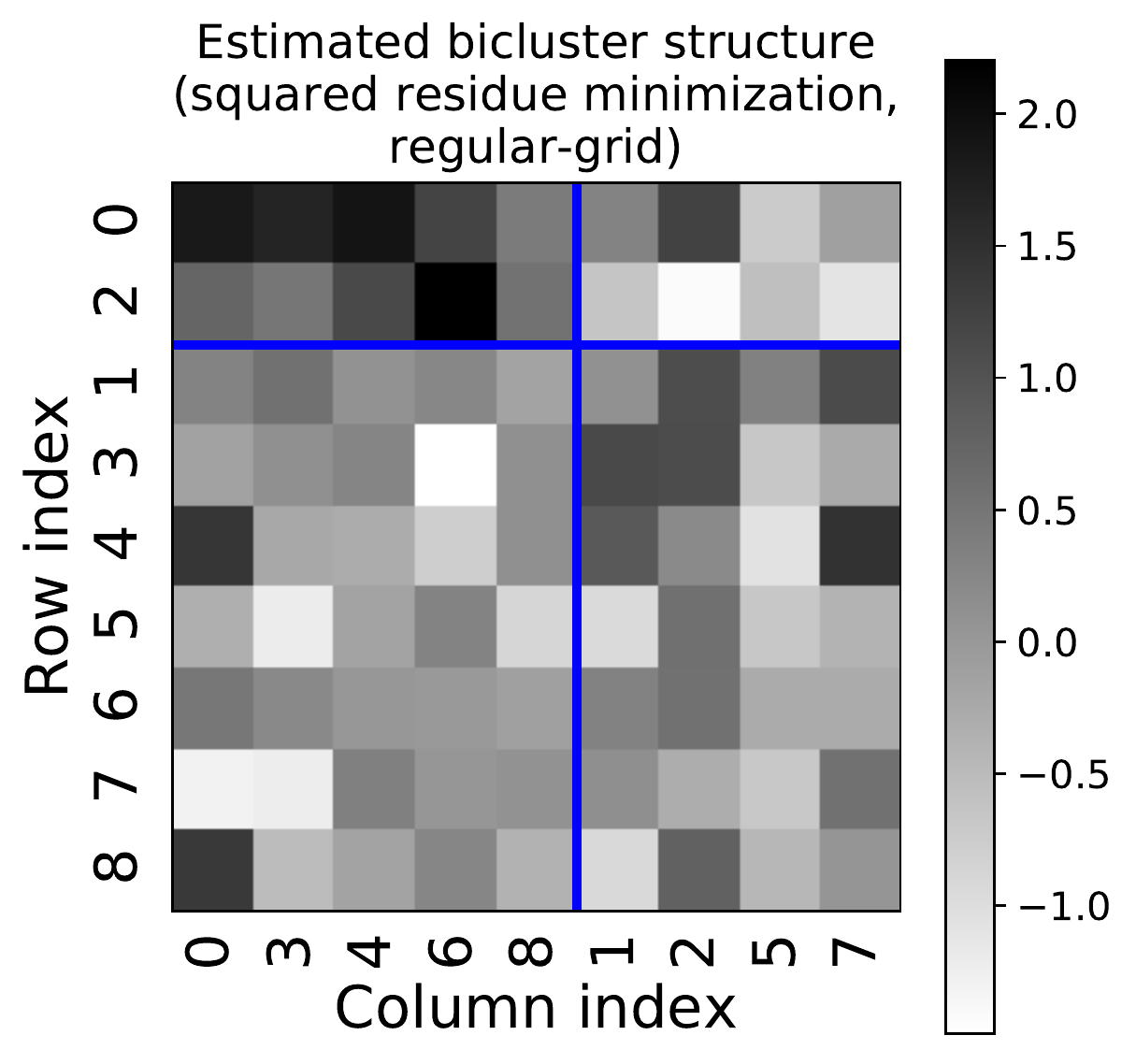}
  \includegraphics[width=0.32\hsize]{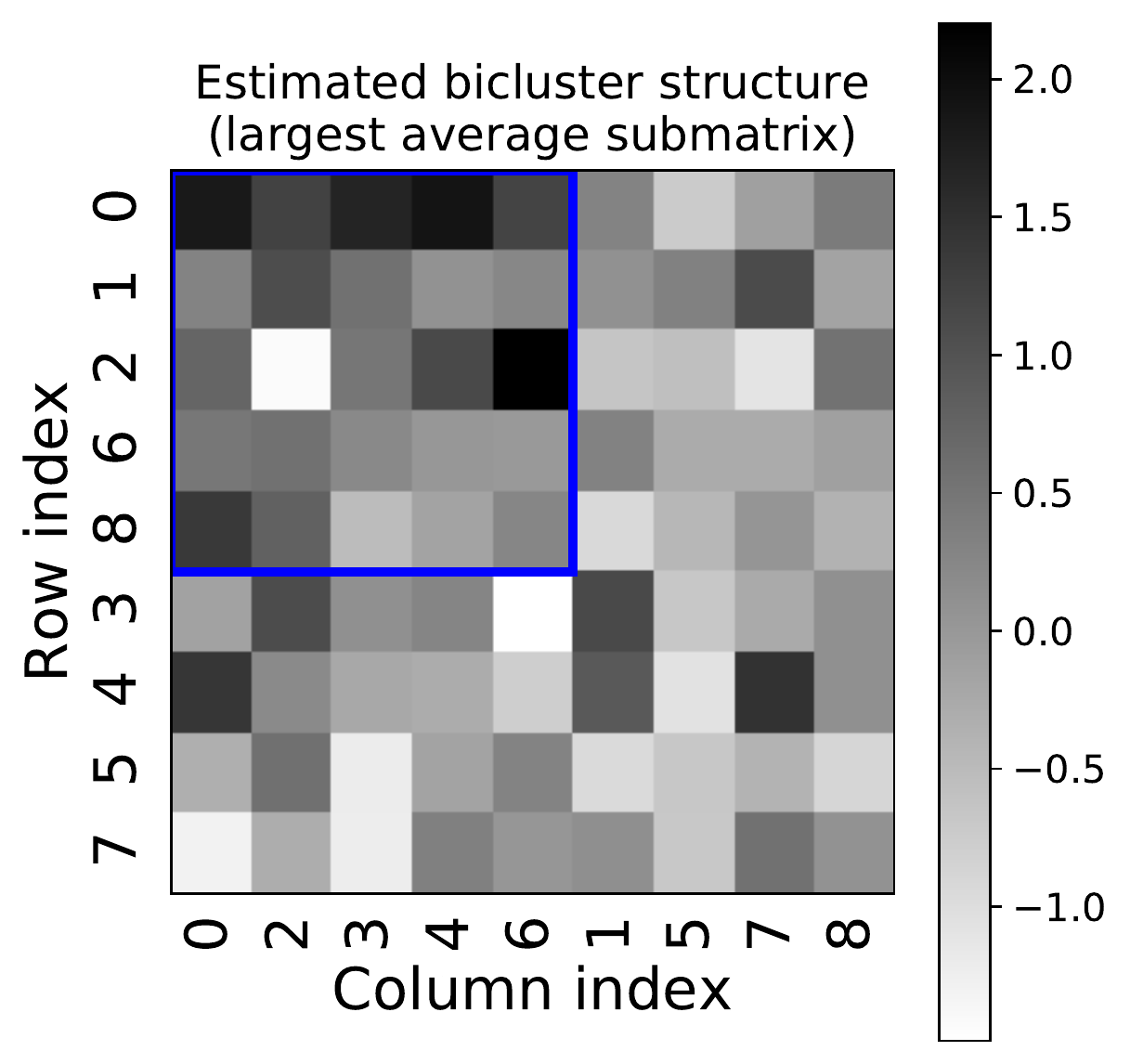}
  \caption{Examples of the null and estimated bicluster structures of the observed data matrix with the size of $(n, p) = (9, 9)$. The rows and columns of the observed matrix were sorted according to their clusters, and the blue lines indicate the cluster memberships. The biclustering algorithms of the proposed and existing methods \cite{Lee2015} do not necessarily yield identical bicluster structures with the same observed matrix.}
  \label{fig:As_existing}
\end{figure}

This study proposes a new selective inference method for LBMs. Unlike our previous study \cite{Watanabe2021}, where the validity of the test is guaranteed only in the asymptotic sense (i.e., with the infinite matrix size), we develop an \textbf{exact} test on a block structure, which is selected based on a given observed matrix with a \textbf{finite} size and the squared residue minimization algorithm. To construct such a statistical test, we considered the fact that the selection event based on the squared residue can be formulated as a set of quadratic inequalities in terms of the data vector, which is the vectorization of the observed data matrix. On this basis, we can show that the test statistic follows a truncated chi distribution, under the selection condition (a formal definition of the test statistic is given in Section \ref{sec:test}). 

Since the exact test requires solving two combinatorial optimization problems---one for selecting the block structure with the minimum squared residue, and the other for determining the truncation interval of the distribution of the test statistic---its computation will be intractable with a large size observed matrix or with a larger hypothetical number of blocks. To cope with such combinatorial explosion, we also develop an approximated version of the test based on simulated annealing (SA). 

The remaining part of this paper is as follows. In Section \ref{sec:problem}, we first define notations and describe assumptions necessary for developing the proposed statistical test. We also define the squared residue, which we use for measuring the quality of a given set of row and column cluster memberships. In Section \ref{sec:test}, we give the formal statement of the null and alternative hypotheses of the proposed test, define the test statistic, and derive its null distribution. Our main contribution lies in Theorem \ref{thm:Tchidist}; it states that, under the null hypothesis, the test statistic follows a truncated chi distribution, whose truncation interval is determined by the selection result. We also give an approximated version of the test. In Section \ref{sec:exp}, we experimentally show the effectiveness of the proposed exact and approximated tests, by checking the behavior of the $p$-values and measuring the true and false positive ratios (TPR and FPR) in both the realizable (i.e., the hypothetical cluster numbers of rows and columns $(K, H)$ are equal to the null ones $(K^{\mathrm{(N)}}, H^{\mathrm{(N)}})$) and unrealizable (i.e., at least one of $K < K^{\mathrm{(N)}}$ and $H < H^{\mathrm{(N)}}$ holds) cases. Finally, we discuss the findings and conclude the paper in Section \ref{sec:discussion} and Section \ref{sec:conclusion}, respectively. 


\section{Problem settings}
\label{sec:problem}

\subsection{Notations and assumptions on data matrix}
\label{sec:notation}

Throughout this study, we use the following definitions and notations. 

\begin{itemize}
\item Let $A = (A_{ij})_{1 \leq i \leq n, 1 \leq j \leq p} \in \mathbb{R}^{n\times p}$ be an observed data matrix with the size of $n \times p$. When constructing a statistical test, it is more convenient to use the vector representation of matrix $A$, instead of $A$ itself: 
\begin{eqnarray}
\bm{x} = \mathrm{vec} (A) \in \mathbb{R}^{np}, \ \ x_{n(j-1)+i} = A_{ij}, \ \ \mathrm{for}\ i = 1, \dots, n,\ \ j = 1, \dots, p. 
\end{eqnarray}
\item Let $g^{(1)}_i$ be the cluster index of the $i$th row, and $g^{(1)} = (g^{(1)}_i)_{1 \leq i \leq n}$. Similarly, let $g^{(2)}_j$ be the cluster index of the $j$th column, and $g^{(2)} = (g^{(2)}_j)_{1 \leq j \leq p}$. We denote a set of row and column clusters as $g = (g^{(1)}, g^{(2)}) \in \mathcal{G}$, where $\mathcal{G} = \{ (g^{(1)}, g^{(2)}) \}$ is a set of all possible cluster memberships. We also define that $\mathcal{G}_{KH}$ is a set of all possible cluster memberships with $K \times H$ or less blocks. 
\item In the null hypothesis of the proposed test, we assume that there exists a set of block memberships $g^{\mathrm{(N)}} = (g^{\mathrm{(N)}, (1)}, g^{\mathrm{(N)}, (2)})$ and that, given $g^{\mathrm{(N)}}$, each $(i, j)$th element $A_{ij}$ of an observed matrix $A$ is generated independently from a Gaussian distribution with a block-wise (\textbf{unknown}) mean $P_{ij} \equiv B_{g^{\mathrm{(N)}, (1)}_i g^{\mathrm{(N)}, (2)}_j}$ and (\textbf{known}) variance $\sigma_0^2$\footnote{We also derive the null distribution of a test statistic in case that variance $\sigma_0^2$ is unknown in Appendix \ref{sec:unknown_sigma0}.}, where $B_{kh}$ is the mean of the $(k, h)$th block: 
\begin{eqnarray}
A_{ij} \sim N(P_{ij}, \sigma_0^2), \ \ \ \mathrm{for\ all}\ i = 1, \dots, n, \ \ \ j = 1, \dots, p. 
\end{eqnarray}
In vector representation, this assumption is given by 
\begin{eqnarray}
\label{eq:x_gauss}
\bm{x} \sim N(\bm{\mu}_0, \sigma_0^2 I), 
\end{eqnarray}
where $\bm{\mu}_0$ is the \textbf{unknown} block-wise mean vector. 
\item Let $(K^{\mathrm{(N)}}, H^{\mathrm{(N)}})$ be the minimum set of row and column cluster numbers required to represent the above null set of block memberships $g^{\mathrm{(N)}}$. In the proposed test, we fix a hypothetical set of cluster numbers $(K, H)$, estimate the block structure of an observed matrix with $K \times H$ blocks, and perform a test on the estimated block memberships, which, by its nature, includes a test on cluster numbers (i.e., $(K^{\mathrm{(N)}}, H^{\mathrm{(N)}}) = (K, H)$ or at least one of $K < K^{\mathrm{(N)}}$ and $H < H^{\mathrm{(N)}}$ holds)\footnote{It must be noted, however, that the proposed test cannot be applied directly for sequential testing on cluster numbers, where the hypothetical numbers of clusters are tested in ascending order (i.e., $(K, H) = (1, 1), (1, 2), (2, 1), \dots$) until the null hypothesis is accepted. This is because the proposed test cannot distinguish the following two alternative cases: (1) $(K^{\mathrm{(N)}}, H^{\mathrm{(N)}}) = (K, H)$ holds, however, the estimated cluster memberships are incorrect, and (2) $K < K^{\mathrm{(N)}}$ or $H < H^{\mathrm{(N)}}$ holds.}.
\item We denote the set of rows in the $k$th cluster as $I_k = \{ i: g^{(1)}_i = k \}$. Similarly, we denote the set of columns in the $h$th cluster as $J_h = \{ j: g^{(2)}_j = h \}$. 
\item We denote the cluster membership vector of rows as follows: 
\begin{eqnarray}
\overline{\bm{e}}^{(k)} = (\overline{e}^{(k)}_i)_{1 \leq i \leq n} \in \mathbb{R}^n,\ \overline{e}^{(k)}_i = \begin{cases}
\frac{1}{\sqrt{|I_k|}} & \mathrm{if}\ g^{(1)}_i = k, \\
0 & \mathrm{otherwise}.
\end{cases}
\end{eqnarray}
Similarly, we denote the cluster membership vector of columns as follows: 
\begin{eqnarray}
\underline{\bm{e}}^{(h)} = (\underline{e}^{(h)}_j)_{1 \leq j \leq p} \in \mathbb{R}^p,\ \underline{e}^{(h)}_j = \begin{cases}
\frac{1}{\sqrt{|J_h|}} & \mathrm{if}\ g^{(2)}_j = h, \\
0 & \mathrm{otherwise}.
\end{cases}
\end{eqnarray}
Based on these vectors $\overline{\bm{e}}^{(k)}$ and $\underline{\bm{e}}^{(h)}$, we define a vector $\bm{e}^{(k, h)} \equiv \underline{\bm{e}}^{(h)} \otimes \overline{\bm{e}}^{(k)} \in \mathbb{R}^{np}$ and matrix $E^{(g)} \equiv I - \sum_k \sum_h \bm{e}^{(k, h)} (\bm{e}^{(k, h)})^{\top}$. It must be noted that $E^{(g)}$ is a projection matrix, that is, $(E^{(g)})^{\top} = E^{(g)}$ and $(E^{(g)})^2 = E^{(g)}$ hold. 
\end{itemize}


\subsection{Clustering algorithm based on squared residue minimization}
\label{sec:srm}

To estimate the block structure of a given observed matrix $A$, we use a clustering algorithm $\mathcal{A}: \bm{x} \mapsto \hat{\mathcal{M}} \in \mathcal{G}_{KH}$ that outputs a block structure minimizing the \textit{squared residue}, that is, the sample variance $\sigma^2$ within the same block. A squared residue has been proposed for measuring the quality of a biclustering result \cite{Cho2004, Hartigan1972}, and its definition is given by
\begin{align}
\label{eq:sr_def}
\sigma^2 &= \frac{1}{np} \sum_k \sum_h \sum_{i \in I_k} \sum_{j \in J_h} \left( A_{ij} - \frac{1}{|I_k||J_h|} \sum_{i' \in I_k} \sum_{j' \in J_h} A_{i'j'} \right)^2 \nonumber \\
&= \frac{1}{np} \left[ \sum_{i, j} A_{ij}^2 - \sum_k \sum_h \frac{1}{|I_k||J_h|} \left( \sum_{i \in I_k} \sum_{j \in J_h} A_{ij} \right)^2 \right] \nonumber \\
&= \frac{1}{np} \left[ \sum_{i, j} A_{ij}^2 - \sum_k \sum_h \left( \frac{1}{\sqrt{|I_k||J_h|}} \sum_{i \in I_k} \sum_{j \in J_h} A_{ij} \right)^2 \right] \nonumber \\
&= \frac{1}{np} \left\{ \bm{x}^{\top} \bm{x} - \sum_k \sum_h \left[ (\underline{\bm{e}}^{(h)} \otimes \overline{\bm{e}}^{(k)})^{\top} \bm{x} \right]^2 \right\} \nonumber \\
&= \frac{1}{np} \left\{ \bm{x}^{\top} \bm{x} - \sum_k \sum_h \left[ (\bm{e}^{(k, h)})^{\top} \bm{x} \right]^2 \right\} \nonumber \\
&= \frac{1}{np} \bm{x}^{\top} \left[ I - \sum_k \sum_h \bm{e}^{(k, h)} (\bm{e}^{(k, h)})^{\top} \right] \bm{x} 
= \frac{1}{np} \bm{x}^{\top} E^{(g)} \bm{x}. 
\end{align}

Therefore, the squared residue minimization clustering algorithm $\mathcal{A}$ outputs the set of cluster memberships $\hat{g} = (\hat{g}^{(1)}, \hat{g}^{(2)})$, which satisfies
\begin{eqnarray}
\label{eq:g_hat}
\hat{g} \in \hat{\mathcal{M}} (\bm{x}) = \argmin_{g \in \mathcal{G}_{KH}} \sigma^2 = \argmin_{g \in \mathcal{G}_{KH}} \bm{x}^{\top} E^{(g)} \bm{x}. 
\end{eqnarray}
It must be noted that the above solution $\hat{g}$ is the maximum likelihood estimator of the cluster memberships with a mean estimator $\hat{B} (g)$ and a known standard deviation $\sigma_0$. The log likelihood of a set of cluster memberships $g = (g^{(1)}, g^{(2)})$ and the mean parameter $B$ is given by
\begin{align}
\label{eq:log_lh}
\mathcal{L} (g, B; \bm{x}) &= -np \log \left( \sqrt{2 \pi \sigma_0^2} \right) - \frac{1}{2 \sigma_0^2} \sum_{i=1}^n \sum_{j=1}^p \left( x_{n (j - 1) + i} - B_{g^{(1)}_i g^{(2)}_j} \right)^2 \nonumber \\
&= -np \log \left( \sqrt{2 \pi \sigma_0^2} \right) - \frac{1}{2 \sigma_0^2} \sum_{k=1}^K \sum_{h=1}^H \sum_{i \in I_k} \sum_{j \in J_h} \left( x_{n (j - 1) + i} - B_{kh} \right)^2. 
\end{align}
Let $\hat{B} (g) = (\hat{B}_{kh} (g))_{1 \leq k \leq K, 1 \leq h \leq H}$ be the maximum likelihood estimator of mean $B$ for a given fixed cluster memberships $g$. From (\ref{eq:log_lh}), we can easily derive that $\hat{B}_{kh} (g) = (1/|I_k||J_h|) \sum_{i \in I_k} \sum_{j \in J_h} x_{n (j - 1) + i}$. By combining this fact with (\ref{eq:sr_def}) and (\ref{eq:log_lh}), we see that the squared residue minimization is equivalent to the likelihood maximization with a mean estimator $\hat{B} (g)$.

Equation (\ref{eq:g_hat}) is equivalent to a set of quadratic inequalities
\begin{eqnarray}
\label{eq:condtion_ghat}
\bm{x}^{\top} E^{(\hat{g})} \bm{x} \leq \bm{x}^{\top} E^{(g)} \bm{x} 
\iff \bm{x}^{\top} \left( E^{(g)} - E^{(\hat{g})} \right) \bm{x} \geq 0, 
\end{eqnarray}
for all $g \in \mathcal{G}_{KH}$. In other words, the selection rule can be represented as a set of quadratic inequalities in terms of the data vector $\bm{x}$. 
It must be noted that, under the null hypothesis, the solution $\hat{g}$ of (\ref{eq:g_hat}) is unique almost surely. 
To prove this fact, we first define a quadratic function $F^{(g, g')}: \mathbb{R}^{np} \mapsto \mathbb{R}$ for a fixed $(g, g')$ as $F^{(g, g')} (\bm{x}) \equiv \bm{x}^{\top} \left( E^{(g)} - E^{(g')} \right) \bm{x}$. We also define that $g = g'$, if the sets of cluster memberships $g$ and $g'$ are equivalent up to the permutation of cluster indices, and that $g \neq g'$ otherwise. If $g \neq g'$, $E^{(g)} - E^{(g')}$ is not a zero matrix (the proof of this is in Appendix \ref{sec:ap_nonzero_E}), and thus the Lebesgue measure of a set of points $\bm{x}$ that satisfy $F^{(g, g')} (\bm{x}) = 0$ is zero. By combining this fact and the assumption (\ref{eq:x_gauss}) of the null hypothesis, $F^{(g, g')} (\bm{x}) \neq 0$ holds for a fixed combination of $(g, g')$ almost surely. Since $\{(g, g'): g, g' \in \mathcal{G}_{KH}, g \neq g' \}$ is a finite set, we finally have 
\begin{align}
&\mathrm{Pr} \left( \exists g, g' \in \mathcal{G}_{KH},\ \mathrm{s.t.}\ g \neq g',\ g, g' \in \argmin_{g \in \mathcal{G}_{KH}} \sigma^2 \right) \nonumber \\
&= \mathrm{Pr} \left( \exists g, g' \in \mathcal{G}_{KH},\ \mathrm{s.t.}\ g \neq g',\ F^{(g, g')} (\bm{x}) = 0,\ g, g' \in \argmin_{g \in \mathcal{G}_{KH}} \sigma^2 \right) \nonumber \\
&\leq \mathrm{Pr} \left( \exists g, g' \in \mathcal{G}_{KH},\ \mathrm{s.t.}\ g \neq g',\ F^{(g, g')} (\bm{x}) = 0 \right) \nonumber \\
&\leq \sum_{g, g' \in \mathcal{G}_{KH}, g \neq g'} \mathrm{Pr} \left( F^{(g, g')} (\bm{x}) = 0 \right) = 0. 
\end{align}
In case of a tie (i.e., multiple solutions of $\hat{g}$ exist that satisfy (\ref{eq:g_hat})) that occurs with probability zero, we can choose any one of them as $\hat{g}$ independently with $\bm{x}$. 


\section{Statistical test on the solution of squared residue minimization}
\label{sec:test}

\subsection{Null distribution of test statistic $T$}

As described in Section \ref{sec:problem}, in the null hypothesis of the proposed test, we assume that there exists a set of block memberships $g^{\mathrm{(N)}}$ and that given $g^{\mathrm{(N)}}$, each element of an observed data vector $\bm{x}$ is generated independently from a Gaussian distribution, whose mean is constant within the same block. Our main purpose is to test whether an estimated block structure $\hat{g} \equiv (\hat{g}^{(1)}, \hat{g}^{(2)})$, which is selected based on the squared residue criterion in Section \ref{sec:srm}, is equal to the null one $g^{\mathrm{(N)}}$. 
Formally, the null and alternative hypotheses of the proposed test are given by 
\begin{eqnarray}
\label{eq:test}
\mathrm{(N):}\ E^{(\hat{g})} \bm{\mu}_0 = \bm{0}, \ 
\mathrm{(A):}\ E^{(\hat{g})} \bm{\mu}_0 \neq \bm{0}. 
\end{eqnarray}
It must be noted that the equation $E^{(\hat{g})} \bm{\mu}_0 = \bm{0}$ is equivalent to the statement that the elements of the vector $\bm{\mu}_0$ are constant in the same block in the set of cluster memberships $\hat{g}$. In other words, the above statement of the null hypothesis is that a given observed matrix is generated based on the latent block structure $\hat{g}$, which is selected as a solution that minimizes the squared residue. 

To perform the test of (\ref{eq:test}), we check the squared residue $\sigma^2$ of the given observed matrix $A$ under the condition that the estimated block structure $\hat{g}$ is selected. 
Under the null hypothesis, we have $E^{(\hat{g})} \bm{\mu}_0 = 0$. Here, matrix $E^{(\hat{g})}$ solely depends on the estimated set of cluster memberships $\hat{g}$. In other words, under the condition that $\hat{\mathcal{M}} (\bm{x}) = \hat{g}$ holds, matrix $E^{(\hat{g})}$ is fixed. 
Therefore, based on the result in \cite{Loftus2015}, the following theorem holds: 
\begin{theorem}
\label{thm:Tchidist}
Under the null hypothesis, we have
\begin{eqnarray}
\label{eq:defT}
T \equiv \frac{\| \bm{r} \|_2}{\sigma_0}, \ \ \ 
T | \{ \hat{g}, \bm{z}, \bm{u} \} \sim \chi_{(np - KH) | \hat{M}^{(\hat{g})}}, 
\end{eqnarray}
where $\| \cdot \|_2$ and $\chi_{c | M}$, respectively, denote the Euclid norm and the truncated chi distribution with $c$ degrees of freedom and with truncation interval of $M$ and
\begin{eqnarray}
\label{eq:defruz}
&&\bm{r} \equiv E^{(\hat{g})} \bm{x}, \ \ \ 
\bm{u} \equiv \frac{1}{\| \bm{r} \|_2} \bm{r}, \ \ \ 
\bm{z} \equiv \bm{x} - \bm{r}, \nonumber \\
&&\hat{M}^{(\hat{g})} \equiv \{ t \geq 0: \hat{g} \in \hat{\mathcal{M}} (t \sigma_0 \bm{u} + \bm{z}) \}. 
\end{eqnarray}
\end{theorem}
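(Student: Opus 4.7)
The plan is to apply the truncated chi-square framework of \cite{Loftus2015} to the projection $E^{(\hat{g})}$ after conditioning on the selected $\hat{g}$. First I would observe that, since $E^{(\hat{g})}$ depends only on the selection output $\hat{g}$, once we condition on the event $\{ \hat{\mathcal{M}} (\bm{x}) \ni \hat{g} \}$, this matrix becomes non-random. Under the null hypothesis $E^{(\hat{g})} \bm{\mu}_0 = \bm{0}$ combined with (\ref{eq:x_gauss}), the vectors $\bm{r} = E^{(\hat{g})} \bm{x} \sim N(\bm{0}, \sigma_0^2 E^{(\hat{g})})$ and $\bm{z} = (I - E^{(\hat{g})}) \bm{x}$ are independent because $E^{(\hat{g})}$ and $I - E^{(\hat{g})}$ are orthogonal projections of a spherical Gaussian.

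Second, I would use the fact that the orthogonal projection onto the complement of the span of the $KH$ orthonormal vectors $\{ \bm{e}^{(k, h)} \}$ has rank exactly $np - KH$. Expressing $\bm{r}$ in an orthonormal basis of $\mathrm{range}(E^{(\hat{g})})$ yields $np - KH$ i.i.d.\ $N(0, \sigma_0^2)$ coordinates, so unconditionally $T = \| \bm{r} \|_2 / \sigma_0 \sim \chi_{np - KH}$, the unit vector $\bm{u} = \bm{r} / \| \bm{r} \|_2$ is uniform on the sphere in $\mathrm{range}(E^{(\hat{g})})$, and $T$, $\bm{u}$, $\bm{z}$ are mutually independent by spherical symmetry and the projection argument above.

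Third, the key observation is that, given $\bm{u}$ and $\bm{z}$, the data vector can be reconstructed as $\bm{x} = T \sigma_0 \bm{u} + \bm{z}$, which is a one-dimensional affine function of the scalar $T$. Hence the selection event $\{ \hat{g} \in \hat{\mathcal{M}} (\bm{x}) \}$ reduces to a condition on $T$ alone, namely $T \in \hat{M}^{(\hat{g})} = \{ t \geq 0 : \hat{g} \in \hat{\mathcal{M}} (t \sigma_0 \bm{u} + \bm{z}) \}$. Since additionally conditioning on the independent quantities $\bm{u}$ and $\bm{z}$ does not change the marginal of $T$, conditioning the resulting chi distribution on $T \in \hat{M}^{(\hat{g})}$ delivers the claimed truncated chi law $\chi_{(np - KH) | \hat{M}^{(\hat{g})}}$.

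The main obstacle I anticipate is making the conditioning on the combinatorial selection rule rigorous. The map $\hat{\mathcal{M}}$ is defined by the discrete minimization (\ref{eq:g_hat}), but the argument in Section \ref{sec:srm} shows that the set of $\bm{x}$ with multiple minimizers has Lebesgue measure zero, so (\ref{eq:condtion_ghat}) characterizes the selection event almost surely. Substituting $\bm{x} = t \sigma_0 \bm{u} + \bm{z}$ into these quadratic inequalities for every $g \in \mathcal{G}_{KH}$ produces the one-dimensional region $\hat{M}^{(\hat{g})}$; this reduction is exactly what lets the general framework of \cite{Loftus2015} apply and closes the argument.
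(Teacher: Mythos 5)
Your proposal is correct and follows the same overall architecture as the paper's proof: establish that $T\mid\bm{u},\bm{z}$ is $\chi_{np-KH}$ for a projection annihilating $\bm{\mu}_0$, observe that given $(\bm{u},\bm{z})$ the selection event is a condition on $T$ alone so that conditioning truncates the law to $\hat{M}^{(\hat{g})}$, and finally substitute $E^{(\hat{g})}$, which is fixed once $\hat{g}$ is given. The one substantive divergence is in the independence lemma: you derive the mutual independence of $T$, $\bm{u}$, and $\bm{z}$ directly from the fact that $E^{(\hat{g})}(\bm{x}-\bm{\mu}_0)$ and $(I-E^{(\hat{g})})(\bm{x}-\bm{\mu}_0)$ are uncorrelated (hence independent) Gaussian projections together with the norm--direction independence of a spherical Gaussian, whereas the paper (Appendix \ref{sec:ap_indTuz}) proves it via completeness and sufficiency of $\bm{z}_E$, ancillarity of $(T_E,\bm{u}_E)$, and Basu's theorem. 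Your route is more elementary and self-contained; the paper's machinery pays off when the argument is extended to the unknown-variance case (Appendix \ref{sec:unknown_sigma0}), where the same sufficiency/ancillarity template yields the truncated $F$ distribution. One caution on phrasing: you should not literally ``condition on $\hat{g}$ first'' and then compute unconditional Gaussian laws, since conditioning on the selection event distorts the distribution of $\bm{x}$; the paper avoids this by carrying out the entire derivation for a \emph{fixed} projection $E$ and only at the end identifying $E$ with $E^{(\hat{g})}$ on the selection event. Your third paragraph effectively recovers the correct logical order, so this is a presentational issue rather than a gap.
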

\begin{proof}
Let $E$ be a fixed $np \times np$ projection matrix satisfying the following conditions: 
\begin{itemize}
\item $\mathrm{rank} (E) = np - KH$. 
\item $E \bm{\mu}_0 = \bm{0}$. 
\end{itemize}

A singular value decomposition of a matrix $E$ satisfying the above two conditions is given by
\begin{eqnarray}
\label{eq:EandD}
E = V^{\top} D V, \ \ \ \ \ 
D \equiv \begin{bmatrix}
I_{(np-KH)} & O_{(np-KH), KH} \\
O_{KH, (np-KH)} & O_{KH, KH} \\
\end{bmatrix}, 
\end{eqnarray}
where we denote the $a \times a$ identity matrix and $a \times b$ zero matrix, respectively, as $I_a$ and $O_{a, b}$. 

Based on such a matrix $E$, we use the following notations: 
\begin{eqnarray}
&&\bm{r}_E \equiv E \bm{x}, \ \ \ 
T_E = \frac{\| \bm{r}_E \|_2}{\sigma_0}, \ \ \ 
\bm{u}_E \equiv \frac{1}{\| \bm{r}_E \|_2} \bm{r}_E, \ \ \ 
\bm{z}_E \equiv \bm{x} - \bm{r}_E. 
\end{eqnarray}

In the above definitions, we can transform $T_E$ by the following equations: 
\begin{align}
\label{eq:TE}
T_{E} &= \frac{\sqrt{\bm{x}^{\top} E \bm{x}}}{\sigma_0} 
= \frac{\sqrt{(\bm{x} - \bm{\mu}_0)^{\top} E (\bm{x} - \bm{\mu}_0)}}{\sigma_0}\ \ \ (\because E \bm{\mu}_0 = \bm{0}) \nonumber \\
&= \frac{\sqrt{(\bm{x} - \bm{\mu}_0)^{\top} V^{\top} D V (\bm{x} - \bm{\mu}_0)}}{\sigma_0} 
= \frac{\sqrt{(\bm{x} - \bm{\mu}_0)^{\top} V^{\top} \tilde{D}^{\top} \tilde{D} V (\bm{x} - \bm{\mu}_0)}}{\sigma_0}, \nonumber \\
\tilde{D} &\equiv \begin{bmatrix}
I_{(np-KH)} & O_{(np-KH), KH} \\
\end{bmatrix} \in \mathbb{R}^{(np-KH) \times np}. 
\end{align}
Here, we used the fact that $\tilde{D}^{\top} \tilde{D} = D$. 

By using the assumption that $\bm{x} \sim N(\bm{\mu}_0, \sigma_0^2 I)$ holds and the independence of matrix $E$ of $\bm{x}$, we have 
\begin{align}
\label{eq:DDVnorm}
&\frac{1}{\sigma_0} \tilde{D} V (\bm{x} - \bm{\mu}_0) \sim N(\bm{0}, \tilde{D} V (\tilde{D} V)^{\top}). \nonumber \\
&\iff \frac{1}{\sigma_0} \tilde{D} V (\bm{x} - \bm{\mu}_0) \sim N(\bm{0}, I_{np-KH}).
\end{align}
Here, we considered the fact that $\tilde{D} \tilde{D}^{\top} = I$. Therefore, by combining (\ref{eq:TE}) and (\ref{eq:DDVnorm}), we have 
\begin{eqnarray}
\label{eq:T_E_chi}
T_{E} \sim \chi_{(np-KH)}, 
\end{eqnarray}
where $\chi_c$ denotes the chi distribution with $c$ degrees of freedom. 

In regard to $\bm{u}_E$ and $\bm{z}_E$, we have
\begin{eqnarray}
\label{eq:uz_zero}
\bm{u}_E \cdot \bm{z}_E = \frac{1}{\| \bm{r}_E \|_2} \bm{r}_E^{\top} (\bm{x} - \bm{r}_E) 
= \frac{1}{\| \bm{r}_E \|_2} (\bm{x}^{\top} E^{\top} \bm{x} - \bm{x}^{\top} E^{\top} E \bm{x}) = 0. 
\end{eqnarray}
In the last equation, we considered the fact that $E^{\top} E = E$. 

Here, since $T_E$ and $(\bm{u}_E, \bm{z}_E)$ are mutually independent (the proof of this is in Appendix \ref{sec:ap_indTuz}), we have
\begin{eqnarray}
\label{eq:tildeT_uz}
T_{E} | \bm{u}_{E}, \bm{z}_{E} \sim \chi_{(np-KH)}. 
\end{eqnarray}

Next, we consider adding a condition of selection event of $\hat{g}$ to the distribution of $T_{E} | \bm{u}_{E}, \bm{z}_{E}$ in (\ref{eq:tildeT_uz}). Given $\bm{u}_{E}$ and $\bm{z}_{E}$, the result of selection depends solely on the value of $T_{E}$. Therefore, adding the selection condition $\hat{\mathcal{M}} (\bm{u}_{E} T_{E} \sigma_0 + \bm{z}_{E}) = \hat{g}$ to (\ref{eq:tildeT_uz}) corresponds to truncation of $T_{E}$ to the region where $\hat{\mathcal{M}} (\bm{u}_{E} T_{E} \sigma_0 + \bm{z}_{E}) = \hat{g}$ holds: 
\begin{eqnarray}
\label{eq:tildeT_uzg}
T_{E} | \bm{u}_{E}, \bm{z}_{E}, \hat{g} \sim \chi_{(np-KH) | \hat{M}^{(\hat{g})} (E)}. 
\end{eqnarray}

Third, we consider replacing $E$ in (\ref{eq:tildeT_uzg}) with $E^{(\hat{g})}$, which is the output by clustering algorithm $\mathcal{A}$ based on the data vector $\bm{x}$. It must be noted that the matrix $E^{(\hat{g})}$ is also a projection matrix with the rank of $(np - KH)$ (the proof of this is in Appendix \ref{sec:ap_rankE}), from its definition, and $E^{(\hat{g})} \bm{\mu}_0 = \bm{0}$ holds. 

Since matrix $E^{(\hat{g})}$ depends on the data vector $\bm{x}$ only through the choice of $\hat{g}$ (i.e., $E^{(\hat{g})}$ is fixed, given $\hat{g}$), under the condition that the selection result $\hat{g}$ is given, (\ref{eq:tildeT_uzg}) still holds with matrix $E^{(\hat{g})}$, which concludes the proof. 
\end{proof}

\begin{remark}[Generalization of Theorem \ref{thm:Tchidist}]
Theorem \ref{thm:Tchidist} holds if the selection event of the estimated block structure $\hat{g}$ can be formulated as a set of quadratic inequalities in terms of the data vector $\bm{x}$, by modifying the definition of the function $\mathcal{M}$. In other words, for a selected block structure $\hat{g}$, there exists some $\mathcal{I}_{\hat{g}} \in \mathbb{N}$ and $\{ Q^{(\hat{g}, i)}, \bm{\alpha}^{(\hat{g}, i)}, \beta^{(\hat{g}, i)} \}$, $i = 1, \dots, \mathcal{I}_{\hat{g}}$, and the selection event of $\hat{g}$ is represented by 
\begin{eqnarray}
\label{eq:condtion_ghat_general}
\hat{g} \in \mathcal{M} (\bm{x}) \iff \cap_{i \in \{ 1, \dots, \mathcal{I}_{\hat{g}} \}} \left\{ \bm{x}^{\top} Q^{(\hat{g}, i)} \bm{x} + (\bm{\alpha}^{(\hat{g}, i)})^{\top} \bm{x} + \beta^{(\hat{g}, i)} \geq 0 \right\}. 
\end{eqnarray}
Let $g(i)$ be the $i$th pattern of all the block structures with $K \times H$ blocks or less, where $i = 1, \dots, |\mathcal{G}_{KH}|$. Then, if we set $\mathcal{I}_{\hat{g}} \equiv |\mathcal{G}_{KH}|$, $Q^{(\hat{g}, i)} = E^{(g(i))} - E^{(\hat{g})}$, $\bm{\alpha}^{(\hat{g}, i)} = \bm{0}$, and $\beta^{(\hat{g}, i)} = 0$, the selection event in (\ref{eq:condtion_ghat_general}) will lead to the use of a squared residue solution. 
\end{remark}

It must be noted that if there exists multiple sets of cluster memberships that minimize the squared residue $\sigma^2$, which occurs with probability zero, from the discussion in Section \ref{sec:srm}, then Theorem \ref{thm:Tchidist} will hold for any one of them. Moreover, we define that a set of block memberships $g'$ is a \textit{refinement} of $g$ iff any block in $g'$ is a submatrix of some block in $g$. If $\hat{g}'$ is a refinement of $\hat{g}$, then Theorem \ref{thm:Tchidist} will also hold when $\hat{g}$ is replaced by $\hat{g}'$. In other words, we cannot detect that a given block structure represents a ``finer division than necessary'' with the proposed test; solving this problem is beyond the scope of this paper. 


\subsection{Statistical test based on truncated chi distribution}
\label{sec:test_tchi}

To perform a statistical test based on Theorem \ref{thm:Tchidist}, we have to specify the truncation interval of $\hat{M}^{(\hat{g})} \equiv \{ t \geq 0: \hat{\mathcal{M}} (t \sigma_0 \bm{u} + \bm{z}) = \hat{g} \}$. As shown in (\ref{eq:condtion_ghat}), this is equivalent to an interval satisfying the following condition for all $g$: 
\begin{eqnarray}
\label{eq:slct_cond}
(t \sigma_0 \bm{u} + \bm{z})^{\top} \left( E^{(g)} - E^{(\hat{g})} \right) (t \sigma_0 \bm{u} + \bm{z}) \geq 0. 
\iff f^{(g, \hat{g})} (t) \equiv a^{(g, \hat{g})} t^2 + b^{(g, \hat{g})} t + c^{(g, \hat{g})} \geq 0, 
\end{eqnarray}
where
\begin{align}
a^{(g, \hat{g})} &\equiv \sigma_0^2 \bm{u}^{\top} \left( E^{(g)} - E^{(\hat{g})} \right) \bm{u}, \nonumber \\
b^{(g, \hat{g})} &\equiv \sigma_0 \left[ \bm{u}^{\top} \left( E^{(g)} - E^{(\hat{g})} \right) \bm{z} + \bm{z}^{\top} \left( E^{(g)} - E^{(\hat{g})} \right) \bm{u} \right], \nonumber \\
c^{(g, \hat{g})} &\equiv \bm{z}^{\top} \left( E^{(g)} - E^{(\hat{g})} \right) \bm{z}. 
\end{align}

From the definition of $\bm{u}$ and $\bm{z}$ in (\ref{eq:defruz}), we have $E^{(\hat{g})} \bm{u} = \bm{u}$ and $E^{(\hat{g})} \bm{z} = \bm{0}$, which simplifies the above coefficients $a^{(g, \hat{g})}$, $b^{(g, \hat{g})}$, and $c^{(g, \hat{g})}$ as follows: 
\begin{align}
a^{(g, \hat{g})} &= - \sigma_0^2 \bm{u}^{\top} \left( I - E^{(g)} \right) \bm{u} = - \sigma_0^2 \left\| \left( I - E^{(g)} \right) \bm{u} \right\|_2^2 \leq 0, \nonumber \\
b^{(g, \hat{g})} &= 2 \sigma_0 \bm{u}^{\top} E^{(g)} \bm{z}, \nonumber \\
c^{(g, \hat{g})} &= \bm{z}^{\top} E^{(g)} \bm{z} = \| E^{(g)} \bm{z} \|_2^2 \geq 0. 
\end{align}
Here, in the transformation of $b^{(g, \hat{g})}$, we used the fact that matrices $E^{(g)}$ and $E^{(\hat{g})}$ are symmetric. 

We consider the condition under which (\ref{eq:slct_cond}) holds in the two cases, $a^{(g, \hat{g})} = 0$ and $a^{(g, \hat{g})} < 0$. 
\begin{itemize}
\item If $a^{(g, \hat{g})} = 0$, we have $E^{(g)} \bm{u} = \bm{u}$, which results in that $b^{(g, \hat{g})} = 2 \sigma_0 \bm{u}^{\top} \bm{z} = 0$ (since $\bm{u}^{\top} \bm{z} = 0$ holds from (\ref{eq:uz_zero})).  Therefore, in this case, the selection condition (\ref{eq:slct_cond}) always holds. 
\item If $a^{(g, \hat{g})} < 0$, $\max_t f^{(g, \hat{g})} (t) \geq f^{(g, \hat{g})} (0) = c^{(g, \hat{g})} \geq 0$. Therefore, for $t \geq 0$, the interval that satisfies $f^{(g, \hat{g})} (t) \geq 0$ is $\left[ 0, \frac{- b^{(g, \hat{g})} - \sqrt{(b^{(g, \hat{g})})^2 - 4 a^{(g, \hat{g})} c^{(g, \hat{g})}}}{2 a^{(g, \hat{g})}} \right]$. 
\end{itemize}

Overall, the interval of $t$ where (\ref{eq:slct_cond}) holds is given by
\begin{eqnarray}
\label{eq:t_interval}
\hat{M}^{(\hat{g})} = \left[0, \beta^{(\hat{g})} \right], \ \ \ 
\beta^{(\hat{g})} \equiv \min_{g: a^{(g, \hat{g})} \neq 0} \left( \frac{- b^{(g, \hat{g})} - \sqrt{(b^{(g, \hat{g})})^2 - 4 a^{(g, \hat{g})} c^{(g, \hat{g})}}}{2 a^{(g, \hat{g})}} \right). 
\end{eqnarray}
It must be noted that $\cap_{g \in \mathcal{G}_{KH}, g \neq \hat{g}} \left( a^{(g, \hat{g})} < 0 \right)$ holds almost surely, based on a similar discussion as that in Section \ref{sec:srm}. Formally, for a fixed $g, g' \in \mathcal{G}_{KH}$, $\bm{y} \equiv E^{(g')} \bm{x}$ follows a Gaussian distribution. If $g \neq g'$, $E^{(g)} - E^{(g')}$ is not a zero matrix, and thus the Lebesgue measure of a set of points $\bm{y}$ satisfying $\bm{y}^{\top} \left( E^{(g)} - E^{(g')} \right) \bm{y} = 0$ is zero. Similarly, $\| \bm{y} \|_2^2 > 0$ holds with probability one. By combining these facts, $a^{(g, g')} \equiv \frac{1}{\| \bm{y} \|_2^2} \bm{y}^{\top} \left( E^{(g)} - E^{(g')} \right) \bm{y} \neq 0$ holds for a fixed combination of $(g, g')$ satisfying $g \neq g'$ almost surely. Therefore, we have 
\begin{align}
&\mathrm{Pr} \left( \exists g \in \mathcal{G}_{KH},\ \mathrm{s.t.}\ g \neq \hat{g},\ a^{(g, \hat{g})} = 0 \right) \nonumber \\
&\leq \mathrm{Pr} \left( \exists g, g' \in \mathcal{G}_{KH},\ \mathrm{s.t.}\ g \neq g',\ a^{(g, g')} = 0 \right) \nonumber \\
&\leq \sum_{g, g' \in \mathcal{G}_{KH}, g \neq g'} \mathrm{Pr} \left( a^{(g, g')} = 0 \right) = 0. 
\end{align}
To derive the last equation, we used the fact that $\{(g, g'): g, g' \in \mathcal{G}_{KH}, g \neq g' \}$ is a finite set. 

We denote a set of cluster memberships attaining the boundary of this interval as $\tilde{g}$, that is, 
\begin{eqnarray}
\label{eq:tilde_g}
\tilde{g} \equiv \argmin_{g: a^{(g, \hat{g})} \neq 0} \left( \frac{- b^{(g, \hat{g})} - \sqrt{(b^{(g, \hat{g})})^2 - 4 a^{(g, \hat{g})} c^{(g, \hat{g})}}}{2 a^{(g, \hat{g})}} \right). 
\end{eqnarray}

From Theorem \ref{thm:Tchidist}, given $\{ \hat{g}, \bm{z}, \bm{u} \}$, a $p$-value $p_T$ of the test statistic $T$ in (\ref{eq:defT}) is given by
\begin{eqnarray}
\label{eq:pval}
p_T = \begin{cases}
1 - \frac{\gamma \left( \frac{np-KH}{2}, \frac{T^2}{2} \right)}{\gamma \left( \frac{np-KH}{2}, \frac{ \left( \beta^{(\hat{g})} \right)^2}{2} \right)}
\sim U[0, 1] & \mathrm{if}\ 0 \leq T \leq \beta^{(\hat{g})}, \\
0 & \mathrm{otherwise}, 
\end{cases}
\end{eqnarray}
where $\gamma (\cdot, \cdot)$ is the lower incomplete gamma function. This holds from the fact that, for any random variable $X$ with a probability density function $f(x)$, $F(X) \equiv \int_{-\infty}^X f(x) \mathrm{d}x \sim U[0, 1]$. To derive the $p$-value in (\ref{eq:pval}), we used the fact that the cumulative distribution function of chi-square distribution with $c$ degrees of freedom and with truncation interval of $[0, a]$ is given by 
\begin{eqnarray}
\begin{cases}
F(x) = 0 & \mathrm{if}\ x < 0, \\
F(x) = \frac{\gamma(c/2, x/2)}{\gamma(c/2, a/2)} & \mathrm{if}\ 0 \leq x \leq a, \\
F(x) = 1 & \mathrm{if}\ x > 1. 
\end{cases}
\end{eqnarray}


\subsection{Approximated test based on simulated annealing}
\label{sec:test_approx}

The exact statistical test in Section \ref{sec:test_tchi} requires us to search (i) the optimal set of cluster memberships $\hat{g}$, which minimizes the squared residue in (\ref{eq:sr_def}), and (ii) the set of cluster memberships $\tilde{g}$ in (\ref{eq:tilde_g}), which determines the truncation interval. We can see that the number of mutually different patterns of block structures with \textbf{exactly} $K \times H$ blocks is lower bounded by $K^{n - K} H^{p - H}$ (see Appendix \ref{sec:ap_patterns} for more detailed discussions). 

To cope with such combinatorial explosion, we propose an approximated statistical test based on SA, besides the exact test described in Section \ref{sec:test_tchi}. SA is an iterative algorithm that can be used for obtaining approximated solutions of combinatorial optimization problems \cite{Cerny1985, Kirkpatrick1983}; its basic procedure is given as follows: 
\begin{enumerate}
\item Define a cooling schedule or the sequence of temperatures $\{T_t\}_{t = 0}^{\infty}$, a threshold $\epsilon$, a finite set of states $\mathcal{S}$, and an objective function $f$ on $\mathcal{S}$. For all the experiments, we set the threshold at $\epsilon = 10^{-6}$. Our purpose is to find a state $x \in \mathcal{S}$ that minimizes $f(x)$. For each state $x \in \mathcal{S}$, we also define a set of neighbors $N(x) \subseteq \mathcal{S}$ and a transition probability $R(x, x')$ from state $x$ to $x'$, for all $x' \in \mathcal{S}$, where $R(x, x') > 0$ if $x' \in N(x)$ and $R(x, x') = 0$ otherwise. Finally, define an initial step $t \gets 0$ and initial state $x_0 \in \mathcal{S}$, and let $f^{(0)} \equiv f(x_0)$. 
\item If $T_t < \epsilon$, stop the algorithm and output the current state $x_t$. Otherwise, randomly choose a neighbor $x'$ of the current state $x_t$ (i.e., $x' \in N(x_t)$) with probability $R(x_t, x')$. Let $f' \equiv f(x')$ and $\Delta f \equiv f' - f^{(t)}$. 
\begin{itemize}
  \item If $\Delta f < 0$, then move to state $x'$ and set $x_{t + 1} = x'$ and $f^{(t + 1)} = f'$. 
  \item Otherwise, with probability $\exp \left( -\frac{\Delta f}{T_t} \right)$, move to state $x'$ and set $x_{t + 1} = x'$ and $f^{(t + 1)} = f'$. Otherwise, stay at the current state $x_t$ and set $x_{t + 1} = x_t$ and $f^{(t + 1)} = f^{(t)}$. 
\end{itemize}
\label{enm:repeat_sa}
\item Let $t \gets t + 1$ and go to \ref{enm:repeat_sa}. 
\end{enumerate}
It has been proven that the solution given by the above SA algorithm converges in probability to the global optimal solution of a given problem, under the following conditions \cite{Hajek1988}: 
\begin{enumerate}
\renewcommand{\theenumi}{(\alph{enumi})}
\renewcommand{\labelenumi}{(\alph{enumi})}
\item \textbf{Irreducibility}: we call that the state $y$ is \textit{reachable} at \textit{height} $E$ from state $x$ if $x = y$ or a sequence of states $x=x_1, x_2, \dots, x_p=y$ exists such that (1) $R(x_t, x_{t+1}) > 0$, for all $t \in \{1, \dots, p-1 \}$, and (2) $f(x_t) \leq E$, for all $t \in \{1, \dots, p \}$. We simply call that $y$ is reachable from $x$ if $y$ is reachable from $x$ at some height $E$. The first condition is that for any pair of states $(x, y)$, $y$ is reachable from $x$. 
\label{enm:irreducibility}
\item \textbf{Weak reversibility}: The second condition is that, for any $E \in \mathbb{R}$ and for any pair of states $(x, y)$, $y$ is reachable at height $E$ from $x$ iff $x$ is reachable at height $E$ from $y$. 
\label{enm:wr}
\item We call that state $x$ is a \textit{local minimum} if no state $y \in \mathcal{S}$ satisfying $f(y) < f(x)$ (i.e., a better solution) is reachable at height $f(x)$ from $x$. In other words, to find a better solution from a local minimum $x$, we need to pass through some ``worse'' states, where the value of the objective function is larger than that of $x$. We define that the \textit{depth} of a local minimum $x$ is $+\infty$ if $x$ is a global optimal state; otherwise, it is the minimum $E > 0$ such that some state $y$ (i.e., better solution) with $f(y) < f(x)$ exists and $y$ is reachable at height $f(x) + E$ from $x$. 
The third condition is that the cooling schedule of temperature satisfies the following conditions: (1) $T_t \geq T_{t + 1}$, for all $t \geq 0$, (2) $\lim_{t \to \infty} T_t = 0$, and (3) $\sum_{t=0}^{\infty} \exp \left( -\frac{d^*}{T_t} \right) = +\infty$, where $d^*$ is the maximum depth of all the states that are locally, but not globally, optimal solutions. 
\label{enm:temp_cond}
\end{enumerate}

\begin{algorithm}[t]
\caption{SA algorithm for finding the minimum squared residue solution $\hat{g}$. }         
\label{algo:min_sr}
\begin{algorithmic}[1]
\REQUIRE A cooling schedule of temperature $\{ T_t \}_{t=0}^{\infty}$ and a threshold $\epsilon$. 
\ENSURE Approximated optimal set of cluster memberships $\hat{g}$ in terms of the squared residue. 
\STATE $t \gets 0$. 
\STATE Randomly generate initial cluster memberships: $\hat{g} = (\hat{g}^{(1)}, \hat{g}^{(2)})$. 
\STATE Compute the initial value of the objective function: $f \gets \bm{x}^{\top} E^{(\hat{g})} \bm{x}$. 
\WHILE{$T_t \geq \epsilon$}
\STATE Randomly choose a row or column index $m$ from the uniform distribution on $\{1, \dots, n + p\}$. 
  \IF{$m \leq n$}
  \STATE $i \gets m$. 
  \STATE Randomly generate a new cluster index $k'$ of the $i$th \textbf{row} from the uniform distribution on $\{1, \dots, K\} \setminus \hat{g}^{(1)}_i$. Let $\hat{g}'$ be the set of cluster memberships given by $\hat{g}' = ((\hat{g}')^{(1)}, (\hat{g}')^{(2)})$, $(\hat{g}')^{(1)}_i = k'$, $(\hat{g}')^{(1)}_{i'} = \hat{g}^{(1)}_{i'}$, for $i' \neq i$, and $(\hat{g}')^{(2)} = \hat{g}^{(2)}$. 
  \ELSE
  \STATE $j \gets m - n$. 
  \STATE Randomly generate a new cluster index $h'$ of the $j$th \textbf{column} from the uniform distribution on $\{1, \dots, H\} \setminus \hat{g}^{(2)}_j$. Let $\hat{g}'$ be the set of cluster memberships given by $\hat{g}' = ((\hat{g}')^{(1)}, (\hat{g}')^{(2)})$, $(\hat{g}')^{(1)} = \hat{g}^{(1)}$, $(\hat{g}')^{(2)}_j = h'$, and $(\hat{g}')^{(2)}_{j'} = \hat{g}^{(2)}_{j'}$, for $j' \neq j$. 
  \ENDIF
\STATE Compute the value of the objective function: $f' \gets \bm{x}^{\top} E^{(\hat{g}')} \bm{x}$. 
\STATE $\Delta f \gets f' - f$. 
  \IF{$\Delta f < 0$}
  \STATE $\hat{g} \gets \hat{g}'$, $f \gets f'$. 
  \ELSE
  \STATE With probability $\exp \left( - \frac{\Delta f}{T_t} \right)$, $\hat{g} \gets \hat{g}'$, $f \gets f'$. 
  \ENDIF
\STATE $t \gets t + 1$. 
\ENDWHILE
\end{algorithmic}
\end{algorithm}

\begin{algorithm}[p]
\caption{SA algorithm for finding the solution $\tilde{g}$ of the truncation interval. }         
\label{algo:interval}
\begin{algorithmic}[1]
\REQUIRE Optimal set of cluster memberships $\hat{g}$ in terms of the squared residue, a cooling schedule of temperature $\{ T_t \}_{t=0}^{\infty}$, and a threshold $\epsilon$. 
\ENSURE Approximated optimal set of cluster memberships $\tilde{g}$ for determining the truncation interval. 
\STATE $t \gets 0$. 
\STATE Randomly generate initial cluster memberships: $\tilde{g} = (\tilde{g}^{(1)}, \tilde{g}^{(2)})$. 
\STATE Compute the initial value of the objective function: if $a^{(\tilde{g}, \hat{g})} = 0$, then $f \gets +\infty$; otherwise, $f \gets (- b^{(\tilde{g}, \hat{g})} - \sqrt{(b^{(\tilde{g}, \hat{g})})^2 - 4 a^{(\tilde{g}, \hat{g})} c^{(\tilde{g}, \hat{g})}})/(2 a^{(\tilde{g}, \hat{g})})$. 
\WHILE{$T_t \geq \epsilon$}
\STATE Randomly choose the size $s$ of a subset of row or column indices from $\{1, \dots, n + p\}$: 
\begin{eqnarray}
\label{eq:algo2_trans_prob}
\begin{cases}
s \gets 1 & \mathrm{with\ probability}\ \frac{1}{2} + \frac{1}{2^{n + p}}, \\
s \gets s' & \mathrm{with\ probability}\ \frac{1}{2^{s'}},\ \mathrm{for}\ s' \in \{2, \dots, n + p\}. 
\end{cases}
\end{eqnarray}
\STATE Randomly choose a set of $s$ row or column indices $\mathcal{S}$ without duplication from the uniform distribution. 
\STATE $\tilde{g}' \gets \tilde{g}$. 
\FOR{each row or column index in $\mathcal{S}$}
  \IF{the $i$th \textbf{row }is selected}
    \STATE Randomly generate a new cluster index $k'$ of the $i$th \textbf{row} from the uniform distribution on $\{1, \dots, K\} \setminus \tilde{g}^{(1)}_i$. $(\tilde{g}')^{(1)}_i \gets k'$. 
  \ELSIF{the $j$th \textbf{column} is selected}
    \STATE Randomly generate a new cluster index $h'$ of the $j$th \textbf{column} from the uniform distribution on $\{1, \dots, H\} \setminus \tilde{g}^{(2)}_j$. $(\tilde{g}')^{(2)}_j \gets h'$. 
  \ENDIF
\ENDFOR
\STATE Compute the value of the objective function: if $a^{(\tilde{g}', \hat{g})} = 0$, then $f \gets +\infty$; otherwise, $f' \gets (- b^{(\tilde{g}', \hat{g})} - \sqrt{(b^{(\tilde{g}', \hat{g})})^2 - 4 a^{(\tilde{g}', \hat{g})} c^{(\tilde{g}', \hat{g})}})/(2 a^{(\tilde{g}', \hat{g})})$. 
\STATE $\Delta f \gets f' - f$. 
\IF{$\Delta f < 0$}
  \STATE $\tilde{g} \gets \tilde{g}'$, $f \gets f'$. 
\ELSE
  \STATE With probability $\exp \left( - \frac{\Delta f}{T_t} \right)$, $\tilde{g} \gets \tilde{g}'$, $f \gets f'$. 
\ENDIF
\STATE $t \gets t + 1$. 
\ENDWHILE
\end{algorithmic}
\end{algorithm}

Algorithm \ref{algo:min_sr} is the SA algorithm for obtaining an approximated solution for the optimal set of cluster memberships $\hat{g}$ in terms of the squared residue. In this algorithm, from (\ref{eq:g_hat}), we define that the set of states $\mathcal{S}$ and the objective function $f$ are given by $\mathcal{S} \equiv \mathcal{G}_{KH}$ and $f(g) \equiv \bm{x}^{\top} E^{(g)} \bm{x}$, respectively. In each step of the algorithm, neighbors $N(g)$ of the current state $g$ are defined as a set of all the cluster memberships that differ from $g$ in exactly one row or column. It must be noted that the size of such neighbors is $|N(g)| = n (K-1) + p (H-1)$. We choose a neighbor $g'$ from the uniform distribution on $N(g)$ (i.e., with probability $R(g, g') = 1/|N(g)|$). By these definitions, Algorithm \ref{algo:min_sr} satisfies the conditions of \ref{enm:irreducibility} irreducibility and \ref{enm:wr} weak reversibility. 

Algorithm \ref{algo:interval} is the SA algorithm used for finding an approximated solution of the cluster memberships $\tilde{g}$, which determines the truncation interval. In this algorithm, we define that the set of states $\mathcal{S}$ and the objective function $f$ are given by $\mathcal{S} \equiv \mathcal{G}_{KH}$ and $f(g) \equiv \frac{- b^{(g, \hat{g})} - \sqrt{(b^{(g, \hat{g})})^2 - 4 a^{(g, \hat{g})} c^{(g, \hat{g})}}}{2 a^{(g, \hat{g})}}$, respectively. Unlike Algorithm \ref{algo:min_sr}, we have to consider the \textit{feasibility} of a solution $\tilde{g}$, that is,  it should satisfy $a^{(\tilde{g}, \hat{g})} < 0$. To guarantee the condition of \ref{enm:irreducibility} irreducibility while avoiding infeasible solutions, we defined the neighbors $N(g)$ of the current state $g$ as $N(g) \equiv \mathcal{G}_{KH}$. By this definition, for any pair of states $(g, g')$, transition from $g$ to $g'$ is possible with non-zero probability: $R(g, g') > 0$. Accordingly, we restrict the significant change in the state by controlling the transition probability $R(g, g')$, as in (\ref{eq:algo2_trans_prob}). By setting the objective function values for infeasible solutions at $+\infty$, we can avoid moving to them throughout the algorithm while satisfying the conditions of \ref{enm:irreducibility} irreducibility and \ref{enm:wr} weak reversibility. 

Regarding the cooling schedule of temperature, we can use the following definition \cite{Hajek1988}, which satisfies the conditions (1), (2), and (3) in \ref{enm:temp_cond}: 
\begin{eqnarray}
\label{eq:algo1_t}
T_t = c / \log (t + 2) \ \ \ \ \ \mathrm{for\ all}\ t \geq 0, 
\end{eqnarray}
where $c$ is a constant satisfying $c \geq d^*$. In our cases, for instance, we can define the constant $c$ as follows: 
\begin{eqnarray}
c \equiv \|\bm{x} \|_2^2 - \frac{1}{np} \left( \begin{bmatrix}1 \cdots 1\end{bmatrix} \bm{x} \right)^2, 
\end{eqnarray}
for Algorithm \ref{algo:min_sr}, since $d^* \leq \max_{g \in \mathcal{G}_{KH}} \bm{x}^{\top} E^{(g)} \bm{x} - \min_{g \in \mathcal{G}_{KH}} \bm{x}^{\top} E^{(g)} \bm{x} \leq c$. Here, we take into account the fact that the cluster memberships $\underline{g} \equiv \argmax_{g \in \mathcal{G}_{KH}} \bm{x}^{\top} E^{(g)} \bm{x}$ are attained by assigning all the elements of an observed matrix into a single block, where the objective function value is given by $\bm{x}^{\top} E^{(\underline{g})} \bm{x} = \|\bm{x} \|_2^2 - \frac{1}{np} \left( \begin{bmatrix}1 \cdots 1\end{bmatrix} \bm{x} \right)^2$, and that $\min_{g \in \mathcal{G}_{KH}} \bm{x}^{\top} E^{(g)} \bm{x} \geq 0$. 
It must be noted that, in Algorithm \ref{algo:interval}, there is no state that is local but not global minimum (i.e., all local minima are also global minima); this is because, for any pair of states $(g, g')$, $g'$ is reachable at height $f(g)$ from $g$. Therefore, from the result in \cite{Hajek1988}, the convergence in probability to a global minimum state is guaranteed without the condition (3) in \ref{enm:temp_cond}. 

Practically, an algorithm based on the cooling schedule (\ref{eq:algo1_t}) is too slow, that is, it requires much computation time before convergence. Therefore, in the experiments in Section \ref{sec:exp}, we used the cooling schedule of $T_t = T_0 \times r^{t}$, for all $t \geq 0$, though this definition satisfies only the conditions (1) and (2), not (3). 


\section{Experiment}
\label{sec:exp}

To show the validity of our proposed test, we compared its behavior with that of a \textit{naive} statistical test, which does not consider the selection event. By ignoring the fact that the set of cluster memberships $\hat{g}$ was selected based on the data vector $\bm{x}$, we construct a naive test (\textbf{which is invalid in fact}) with test statistic $T$ in (\ref{eq:defT}) by assuming
\begin{eqnarray}
\label{eq:test_naive}
T | \{ \bm{z}, \bm{u} \} \sim \chi_{(np - KH)}, 
\end{eqnarray}
from (\ref{eq:tildeT_uz}). The $p$-value of such a naive test is given by
\begin{eqnarray}
\label{eq:pval_naive}
p_T = \begin{cases}
1 - \frac{\gamma \left( \frac{np-KH}{2}, \frac{T^2}{2} \right)}{\Gamma \left( \frac{np-KH}{2} \right)} & \mathrm{if}\ 0 \leq T, \\
0 & \mathrm{otherwise}, 
\end{cases}
\end{eqnarray}
where $\Gamma (\cdot)$ is the Gamma function. 

In the following sections \ref{sec: exp_real} and \ref{sec: exp_unreal}, we check the behavior of the $p$-values and the TPR and FPR when using the proposed and naive tests, in order to show the validity of our proposed method. 
In these sections, we use the term ``realizable case'' to indicate that the hypothetical cluster numbers of rows and columns $(K, H)$ are equal to the null ones $(K^{\mathrm{(N)}}, H^{\mathrm{(N)}})$, and the term ``unrealizable case'' to indicate that at least one of $K < K^{\mathrm{(N)}}$ and $H < H^{\mathrm{(N)}}$ holds, as described in Section \ref{sec:introduction}.

Aside from the experiments in this section, we conducted sensitivity analysis of the approximated version of the proposed test with respect to the cooling schedule of SA in the realizable case in Appendix \ref{sec:sensitivity_cooling}. Moreover, we conducted an additional experiment to employ an existing fast biclustering method \cite{Tan2014} instead of the proposed SA-based algorithm for estimating the cluster memberships in Appendix \ref{sec:kmeans_biclustering}.


\subsection{Exact test in realizable case: $(K, H) = (K^{\mathrm{(N)}}, H^{\mathrm{(N)}})$}
\label{sec: exp_real}

First, we check the behavior of the $p$-values calculated by using the proposed (\ref{eq:pval}) and naive (\ref{eq:pval_naive}) tests, under the condition that the given set of cluster numbers $(K, H)$ are equal to that of the null one $(K^{\mathrm{(N)}}, H^{\mathrm{(N)}})$. As shown in Section \ref{sec:test_tchi}, the $p$-value of the proposed test follows the uniform distribution on $[0, 1]$, while there is no such guarantee for that of the naive test. 

For experiment, we randomly generated data matrices with the sizes of $(n, p) = (5, 5), (6, 6), \dots, (9, 9)$. We set the null and hypothetical sets of cluster numbers at $(2, 2)$; we defined the null cluster memberships as $g^{\mathrm{(N)}, (1)}_i = (i \bmod K^{\mathrm{(N)}}) + 1$, for all $i$, and $g^{\mathrm{(N)}, (2)}_j = (j \bmod H^{\mathrm{(N)}}) + 1$, for all $j$. In regard to the mean vector $\bm{\mu}_0$, we tried the following five settings: 
\begin{eqnarray}
\bm{\mu}_0^{(l)} = \left( 1 - \frac{l - 1}{5} \right) \left[ \mathrm{vec} \left( \begin{bmatrix}
0.7 & 0.55 \\
0.5 & 0.6 \\
\end{bmatrix} \right) - 0.5 \right] + 0.5, \ \ \ \ \ l = 1, \dots, 5. 
\end{eqnarray}

Based on the above settings, we generated $1000$ data vectors by $\bm{x} \sim N(\bm{\mu}_0^{(l)}, 0.05^2 I)$, for each setting of matrix size $(n, p)$ and mean vector $\bm{\mu}_0$. Figure \ref{fig:As} shows the examples of the generated data matrices. For each generated data vector $\bm{x}$, we computed the squared residues of all the patterns of cluster memberships $g$. Subsequently, we chose the optimal set of cluster memberships $\hat{g}$ (i.e., solution with the minimum squared residue) and checked if it is equivalent to the null set of cluster memberships $g^{\mathrm{(N)}}$. For both cases of $\hat{g} = g^{\mathrm{(N)}}$ and $\hat{g} \neq g^{\mathrm{(N)}}$, we computed the test statistic $T$ in (\ref{eq:defT}), the truncated interval in (\ref{eq:t_interval}), and the $p$-values in (\ref{eq:pval}) and (\ref{eq:pval_naive}). Subsequently, we plotted the results as follows: 
\begin{itemize}
\item For the trials where $\hat{g} = g^{\mathrm{(N)}}$ holds (i.e., under null hypothesis), we plotted the $p$-values given by (\ref{eq:pval}) and (\ref{eq:pval_naive}), in Figures \ref{fig:pvalues_p} and \ref{fig:pvalues_n}, respectively. We also plotted (i) the test statistics $D\sqrt{r}$ of the Kolmogorov-Smirnov test \cite{Conover1999} for the $p$-values of the proposed and naive tests and (ii) the accuracy of the clustering algorithm $\mathcal{A}$, that is, the ratio of the number of such null cases (i.e., $\hat{g} = g^{\mathrm{(N)}}$) to the $1000$ trials, for each setting, in Figures \ref{fig:ks_test} and \ref{fig:accuracy}, respectively. 
\item For null (i.e., $\hat{g} = g^{\mathrm{(N)}}$) and alternative (i.e., $\hat{g} \neq g^{\mathrm{(N)}}$) cases, we plotted the FPR and TPR, in Figure \ref{fig:ratios_realizable}, respectively. 
\end{itemize}

From Figures \ref{fig:pvalues_p}, \ref{fig:pvalues_n}, and \ref{fig:ks_test}, we see that the distribution of the $p$-values of the proposed test was closer to the uniform distribution on $[0, 1]$ than that of the naive test, particularly when the difference in block-wise mean between the blocks was small. This result shows that the proposed test can successfully consider the selective bias of using the squared residue minimization solution, by using the truncated chi distribution in (\ref{eq:defT}). However, the $p$-values of the naive test based on (\ref{eq:test_naive}) did not follow the uniform distribution on $[0, 1]$, since we did not consider the selective bias and conducted tests based on the (not truncated) chi distribution in the naive test. It must be noted that, in our problem setting, unlike the common statistical tests, the assertion of the null hypothesis ($(g^{\mathrm{(N)}, (1)}, g^{\mathrm{(N)}, (2)}) = (\hat{g}^{(1)}, \hat{g}^{(2)})$) is stronger than that of alternative hypothesis ($(g^{\mathrm{(N)}, (1)}, g^{\mathrm{(N)}, (2)}) \neq (\hat{g}^{(1)}, \hat{g}^{(2)})$). This results in that the $p$-values of the naive test are biased toward \textbf{larger} values than the correct ones. 

In regard to the test performance, from the results in the top of Figure \ref{fig:ratios_realizable}, we see that the FPR was low in all the settings (i.e., proposed and naive; significance rate $\alpha = 0.1, 0.05$, and $0.01$; and block-wise mean $\bm{\mu}_0$). The results in the bottom of Figure \ref{fig:ratios_realizable} shows that the TPR of the proposed test was higher than that of the naive test in the same setting, which is consistent with the discussion in the previous paragraph. However, the TPR of the proposed test was not sufficiently close to one, in all the settings. This can be attributed to the few ``true positive'' cases in the realizable setting. Figure \ref{fig:accuracy} shows that the estimated block structure $\hat{g}$ that attained the minimum squared residue was equivalent to the null one $g^{\mathrm{(N)}}$ in most cases. In other words, almost all trials were ``null cases,'' where the small number of false negative cases significantly affect the TPR. Particularly, when there is an increase in the matrix size or in the difference in the block-wise mean between the blocks, it becomes easier to estimate the null block structure, and the clustering algorithm almost always outputs the correct cluster memberships. 

\begin{figure}[t]
  \centering
  \includegraphics[width=0.99\hsize]{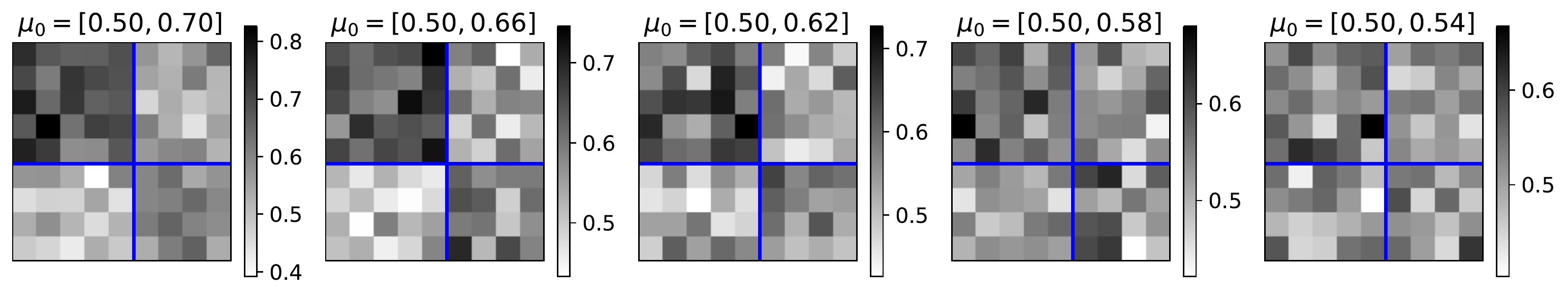}
  \caption{Examples of the observed data matrices with the size of $(n, p) = (9, 9)$, which are generated based on the different block-wise means. The title of each figure shows the range of the block-wise mean vector $\bm{\mu}_0$. The blue lines show the \textbf{null} cluster memberships. For visibility, we plotted the matrices whose rows and columns were sorted according to their null clusters.}
  \label{fig:As}
\end{figure}
\begin{figure}[p]
  \centering
  \includegraphics[width=0.9\hsize]{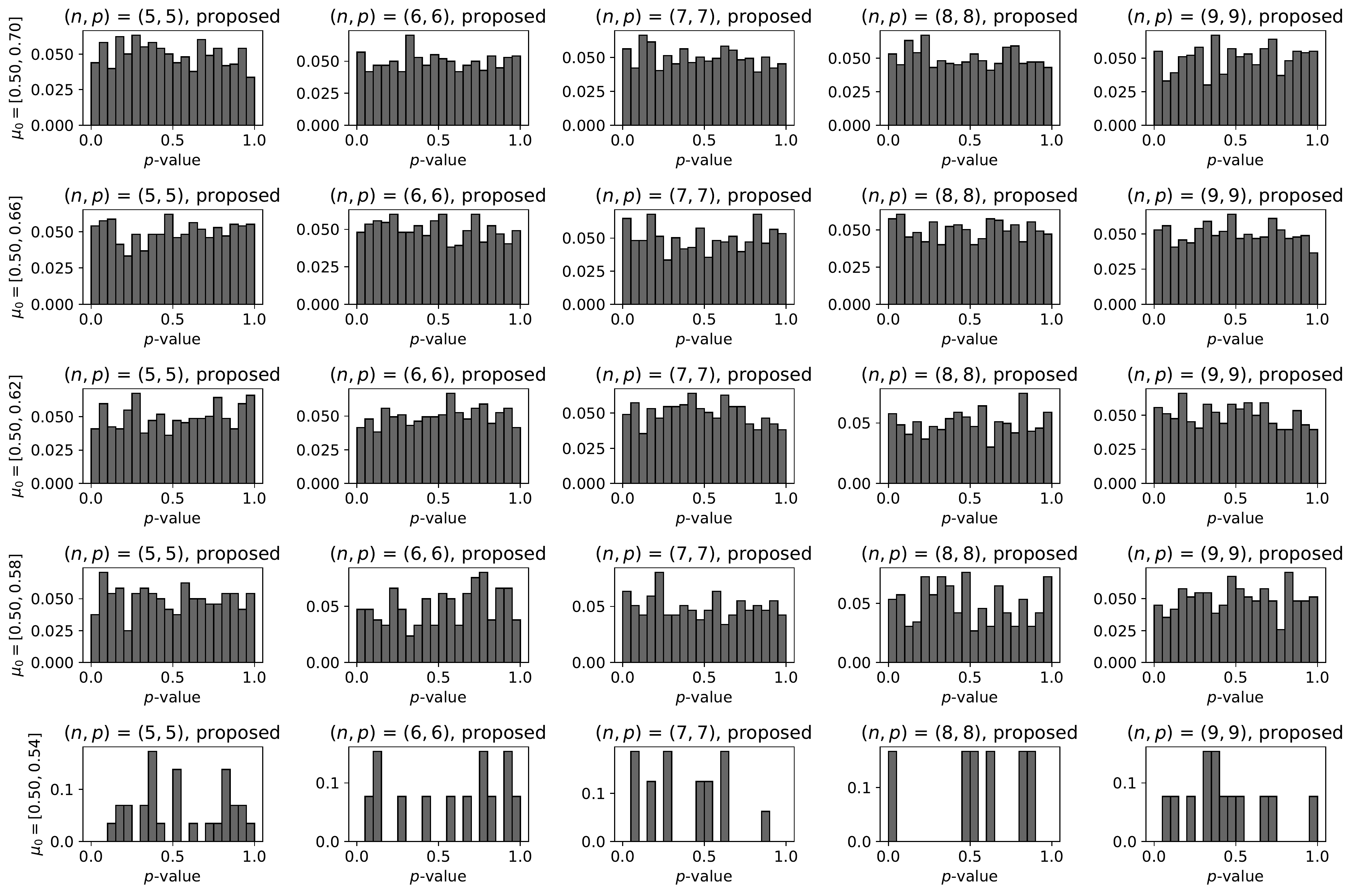}
  \caption{Histograms of $p$-values in the null case (i.e., $\hat{g} = g^{\mathrm{(N)}}$) for different matrix sizes, which was computed by the \textbf{proposed} test (\ref{eq:pval}) on the set of cluster memberships $\hat{g}$ with the minimum squared residue.}\vspace{3mm}
  \label{fig:pvalues_p}
  \includegraphics[width=0.9\hsize]{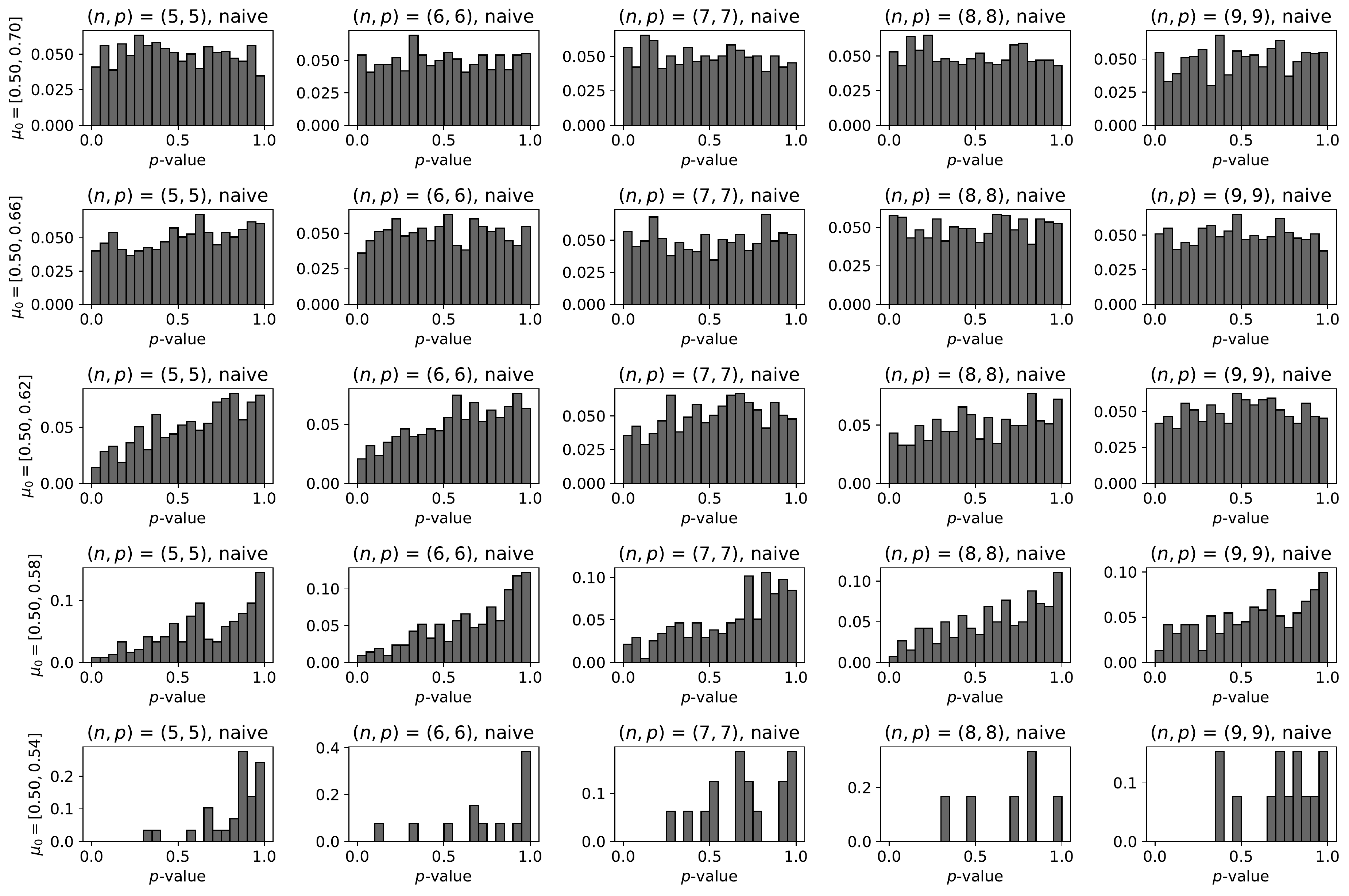}
  \caption{Histograms of $p$-values in the null case (i.e., $\hat{g} = g^{\mathrm{(N)}}$) for different matrix sizes, which was computed by the \textbf{naive} test (\ref{eq:pval_naive}).}
  \label{fig:pvalues_n}
\end{figure}
\begin{figure}[t]
  \centering
  \includegraphics[width=0.95\hsize]{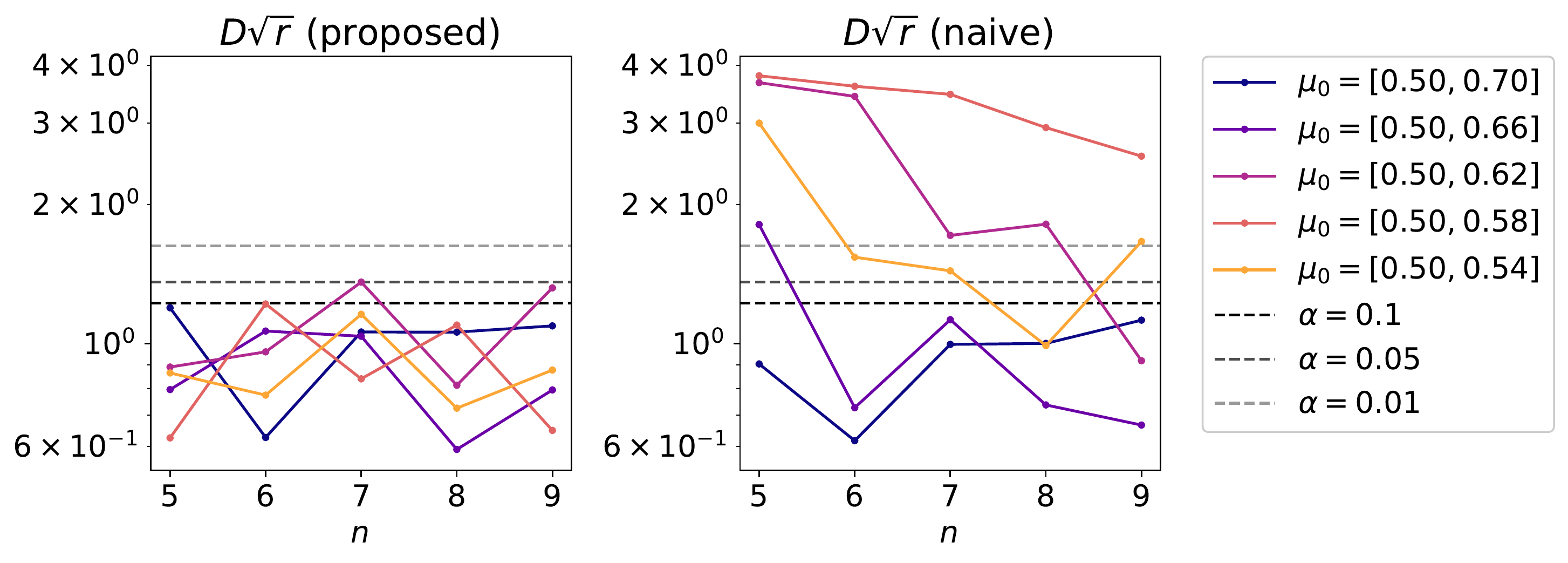}\vspace{-3mm}
  \caption{Test statistics $D\sqrt{r}$ of the Kolmogorov-Smirnov test \cite{Conover1999} for the $p$-values of the proposed (left) and naive (right) tests. The null hypothesis that $p$-value follows the uniform distribution on $[0, 1]$ is rejected if $D\sqrt{r} > \alpha$, where $\alpha$ is a given significance level.}\vspace{3mm}
  \label{fig:ks_test}
  \includegraphics[width=0.7\hsize]{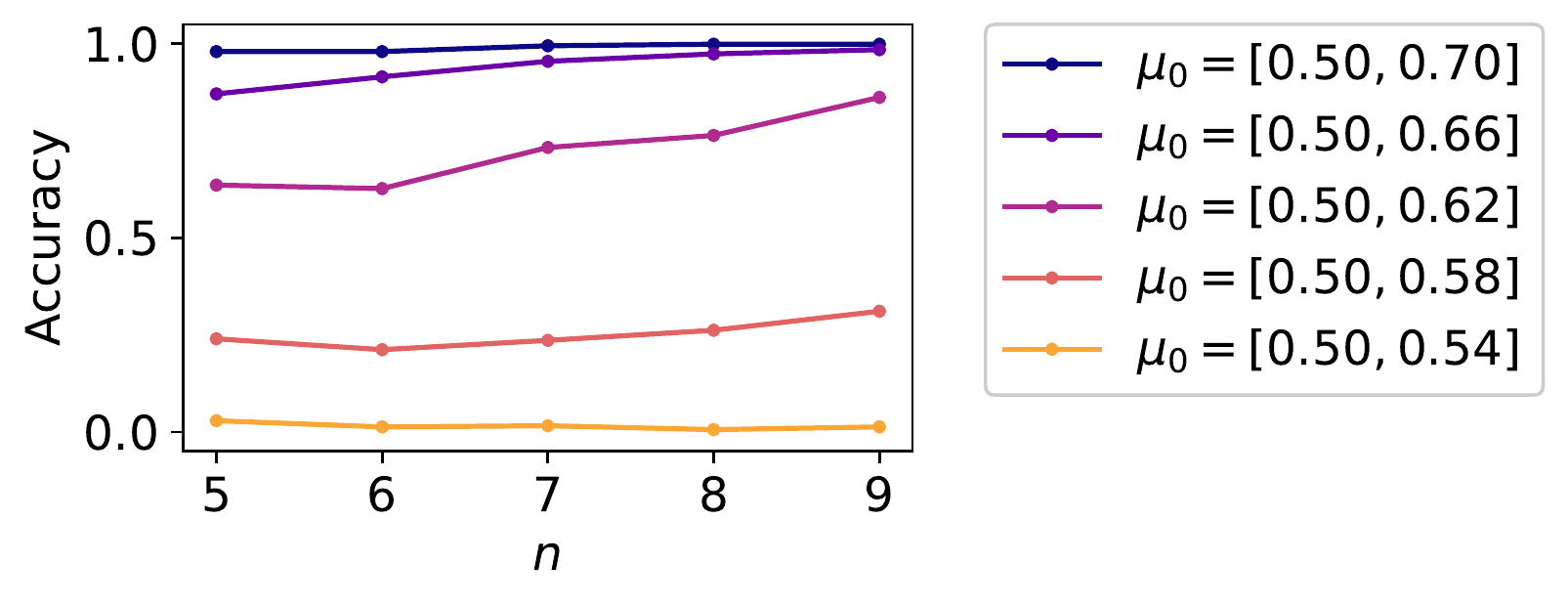}\vspace{-3mm}
  \caption{The ratio of the number of the null cases (i.e., $\hat{g} = g^{\mathrm{(N)}}$) for each setting of matrix size $(n, p)$ and mean vector $\bm{\mu}_0$. For this experiment, we used the setting of $n = p$.}
  \label{fig:accuracy}
\end{figure}
\begin{figure}[t]
  \centering
  \includegraphics[width=0.99\hsize]{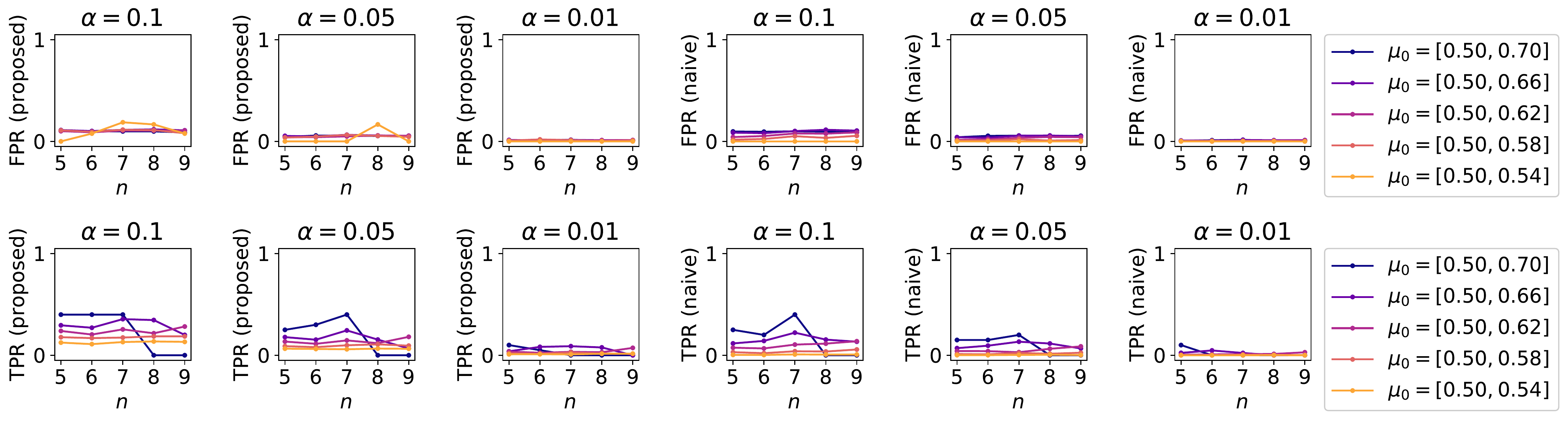}
  \caption{FPR and TPR in the realizable case with different significance rates (e.g., $\alpha = 0.1, 0.05$, and $0.01$), for the proposed (left) and naive (right) statistical tests. If there were no null (i.e., $\hat{g} = g^{\mathrm{(N)}}$) or alternative (i.e., $\hat{g} \neq g^{\mathrm{(N)}}$) cases, respectively, then the corresponding points of FPR or TPR would not have been plotted.}\vspace{3mm}
  \label{fig:ratios_realizable}
  \includegraphics[width=0.99\hsize]{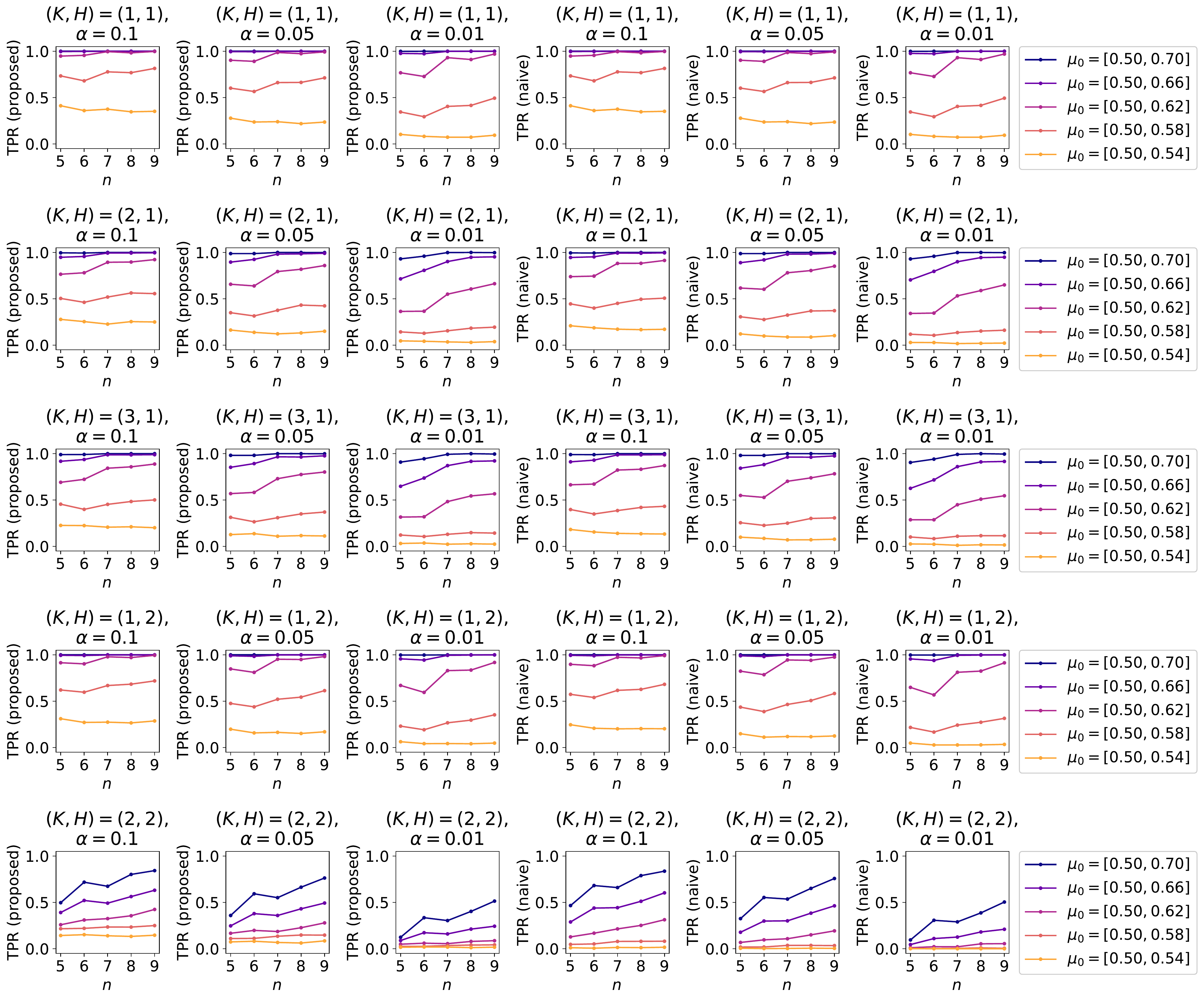}
  \caption{TPR in the unrealizable case (i.e., $K < K^{\mathrm{(N)}}$ or $H < H^{\mathrm{(N)}}$) with different significance rates (e.g., $\alpha = 0.1, 0.05$, and $0.01$), for the proposed (left) and naive (right) statistical tests.}
  \label{fig:ratios_unrealizable}
\end{figure}


\subsection{Exact test in unrealizable cases: $K < K^{\mathrm{(N)}}$ or $H < H^{\mathrm{(N)}}$}
\label{sec: exp_unreal}

Next, we compared the behavior of the proposed and naive tests in the unrealizable cases, that is, either $K < K^{\mathrm{(N)}}$ or $H < H^{\mathrm{(N)}}$ holds. 

For the experiment, we randomly generated data matrices with the sizes of $(n, p) = (5, 5), (6, 6), \dots, (9, 9)$. We set the null set of cluster numbers at $(3, 2)$ and defined the null cluster memberships as in Section \ref{sec: exp_real}. In regard to the mean vector $\bm{\mu}_0$, we tried the following five settings: 
\begin{eqnarray}
\bm{\mu}_0^{(l)} = \left( 1 - \frac{l - 1}{5} \right) \left[ \mathrm{vec} \left( \begin{bmatrix}
0.7 & 0.55 \\
0.5 & 0.6 \\
0.55 & 0.5 \\
\end{bmatrix} \right) - 0.5 \right] + 0.5, \ \ \ \ \ l = 1, \dots, 5. 
\end{eqnarray}

Based on the above settings, we generated $1000$ data vectors by $\bm{x} \sim N(\bm{\mu}_0^{(l)}, 0.05^2 I)$, for each setting of matrix size $(n, p)$ and mean vector $\bm{\mu}_0$. For each generated data vector $\bm{x}$, we computed the squared residues of all the patterns of cluster memberships $g$. Subsequently, we chose the optimal set of cluster memberships $\hat{g}$ with a given set of cluster numbers $(K, H)$. In regard to the hypothetical cluster numbers, we tried the following five settings: $(K, H) = (1, 1), (2, 1), (3, 1), (1, 2)$, and $(2, 2)$. For each setting, based on the selected result $\hat{g}$, we computed the test statistic $T$ in (\ref{eq:defT}), the truncated interval in (\ref{eq:t_interval}), and the $p$-values in (\ref{eq:pval}) and (\ref{eq:pval_naive}). Finally, we plotted the TPR of the proposed and naive tests in Figure \ref{fig:ratios_unrealizable}. 

Figure \ref{fig:ratios_unrealizable} shows that the TPR of the proposed test was higher than that of the naive test in the same setting; however, in most cases, there was a small difference between them. This may be attributed to the fact that we set the matrix size $(n, p)$ and the hypothetical block size $(K, H)$ at small numbers in order to perform the exact test, which is computationally expensive, and thus there is a marginal effect of selecting the optimal block structure $\hat{g}$ from all the patterns $\mathcal{G}_{KH}$. 
It must be noted that, unlike the realizable case in Section \ref{sec: exp_real}, the block structures output by the clustering algorithm were \textbf{always} different from the null ones because the hypothetical set of cluster numbers were insufficient to represent the null block structure in the unrealizable cases. In other words, all the $1000$ trials in each setting correspond to the alternative cases. 
The TPRs of the proposed and naive tests were comparable particularly under the following two settings: (1) the case where we set the hypothetical number of blocks at $(K, H) = (1, 1)$ and (2) the case where the difference in the null block-wise mean $\bm{\mu}$ between the blocks was relatively big. These results were caused by the nature of the biclustering problem itself, as well as by the limitation in the power of the proposed selective test. In the case of (1), we assume that the entire data matrix $A$ consists of a single block, and thus there is only one possible estimated bicluster structure $\hat{g}$, regardless of the selection event. Therefore, in this case, the proposed and naive tests are equivalent in the first place. As for the case of (2), since the difference in the null block-wise mean between the blocks was big, the test statistics of both the proposed and naive tests got large enough for the null hypothesis to be rejected. 


\subsection{Approximated test in both realizable and unrealizable cases}
\label{sec: exp_approx}

Finally, we checked the behavior of the approximated test introduced in Section \ref{sec:test_approx}. In both realizable and unrealizable cases, we generated data matrices with the sizes of $(n, p) = (10 + 2 \times m, 10 + 2 \times m)$, for $m = 0, 1, \dots, 4$, in the same way as that in Sections \ref{sec: exp_real} and \ref{sec: exp_unreal}. Concerning the following conditions, we used the same setting as in Sections \ref{sec: exp_real} and \ref{sec: exp_unreal}, respectively, for the realizable and unrealizable cases: the null and hypothetical sets of cluster numbers, the definition of null cluster memberships $g^{\mathrm{(N)}}$, mean vectors and the standard deviation $\sigma_0$, and the number of data vectors for each setting. Concerning the SA algorithms, both in the Algorithms \ref{algo:min_sr} and \ref{algo:interval}, we defined the cooling schedule as follows: $T_0 = 10$, $T_t = T_0 \times 0.99^t$ for all $t$. 

As in the cases of the exact tests, Figures \ref{fig:pvalues_p_approx} and \ref{fig:pvalues_n_approx}, respectively, show the histograms of the $p$-values of the proposed and naive approximated tests in the realizable case. For the realizable case, we also plotted (i) the test statistics $D\sqrt{r}$ of the Kolmogorov-Smirnov test \cite{Conover1999}, for the $p$-values of the proposed and naive tests, and (ii) the accuracy of the approximated clustering algorithm in Figures \ref{fig:ks_test_approx} and \ref{fig:accuracy_approx}, respectively. Figure \ref{fig:ratios_realizable_approx} shows the FPR and TPR in the realizable case, and Figure \ref{fig:ratios_unrealizable_approx} shows the TPR in the unrealizable cases. 

Figures \ref{fig:pvalues_p_approx}, \ref{fig:pvalues_n_approx}, and \ref{fig:ks_test_approx} show that the distribution of the $p$-values of the proposed test was closer to the uniform distribution on $[0, 1]$ than that of the naive test, as in the result of the exact test in Section \ref{sec: exp_real}. Concerning the test performance in the realizable case, Figure \ref{fig:ratios_realizable_approx} shows that the FPR was low in all the settings, and the TPR of the proposed test was higher than that of the naive test in the same setting. However, as in the exact case, the TPR of the proposed test was not sufficiently close to one in all the setting; this can be attributed to the few ``true positive'' cases. In the next Section \ref{sec: exp_approx_KH}, we checked more difficult cases for the approximated clustering algorithm, where the null cluster numbers were more than $(2, 2)$. 

\begin{figure}[t]
  \centering
  \includegraphics[width=0.9\hsize]{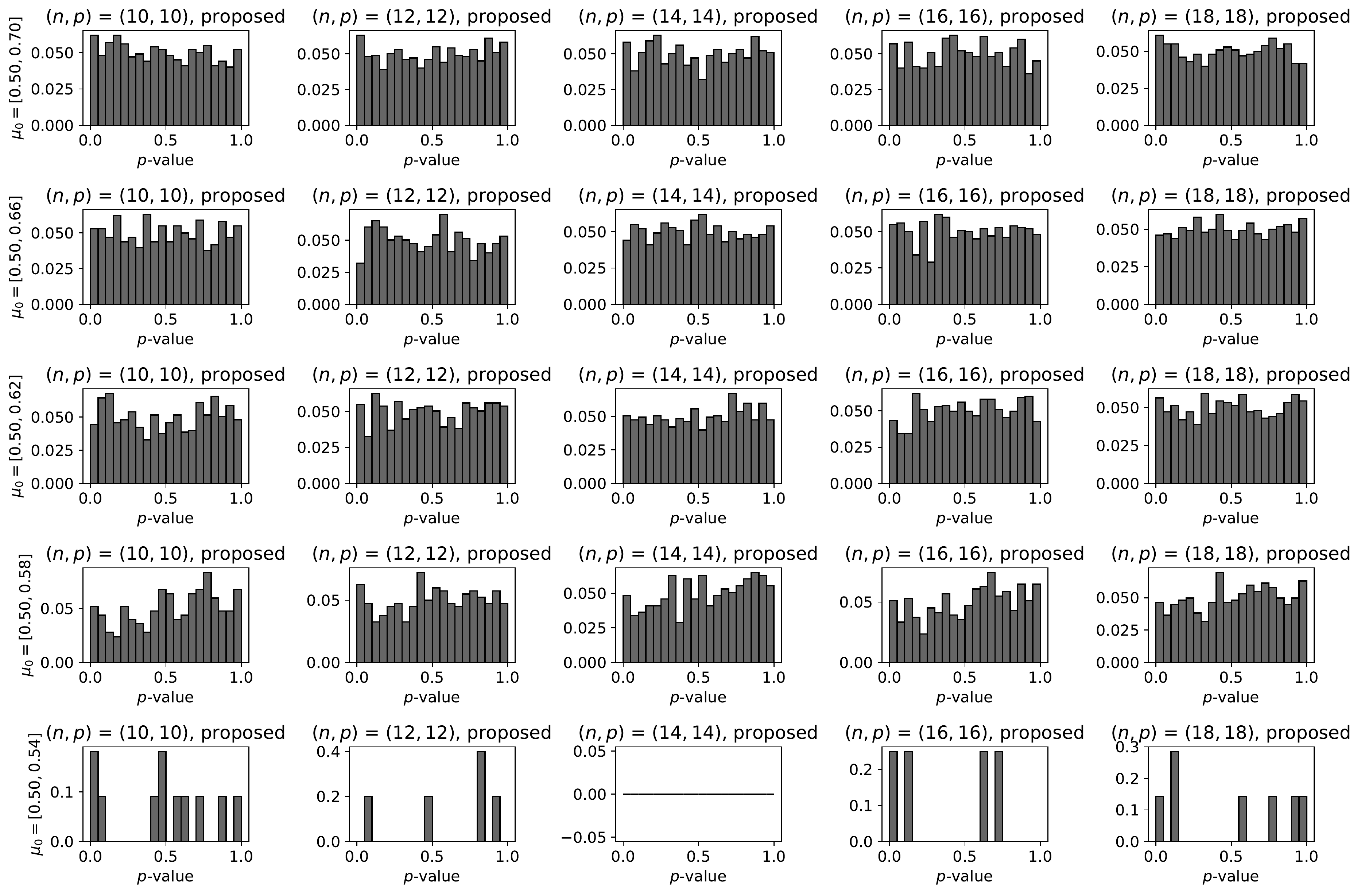}
  \caption{Histograms of $p$-values in the null case (i.e., $\hat{g} = g^{\mathrm{(N)}}$) for different matrix sizes, which was computed by the \textbf{approximated} version of the \textbf{proposed} test.}\vspace{3mm}
  \label{fig:pvalues_p_approx}
  \includegraphics[width=0.9\hsize]{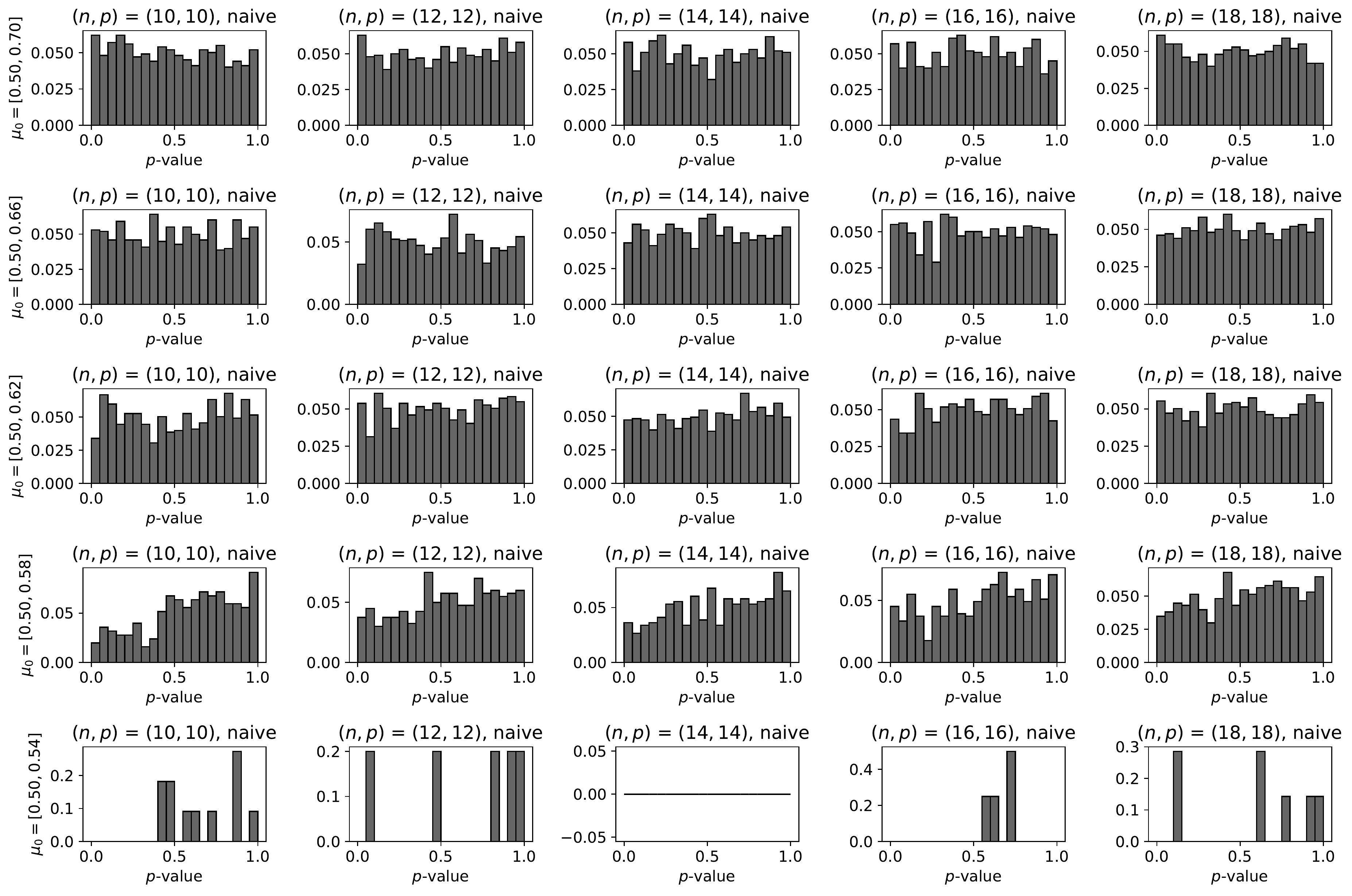}
  \caption{Histograms of $p$-values in the null case (i.e., $\hat{g} = g^{\mathrm{(N)}}$) for different matrix sizes, which was computed by the \textbf{approximated} version of the \textbf{naive} test (\ref{eq:pval_naive}).}
  \label{fig:pvalues_n_approx}
\end{figure}
\begin{figure}[t]
  \centering
  \includegraphics[width=0.95\hsize]{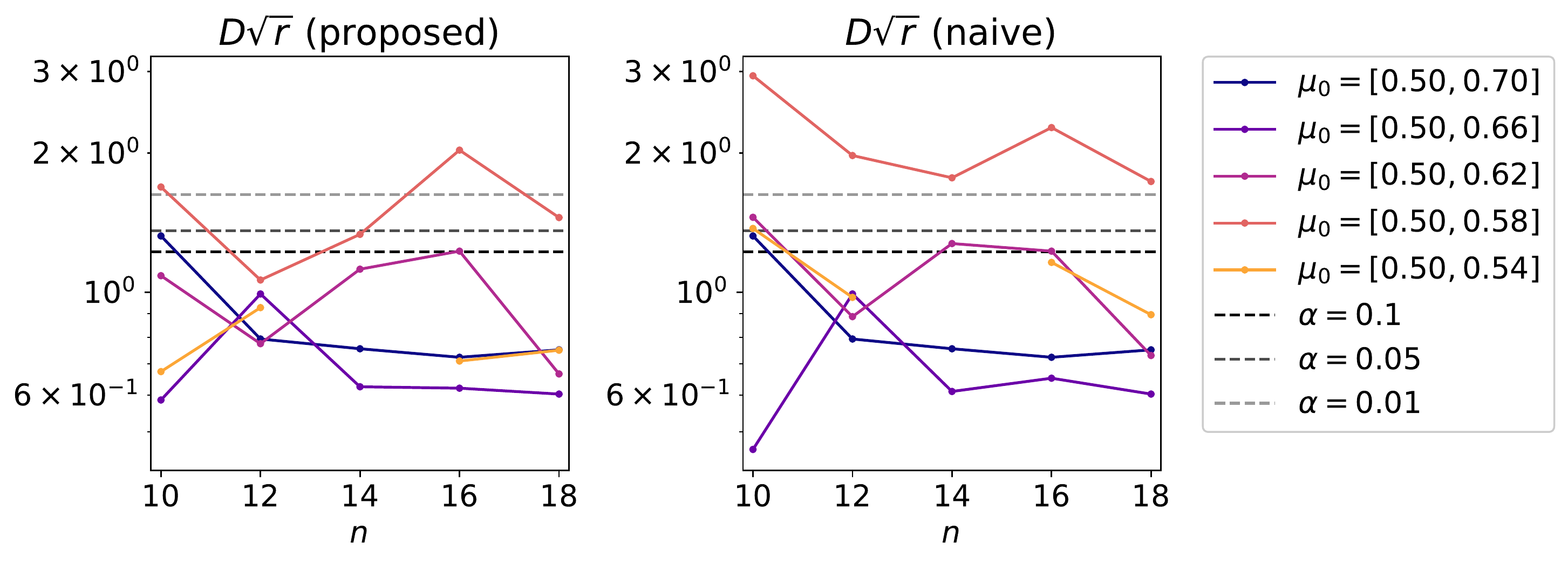}\vspace{-3mm}
  \caption{Test statistics $D\sqrt{r}$ of the Kolmogorov-Smirnov test \cite{Conover1999} for the $p$-values of the proposed (left) and naive (right) \textbf{approximated} tests.}\vspace{3mm}
  \label{fig:ks_test_approx}
  \includegraphics[width=0.7\hsize]{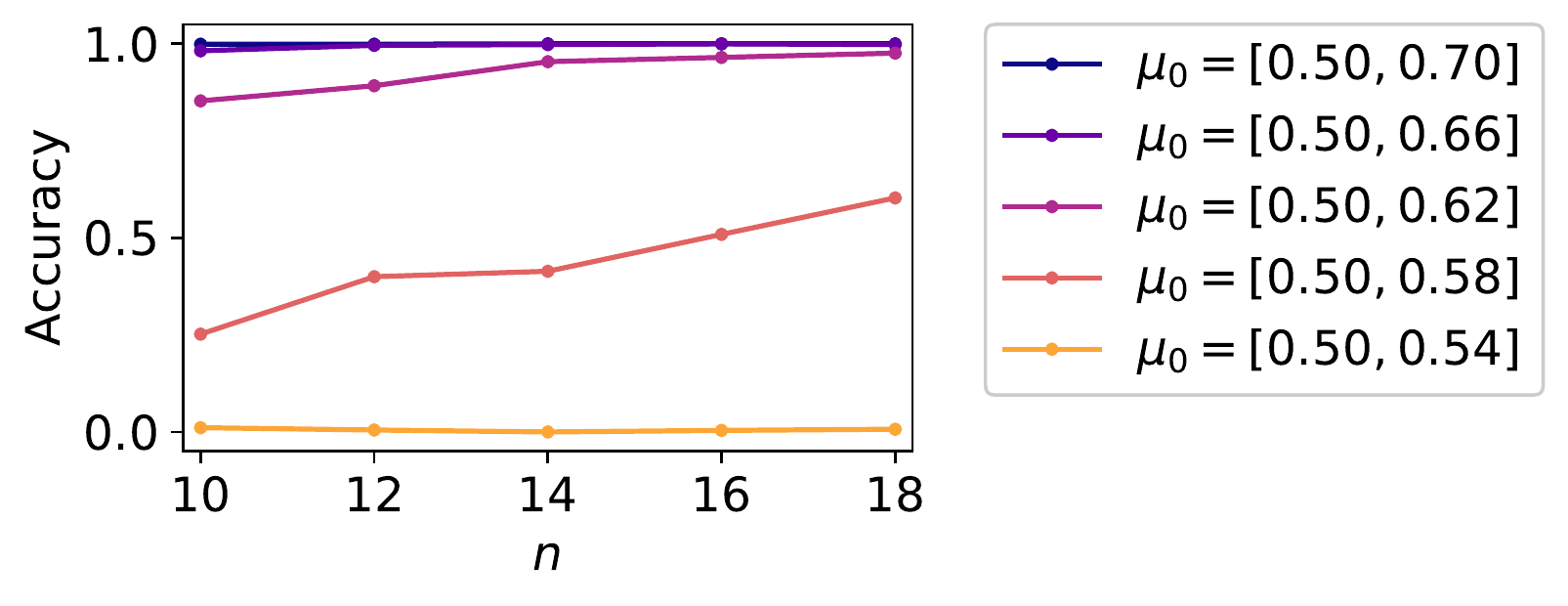}\vspace{-3mm}
  \caption{The ratio of the number of the null cases (i.e., $\hat{g} = g^{\mathrm{(N)}}$) for each setting of matrix size $(n, p)$ and mean vector $\bm{\mu}_0$, where $\hat{g}$ is output by the \textbf{approximated} clustering algorithm in Section \ref{sec:test_approx}. For the experiment, we used the setting of $n = p$.}
  \label{fig:accuracy_approx}
\end{figure}
\begin{figure}[t]
  \centering
  \includegraphics[width=0.99\hsize]{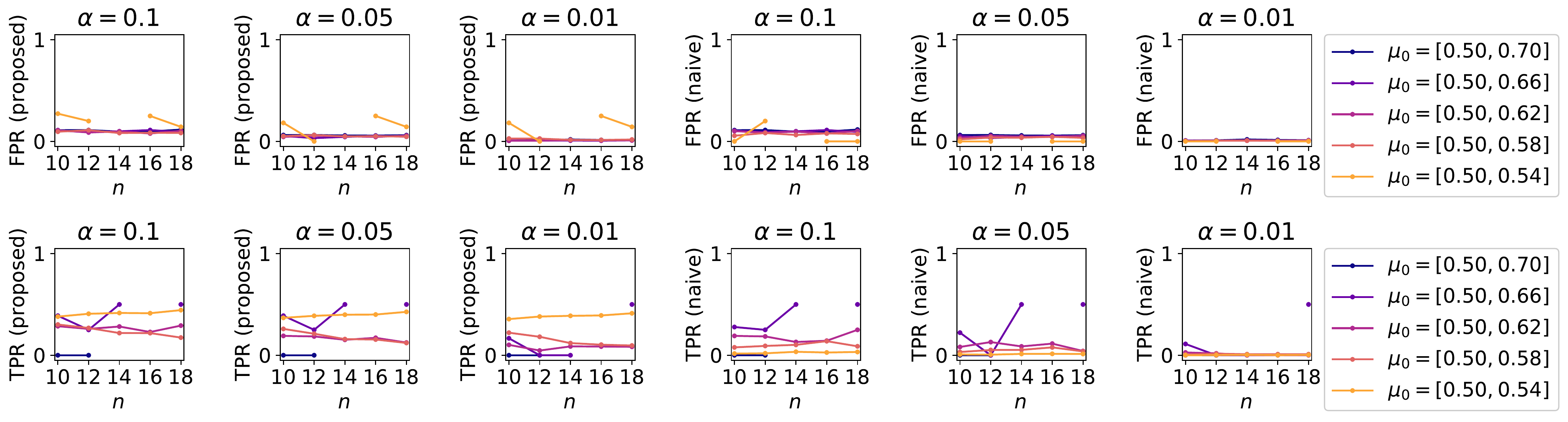}
  \caption{FPR and TPR in the realizable case with different significance rates (e.g., $\alpha = 0.1, 0.05$, and $0.01$), for the \textbf{approximated} version of the proposed (left) and naive (right) statistical tests.}\vspace{3mm}
  \label{fig:ratios_realizable_approx}
  \includegraphics[width=0.99\hsize]{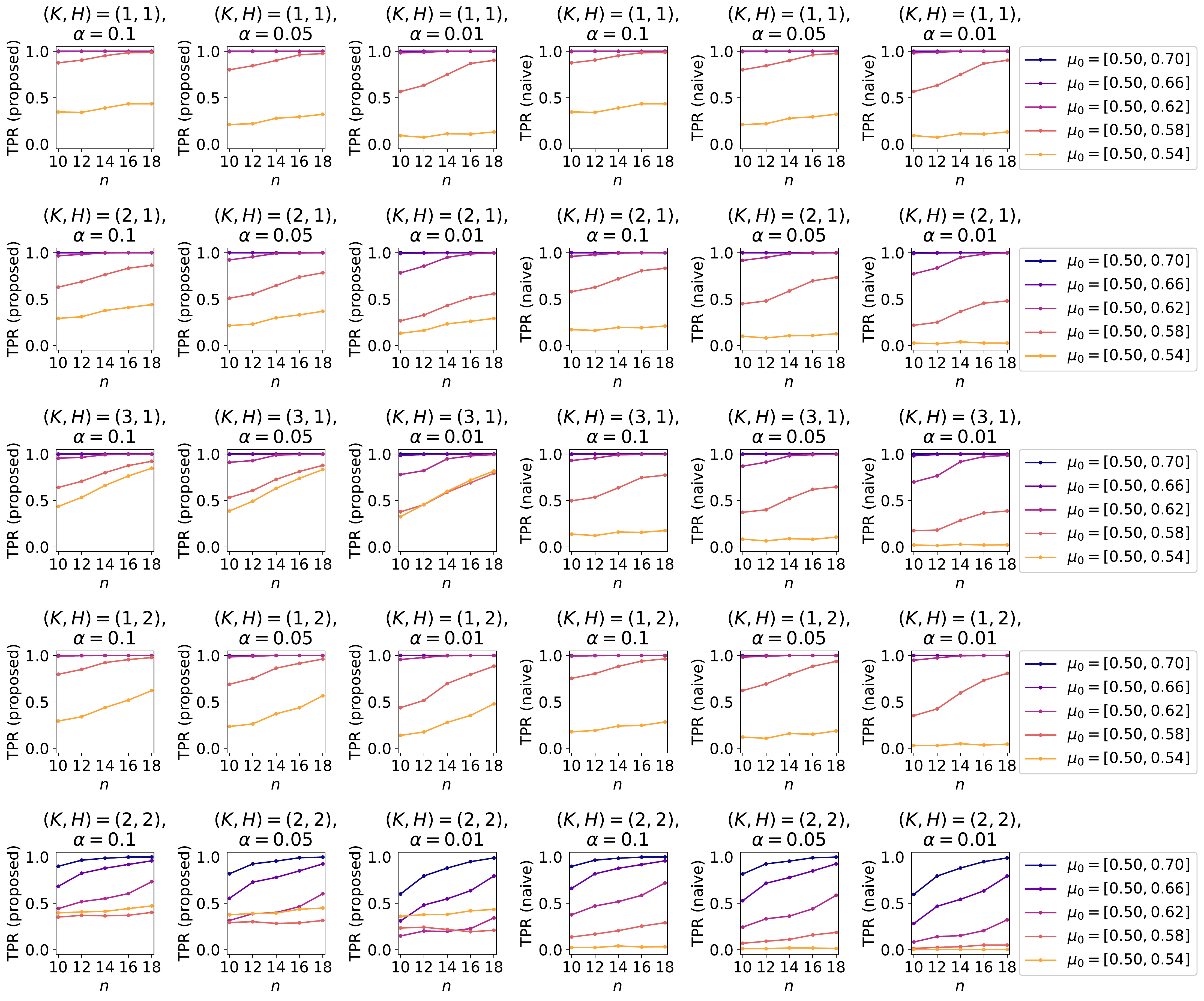}
  \caption{TPR in the unrealizable case (i.e., $K < K^{\mathrm{(N)}}$ or $H < H^{\mathrm{(N)}}$) with different significance rates (e.g., $\alpha = 0.1, 0.05$, and $0.01$), for the \textbf{approximated} version of the proposed (left) and naive (right) statistical tests.}
  \label{fig:ratios_unrealizable_approx}
\end{figure}


\subsection{Approximated test in the realizable case, $(K^{\mathrm{(N)}}, H^{\mathrm{(N)}}) = (3, 3), (4, 4), (5, 5)$}
\label{sec: exp_approx_KH}

To check the behavior of the $p$-values, FPR, and TPR of the proposed test in more difficult settings, where the clustering algorithm cannot successfully estimate the cluster memberships in most cases, we tried the following three settings of null cluster numbers: $(K^{\mathrm{(N)}}, H^{\mathrm{(N)}}) = (3, 3), (4, 4)$, and $(5, 5)$. These settings have more patterns of the possible block structures than those in the case of $(K^{\mathrm{(N)}}, H^{\mathrm{(N)}}) = (2, 2)$ in Section \ref{sec: exp_approx}. Hence, it becomes difficult for the approximated clustering algorithm (which stops at a fixed finite number of steps in the experiment) to output the null set of cluster memberships. 

We generated data matrices in the same way as that in Section \ref{sec: exp_approx}. Concerning the following conditions, we used the same setting as that of the realizable case in Section \ref{sec: exp_approx}: the set of matrix sizes $(n, p)$, the definition of the null cluster memberships $g^{\mathrm{(N)}}$, the standard deviation $\sigma_0$, and the cooling schedule of the SA algorithm. We tried the following three settings of the null number of blocks: $(3, 3)$, $(4, 4)$, and $(5, 5)$; subsequently, for each setting, we defined the mean vector $\bm{\mu}_0$ as follows: 
\begin{align}
\bm{\mu}_0^{(l)} &= \left( 1 - \frac{l - 1}{5} \right) \left[ \mathrm{vec} \left( \begin{bmatrix}
0.6 & 0.55 & 0.7 \\
0.4 & 0.6 & 0.5 \\
0.65 & 0.5 & 0.6 \\
\end{bmatrix} \right) - 0.5 \right] + 0.5, \nonumber \\
l &= 1, \dots, 5, 
\end{align}
for $(K, H) = (3, 3)$, 
\begin{align}
\bm{\mu}_0^{(l)} &= \left( 1 - \frac{l - 1}{5} \right) \left[ \mathrm{vec} \left( \begin{bmatrix}
0.6 & 0.55 & 0.7 & 0.5 \\
0.4 & 0.6 & 0.5 & 0.7 \\
0.65 & 0.5 & 0.6 & 0.4 \\
0.5 & 0.4 & 0.45 & 0.6 \\
\end{bmatrix} \right) - 0.5 \right] + 0.5,  \nonumber \\
l &= 1, \dots, 5, 
\end{align}
for $(K, H) = (4, 4)$, and 
\begin{align}
\bm{\mu}_0^{(l)} &= \left( 1 - \frac{l - 1}{5} \right) \left[ \mathrm{vec} \left( \begin{bmatrix}
0.6 & 0.55 & 0.7 & 0.5 & 0.65 \\
0.4 & 0.6 & 0.5 & 0.7 & 0.55 \\
0.65 & 0.5 & 0.6 & 0.4 & 0.45 \\
0.5 & 0.4 & 0.45 & 0.6 & 0.7 \\
0.7 & 0.65 & 0.55 & 0.45 & 0.6 \\
\end{bmatrix} \right) - 0.5 \right] + 0.5,  \nonumber \\
l &= 1, \dots, 5, 
\end{align}
for $(K, H) = (5, 5)$. We set the number of data vectors for each setting at $500$. 

We plotted  the test statistics $D\sqrt{r}$ of the Kolmogorov-Smirnov test \cite{Conover1999} for the $p$-values of the proposed and naive tests and the accuracy of the approximated clustering algorithm in Figures \ref{fig:ks_test_approx_KH} and \ref{fig:accuracy_approx_KH}, respectively. Figures \ref{fig:ratios_realizable_approx_K3H3}, \ref{fig:ratios_realizable_approx_K4H4}, and \ref{fig:ratios_realizable_approx_K5H5} show the FPR and TPR of the proposed and naive tests. These figures show that the TPRs of both the proposed and naive tests were higher than the case of $(K^{\mathrm{(N)}}, H^{\mathrm{(N)}}) = (2, 2)$; the TPR of the proposed test was higher than that of the naive one in these settings. 

\begin{figure}[t]
  \centering
  \includegraphics[width=0.95\hsize]{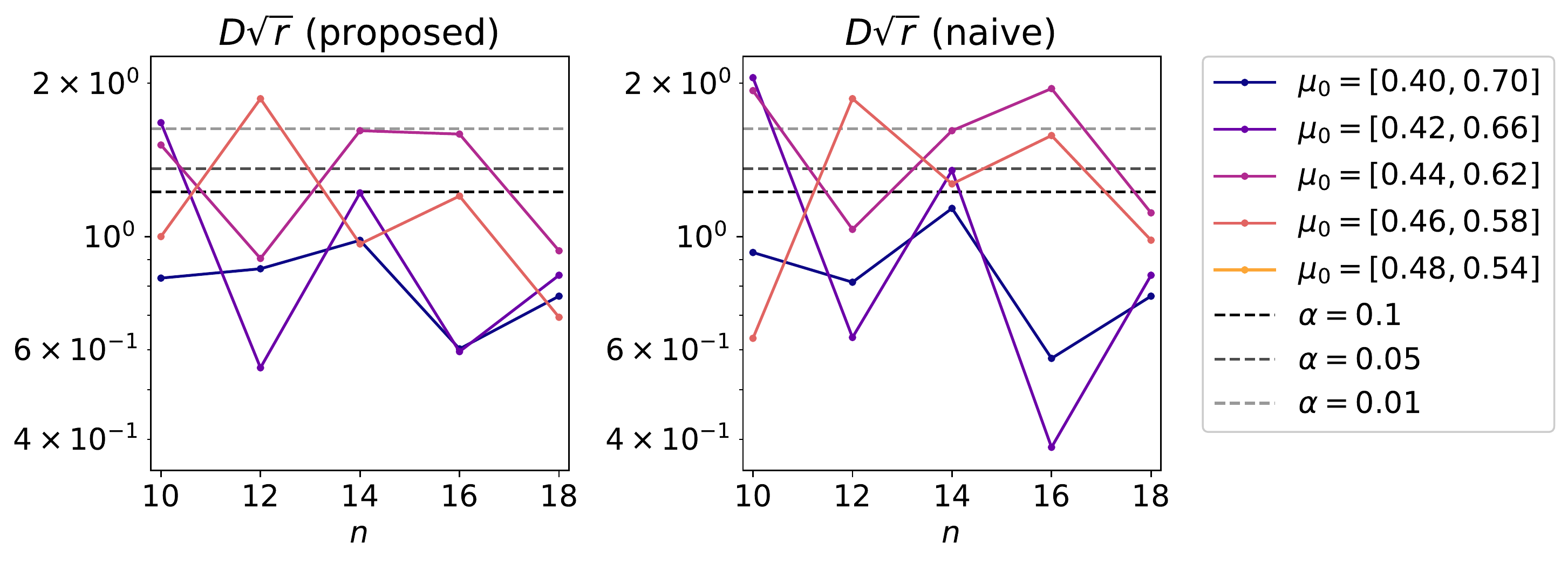}\\
  \includegraphics[width=0.95\hsize]{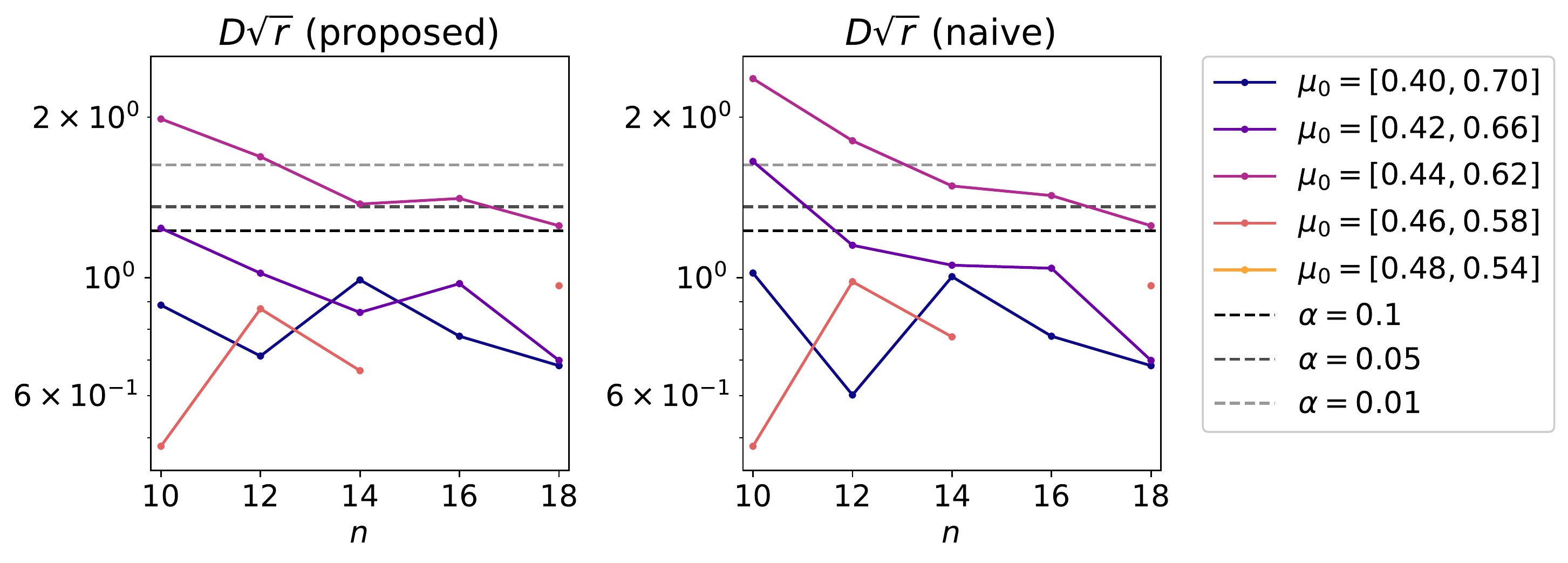}\\
  \includegraphics[width=0.95\hsize]{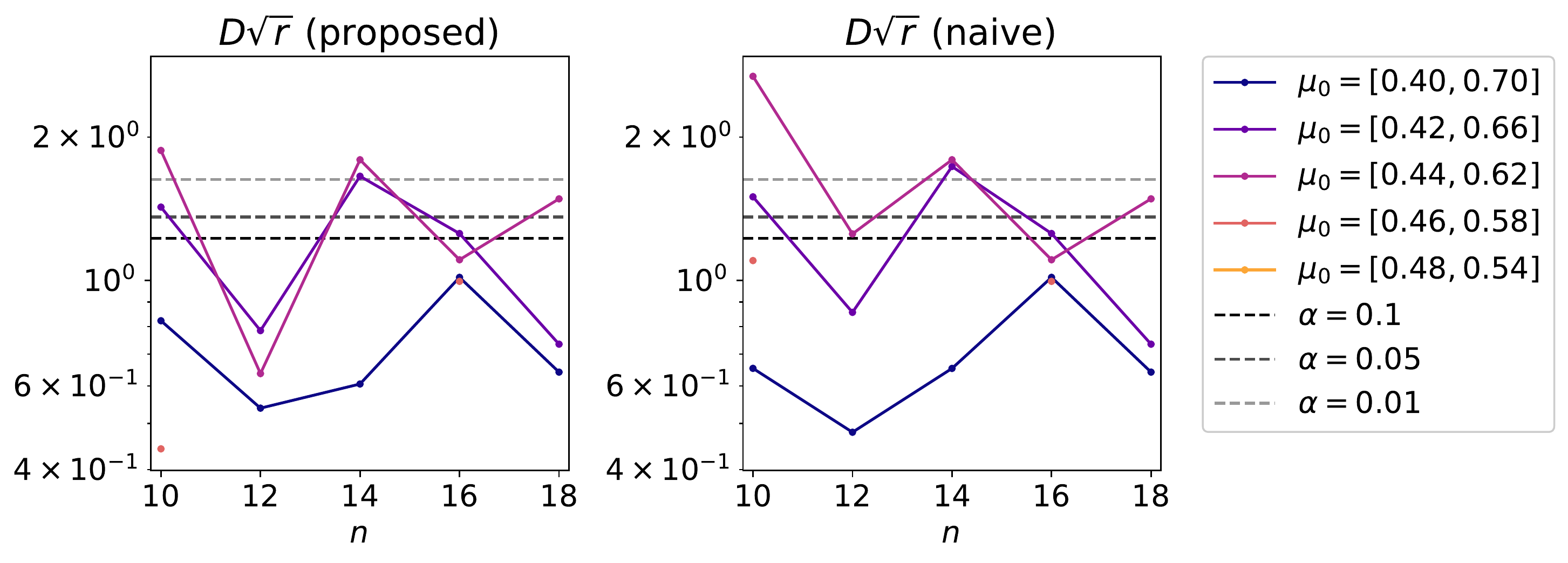}
  \caption{Test statistics $D\sqrt{r}$ of the Kolmogorov-Smirnov test \cite{Conover1999} for the $p$-values of the proposed (left) and naive (right) \textbf{approximated} tests, where $(K^{\mathrm{(N)}}, H^{\mathrm{(N)}}) = (3, 3)$ (top), $(4, 4)$ (middle), and $(5, 5)$ (bottom).}\vspace{3mm}
  \label{fig:ks_test_approx_KH}
\end{figure}
\begin{figure}[t]
  \includegraphics[width=0.7\hsize]{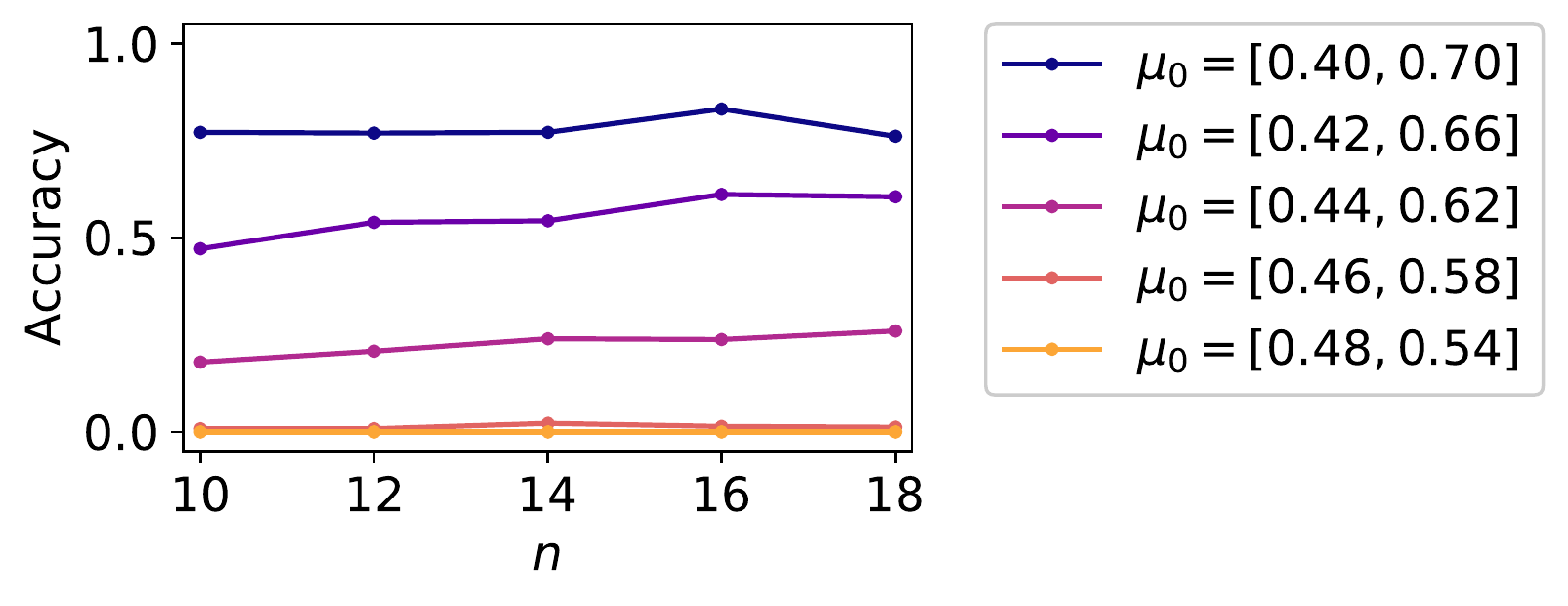}\\
  \includegraphics[width=0.7\hsize]{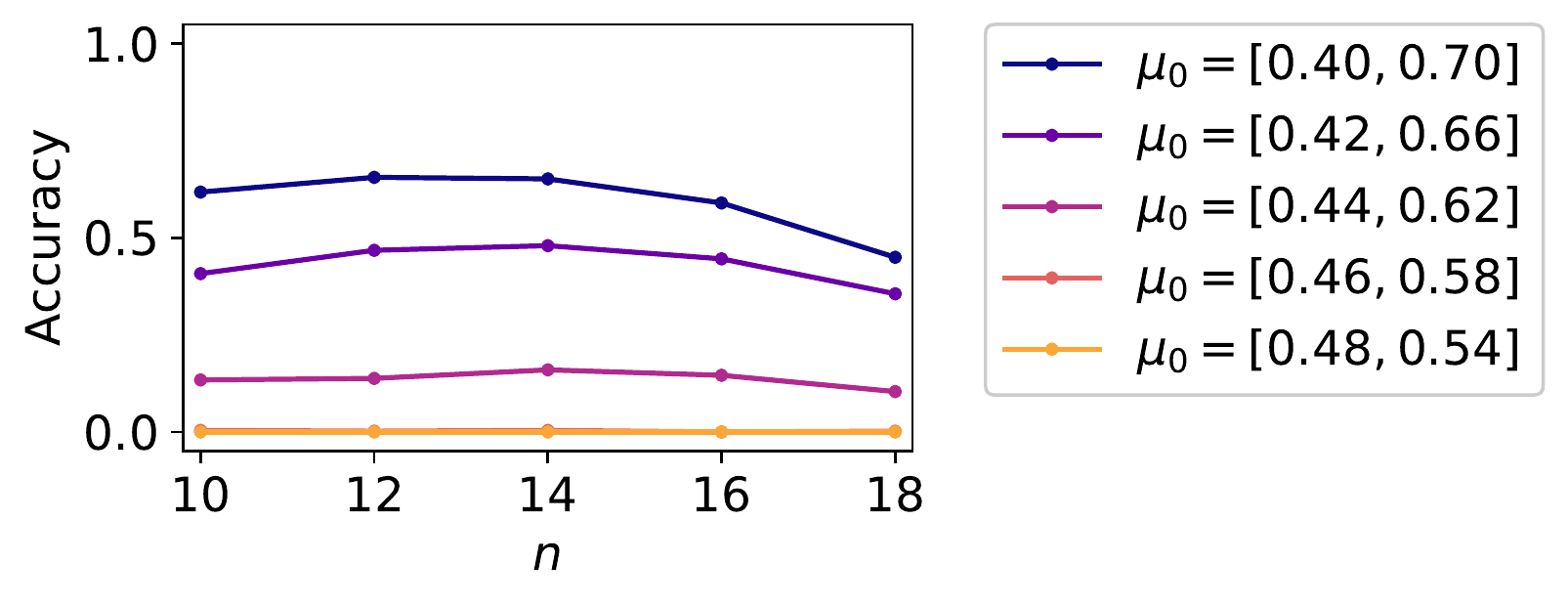}\\
  \includegraphics[width=0.7\hsize]{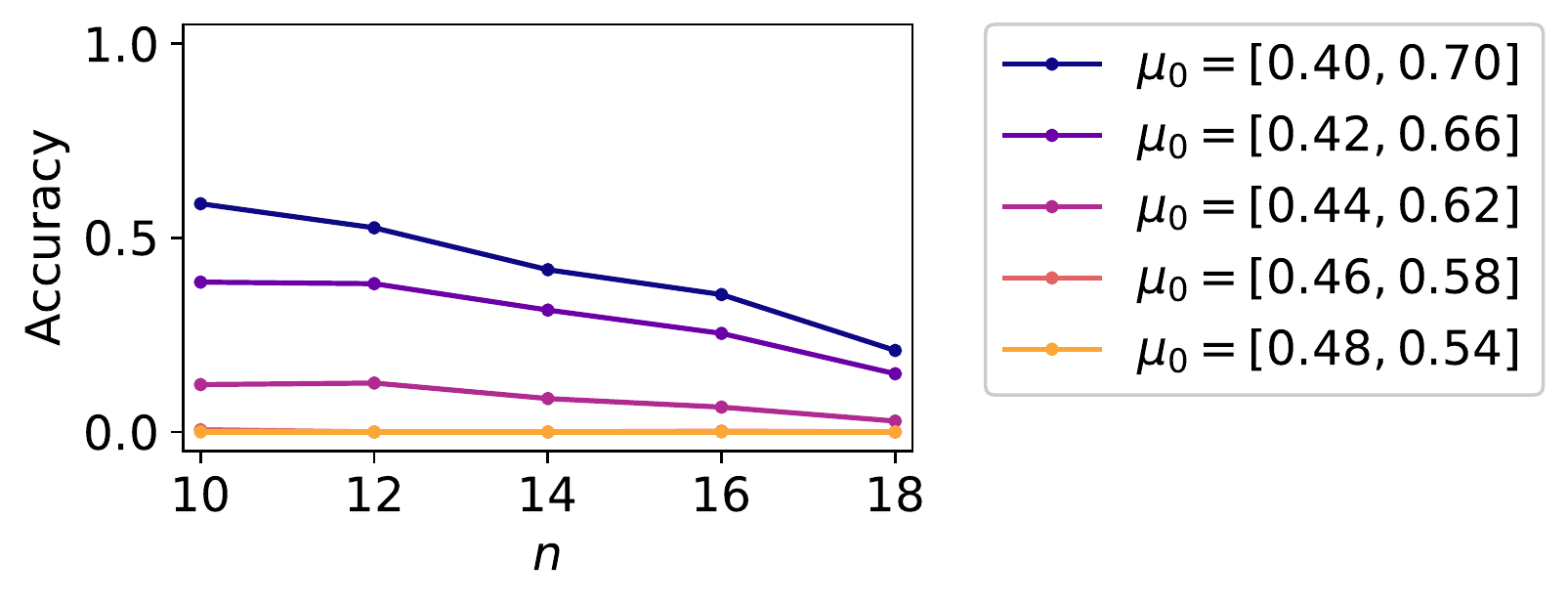}
  \caption{The ratio of the number of the null cases (i.e., $\hat{g} = g^{\mathrm{(N)}}$), for each setting of matrix size $(n, p)$ and mean vector $\bm{\mu}_0$, where $\hat{g}$ is output by the \textbf{approximated} clustering algorithm in Section \ref{sec:test_approx}; $(K^{\mathrm{(N)}}, H^{\mathrm{(N)}}) = (3, 3)$ (top), $(4, 4)$ (middle), and $(5, 5)$ (bottom). For experiment, we used the setting of $n = p$.}
  \label{fig:accuracy_approx_KH}
\end{figure}
\begin{figure}[t]
  \centering
  \includegraphics[width=0.99\hsize]{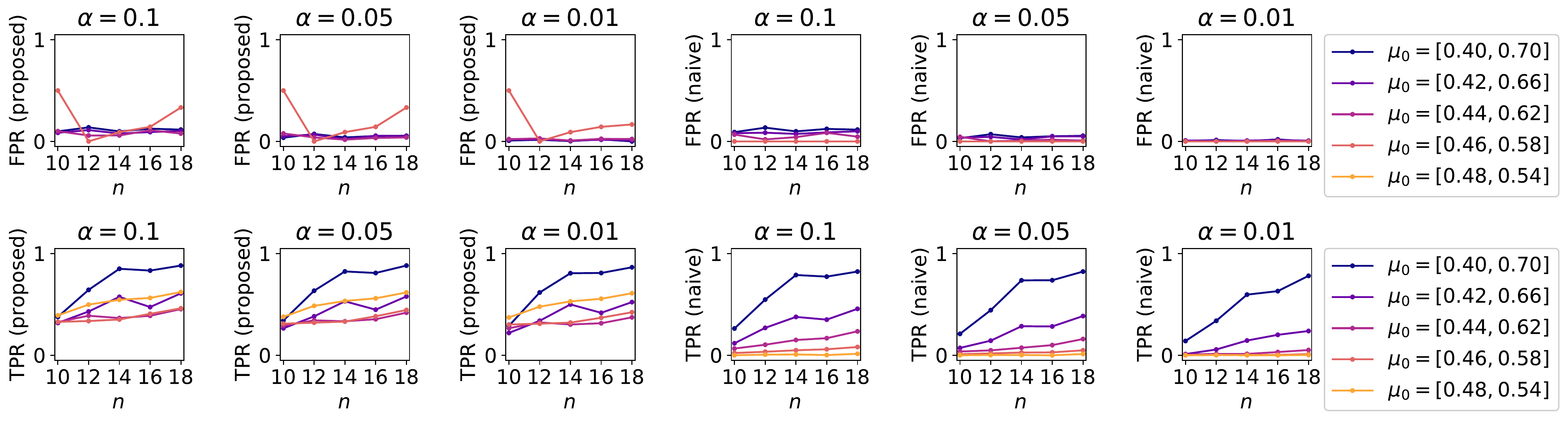}
  \caption{FPR and TPR in the realizable case with different significance rates (e.g., $\alpha = 0.1, 0.05$, and $0.01$), for the \textbf{approximated} version of the proposed (left) and naive (right) statistical tests, where $(K^{\mathrm{(N)}}, H^{\mathrm{(N)}}) = \bm{(3, 3)}$.}\vspace{3mm}
  \label{fig:ratios_realizable_approx_K3H3}
  \includegraphics[width=0.99\hsize]{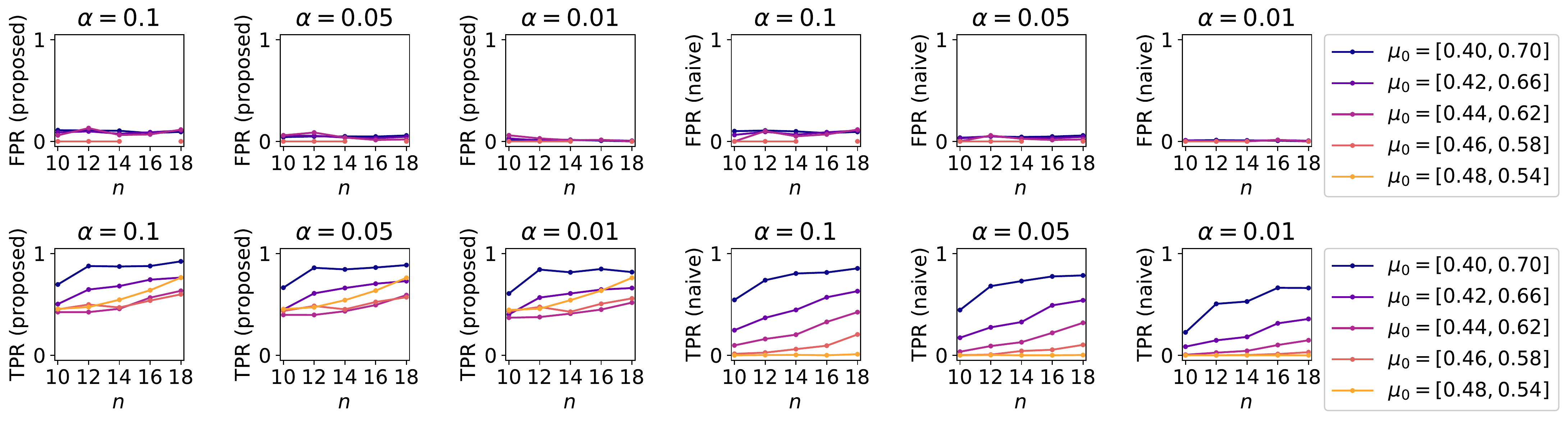}
  \caption{FPR and TPR in the realizable case with different significance rates (e.g., $\alpha = 0.1, 0.05$, and $0.01$), for the \textbf{approximated} version of the proposed (left) and naive (right) statistical tests, where $(K^{\mathrm{(N)}}, H^{\mathrm{(N)}}) = \bm{(4, 4)}$.}\vspace{3mm}
  \label{fig:ratios_realizable_approx_K4H4}
  \includegraphics[width=0.99\hsize]{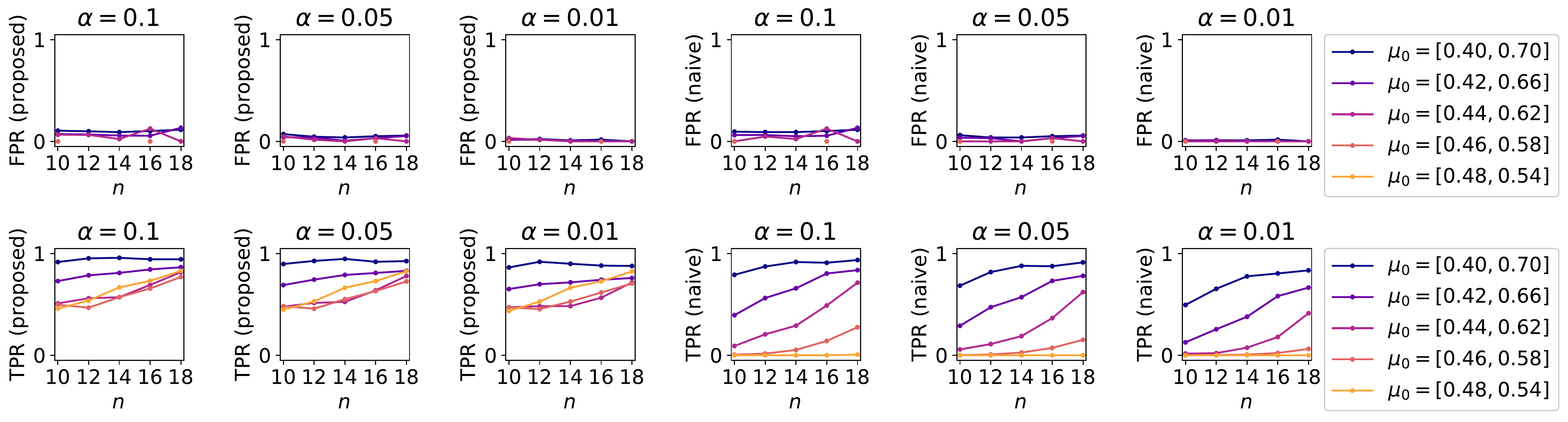}
  \caption{FPR and TPR in the realizable case with different significance rates (e.g., $\alpha = 0.1, 0.05$, and $0.01$), for the \textbf{approximated} version of the proposed (left) and naive (right) statistical tests, where $(K^{\mathrm{(N)}}, H^{\mathrm{(N)}}) = \bm{(5, 5)}$.}
  \label{fig:ratios_realizable_approx_K5H5}
\end{figure}


\section{Discussion}
\label{sec:discussion}

In this section, we discuss the following three points about the proposed statistical test: its power, the trade-off between computational efficiency and accuracy, and the extension of the finding to more generalized cases. 

First, as also pointed out in a study \cite{Loftus2015}, the null distribution of the test statistic of the proposed test is given by the conditioning on $\bm{z}$ and $\bm{u}$, besides the selected set of cluster memberships $\hat{g}$, which leads to a reduction in the test power \cite{Fithian2014}. For now, we do not have any way of removing these unnecessary parameters, owing to the problem setting of an LBM. In a one-way clustering problem, where there are $n$ data vectors with $p$ dimensions, we can at least approximate the distributions of $\bm{z}$ and $\bm{u}$ based on their histograms; however, in the LBM setting, there is only a single observed matrix with the size of $n \times p$. Solving this problem is beyond the scope of this paper; future studies should focus on constructing a more powerful selective test on a bicluster structure by using an additional technique such as a bootstrap method \cite{Terada2017, Tibshirani2018}. 

Next, we have proposed both exact and approximated tests to cope with the combinatorial explosion of the possible block memberships. 
The null distribution (\ref{eq:defT}) of the proposed test statistic is based on the assumption that the estimated cluster memberships $\hat{g}$ is the global minimum solution of the squared residue, which is difficult to obtain in the first place. 
Although it is guaranteed that the solutions of the two SA algorithms \ref{algo:min_sr} and \ref{algo:interval} converge in probability to the globally optimal solutions of their corresponding problems, we cannot validate the outputs of these algorithms with a finite number of steps, which are used in practice. It would be more desirable to derive an \textit{exact} $p$-value of some other \textit{approximated} test. 
Stopping the SA algorithms in a constant number of steps would also affect the accuracy of the test; to find the optimal solutions, we should have checked all the patterns of possible block memberships, which increase with the observed matrix size and the number of blocks. However, if we increase the number of steps according to such a problem size, then computation of the SA algorithms will get intractable. Therefore, it would be another important direction to seek a more computationally efficient test, which mitigates this trade-off. 

Finally, the proposed test has enabled us to perform a valid statistical inference for a Gaussian LBM, where we assume that each element of an observed matrix independently follows a Gaussian distribution, given a block structure. This Gaussian assumption is crucial for deriving the exact $p$-value in the selective inference framework, as in \cite{Loftus2015}. However, in many practical datasets, including the ``MovieLens'' dataset of movie ratings \cite{Harper2015} and the dataset of document-word relationships in NeurIPS conference papers \cite{Perrone2017}, the elements of the observed matrix take discrete values, where the proposed test cannot be employed. 
So far, there has been no selective test that can be directly applied to binary data vectors from a Bernoulli distribution. To address this problem, a randomized model selection method \cite{Tian2018} has been proposed to construct an asymptotically valid selective test on binary data by adding a random noise to the statistic used for a selection event. By using such a technique, future studies should generalize the proposed test for non-Gaussian cases.


\section{Conclusion}
\label{sec:conclusion}

We developed a new selective inference method on the row and column cluster memberships of a latent block model given by a clustering algorithm based on the squared residue minimization. By considering the selective bias, which is caused by the fact that the hypothetical block structure is estimated based on a given data matrix, we constructed a valid test based on a truncated chi distribution. Since such an exact test required us to obtain the global optimal solutions of two combinatorial optimization problems, we also constructed an approximated test based on simulated annealing algorithms. Experimental results showed that the proposed exact and approximated tests worked successfully, compared to the naive tests that did not take the selective bias into account. 


\section*{Acknowledgments}

TS was partially supported by JSPS KAKENHI (18K19793, 18H03201, and 20H00576), Japan Digital Design, Fujitsu Laboratories Ltd., and JST CREST. We would like to thank Editage (\url{www.editage.com}) for English language editing. 


\clearpage
\begin{appendices}
\section{Proof that $E^{(g)} - E^{(g')} \neq O$ for $g, g' \in \mathcal{G}_{KH}$, $g \neq g'$}
\label{sec:ap_nonzero_E}

\begin{proof}
We prove that $E^{(g)} - E^{(g')} \neq O$ by contradiction. Let $g, g' \in \mathcal{G}_{KH}$ be two sets of cluster memberships, both of which have $K \times H$ blocks or less and which satisfy $g \neq g'$. Specifically, we denote the exact number of blocks of $g$ as $(K^{(g)}, H^{(g)})$. Assume that $E^{(g)} - E^{(g')} = O$ holds. Then, for all $\bm{x} \in \mathbb{R}^{np}$, we have $\bm{x}^{\top} E^{(g)} \bm{x} = \bm{x}^{\top} E^{(g')} \bm{x}$. In other words, from (\ref{eq:sr_def}), block structures $g$ and $g'$ yield the same squared residue $\sigma^2$ for any data matrix $A$. 

Let us consider a data matrix $A$ that has a block structure $g$, and all of the elements in the $(k, h)$th block are $(k - 1) H^{(g)} + h$, where $k = 1, \dots, K^{(g)}$ and $h = 1, \dots,  H^{(g)}$, as shown in Figure \ref{fig:nonzeroE}. The squared residue of such matrix $A$ and block structure $g$ is zero, and thus $\bm{x}^{\top} E^{(g)} \bm{x} = 0$ holds. However, in block structure $g'$ satisfying $g' \neq g$, there exists at least one block of matrix $A$ that contains two or more mutually different values, unless $g'$ is a refinement of $g$, which results in $\bm{x}^{\top} E^{(g')} \bm{x} > 0$. In case that $g'$ is a refinement of $g$, by considering an observed matrix $A$ with block structure $g'$ instead of $g$, we obtain $\bm{x}^{\top} E^{(g')} \bm{x} = 0$ and $\bm{x}^{\top} E^{(g)} \bm{x} > 0$ from the similar discussion. This contradicts the assumption that $\bm{x}^{\top} E^{(g)} \bm{x} = \bm{x}^{\top} E^{(g')} \bm{x}$ for all $\bm{x} \in \mathbb{R}^{np}$. 
\begin{figure}[t]
  \centering
  \includegraphics[width=0.5\hsize]{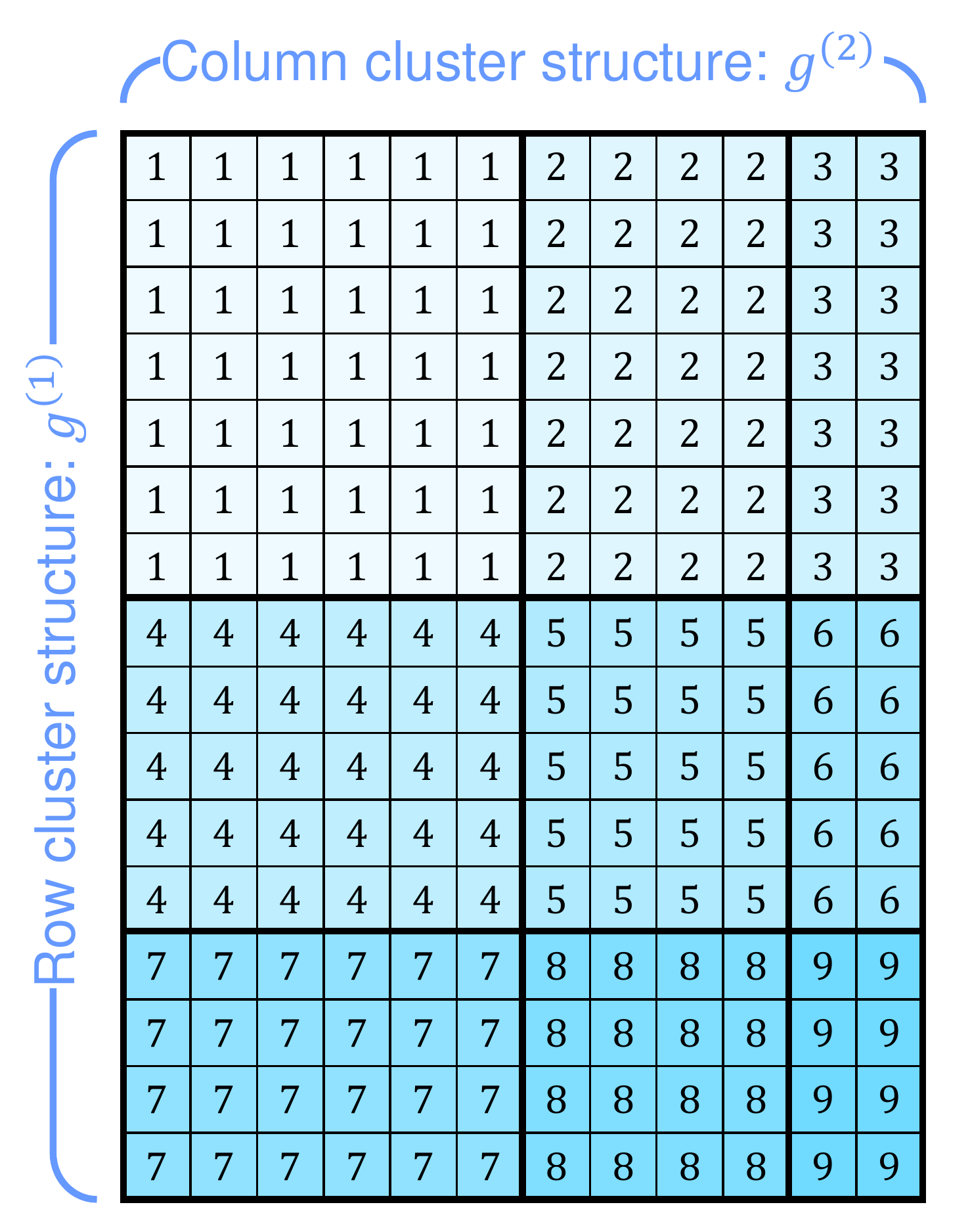}
  \caption{A data matrix $A$ whose squared residue $\sigma^2$ is zero with block structure $g$.}\vspace{5mm}
  \label{fig:nonzeroE}
\end{figure}

\end{proof}


\section{Proof that $\mathrm{rank} (E^{(\hat{g})}) = np-KH$}
\label{sec:ap_rankE}

\begin{proof}
For any cluster memberships $\hat{g}$, by simultaneously switching rows and columns with the same indices, matrix $E^{(\hat{g})}$ can be transformed into matrix $\tilde{E}^{(\hat{g})}$, which is given by
\begin{eqnarray}
\tilde{E}^{(\hat{g})} = \begin{bmatrix}
X^{(1)} & O & \cdots & \cdots & O \\
O & \ddots & \ddots & \cdots & \vdots \\
\vdots & \ddots & X^{[H (k - 1) + h]} & \ddots & \vdots \\
\vdots & \cdots & \ddots & \ddots & O \\
O & \cdots & \cdots & O & X^{(K H)} \\
\end{bmatrix}, 
\end{eqnarray}
where 
\begin{align}
\label{eq:tildeEij}
&X^{[H (k - 1) + h]} \equiv \left( X^{[H (k - 1) + h]}_{ij} \right)_{1 \leq i \leq |I_{k}| |J_{h}|, 1 \leq j \leq |I_{k}| |J_{h}|}, \nonumber \\
X^{[H (k - 1) + h]}_{ij} = \begin{cases}
1 - \frac{1}{|I_k| |J_h|} & \mathrm{if}\ i = j, \\
-\frac{1}{|I_k| |J_h|} & \mathrm{otherwise}, 
\end{cases} \nonumber \\
&i = 1, \cdots, |I_k| |J_h|, \ \ \ 
j = 1, \cdots, |I_k| |J_h|, 
\end{align}
for all $(k, h)$. 

Let $\tilde{\bm{e}}^{[H (k - 1) + h]}_i$ be the $i$th column of the $[H (k - 1) + h]$th row block of matrix $\tilde{E}^{(\hat{g})}$. For $(k, h) \neq (k', h')$, vectors $\tilde{\bm{e}}^{[H (k - 1) + h]}_i$ and $\tilde{\bm{e}}^{[H (k' - 1) + h']}_j$ are linearly independent for an arbitrary set of $(i, j)$. 

From here, we show that within the same $[H (k - 1) + h]$th block, the maximum number of linearly independent columns is $|I_k| |J_h| - 1$. First, from (\ref{eq:tildeEij}), we have 
\begin{eqnarray}
\sum_{i=1}^{|I_k| |J_h|} \tilde{\bm{e}}^{[H (k - 1) + h]}_i = \bm{0}. \ \left( \because 1 - \frac{1}{|I_k| |J_h|} + (|I_k| |J_h| - 1) \left( -\frac{1}{|I_k| |J_h|} \right) = 0 \right)
\end{eqnarray}
Therefore, the maximum number of linearly independent columns is smaller than $|I_k| |J_h|$. Next, the columns of the indices of $i = 1, \cdots, |I_k| |J_h| - 1$ are linearly independent, since 
\begin{align}
&\sum_{i=1}^{|I_k| |J_h| - 1} c_i \tilde{\bm{e}}^{[H (k - 1) + h]}_i = \bm{0}. \nonumber \\
\iff& c_1 \left( 1 - \frac{1}{|I_k| |J_h|} \right) + \sum_{i \neq 1} c_i \left( -\frac{1}{|I_k| |J_h|} \right) = 0,\ \cdots, \nonumber \\
&c_{|I_k| |J_h| - 1} \left( 1 - \frac{1}{|I_k| |J_h|} \right) + \sum_{i \neq |I_k| |J_h| - 1} c_i \left( -\frac{1}{|I_k| |J_h|} \right) = 0. \nonumber \\
\iff& c_1 + \left( -\frac{1}{|I_k| |J_h|} \right) \sum_i c_i = 0,\ \cdots,\ c_{|I_k| |J_h| - 1} + \left( -\frac{1}{|I_k| |J_h|} \right) \sum_i c_i = 0. \nonumber \\
\iff& c_1 = c_2 = \cdots = c_{|I_k| |J_h| - 1}, \nonumber \\
&c_i + \left( -\frac{1}{|I_k| |J_h|} \right) \left( |I_k| |J_h| - 1 \right) c_i = 0, \mathrm{for\ all}\ i. \nonumber \\
\iff& c_1 = c_2 = \cdots = c_{|I_k| |J_h| - 1}, \ \ \ \frac{1}{|I_k| |J_h|} c_i = 0, \mathrm{for\ all}\ i. \nonumber \\
\iff& c_1 = c_2 = \cdots = c_{|I_k| |J_h| - 1} = 0. 
\end{align}

By combining the above results, the maximum number of linearly independent columns of matrix $\tilde{E}^{(\hat{g})}$ is $\sum_{k, h} (|I_k| |J_h| - 1) = np - KH$. Since the rank of matrix $E^{(\hat{g})}$ is equal to that of matrix $\tilde{E}^{(\hat{g})}$, we finally have $\mathrm{rank} (E^{(\hat{g})}) = np-KH$. 
\end{proof}


\section{Proof that the number of mutually different patterns of block structures with \textbf{exactly} $K \times H$ blocks is lower bounded by $K^{n - K} H^{p - H}$}
\label{sec:ap_patterns}

\begin{proof}
To derive a lower bound for the number of mutually different patterns of block structures, let us define a \textbf{subset} $\mathcal{G}^{(1)}_0$ of all the possible patterns of row cluster indexing as a set of all the row cluster membership vectors satisfying the following two conditions. 
\begin{itemize}
\item $n$ rows are clustered into \textbf{exactly} $K$ clusters. 
\item It can be equivalently represented in the unique form of Figure \ref{fig:combination} for some $\tilde{n} \in \{ K, \dots, n \}$. In other words, its first $(\tilde{n} - 1)$ elements contain $1, \dots, (K-1)$ in ascending order, where $\tilde{n}$ is the minimum row index of the $K$th cluster. 
\end{itemize}
For a fixed $\tilde{n}$, there are $\frac{(\tilde{n} - 2)!}{(\tilde{n} - K)! (K - 2)!}$ possible patterns of the first $(\tilde{n} - 1)$ elements of a cluster membership vector in $\mathcal{G}^{(1)}_0$. The last $(n - \tilde{n})$ elements are arbitrary (i.e., different indexing of these elements yields mutually \textbf{not} equivalent set of row cluster memberships), which have $K^{n - \tilde{n}}$ patterns. 
Therefore, there are $\sum_{\tilde{n} = K}^n \frac{(\tilde{n} - 2)!}{(\tilde{n} - K)! (K - 2)!} K^{n - \tilde{n}}$ patterns of mutually different sets of row cluster memberships. From the same discussion for column cluster memberships, we obtain a lower bound for the total number $\kappa$ of the patterns of mutually different block structures: 
\begin{align}
\kappa \geq \left[ \sum_{\tilde{n} = K}^n \frac{(\tilde{n} - 2)!}{(\tilde{n} - K)! (K - 2)!} K^{n - \tilde{n}} \right] \left[ \sum_{\tilde{p} = H}^p \frac{(\tilde{p} - 2)!}{(\tilde{p} - H)! (H - 2)!} H^{p - \tilde{p}} \right] \geq K^{n - K} H^{p - H}, 
\end{align}
which is in the exponential order of $n$ and $p$ for a fixed number of blocks $(K, H)$. 
\begin{figure}[t]
  \centering
  \includegraphics[width=0.85\hsize]{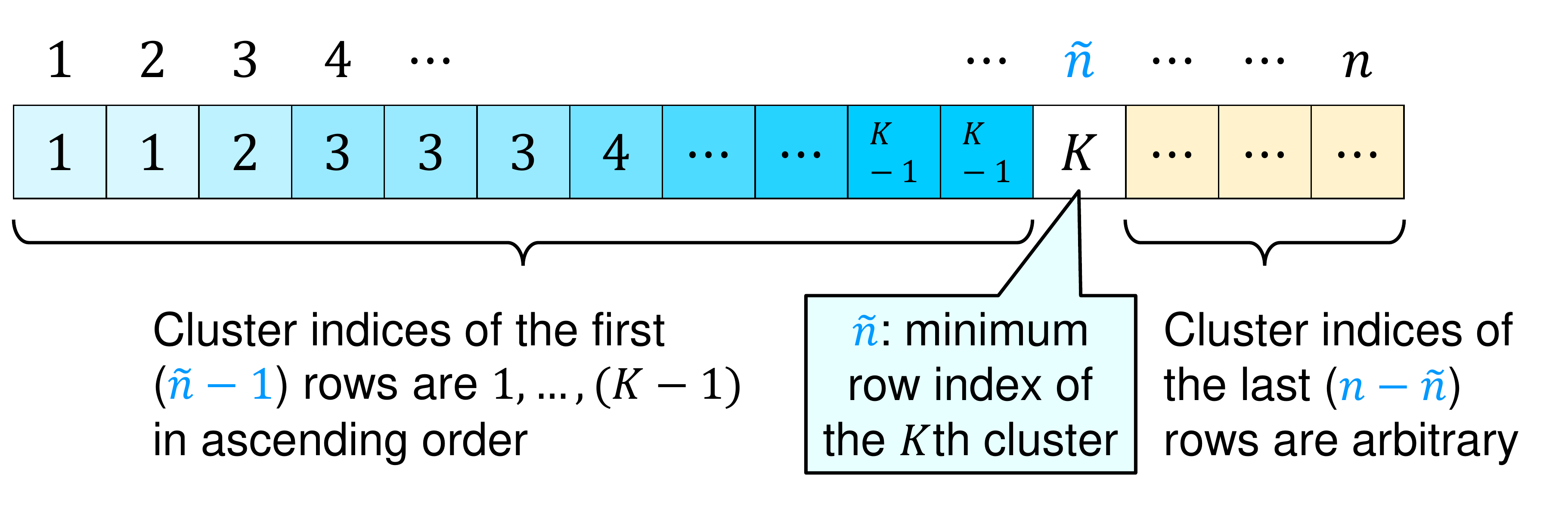}
  \caption{Unique representation of row cluster indexing where $n$ rows are clustered into \textbf{exactly} $K$ clusters. It must be noted that the set of cluster membership vectors $g^{(1)}$ that can be represented in this form is a \textbf{subset} of all the possible cluster membership vectors.}
  \label{fig:combination}
\end{figure}
\end{proof}


\section{Proof that $T_E$ and $(\bm{u}_E, \bm{z}_E)$ are mutually independent}
\label{sec:ap_indTuz}

\begin{proof}
We have assumed that $\bm{x} \sim N(\bm{\mu}_0, \sigma_0^2 I)$ and have defined that $\bm{r}_E \equiv E \bm{x}$, $T_E = \frac{\| \bm{r}_E \|_2}{\sigma_0}$, $\bm{u}_E \equiv \frac{1}{\| \bm{r}_E \|_2} \bm{r}_E$, $\bm{z}_E \equiv \bm{x} - \bm{r}_E$. Note that the following equations hold: 
\begin{align}
\label{eq:umu_zero}
\bm{u}_{E}^{\top} \bm{\mu}_0 &= \frac{1}{\| \bm{r}_E \|_2} \bm{r}_E^{\top} \bm{\mu}_0 = \frac{1}{\| \bm{r}_E \|_2} \bm{x}^{\top} E^{\top} \bm{\mu}_0 = 0. \\
\label{eq:uu_one}
\bm{u}_{E}^{\top} \bm{u}_{E} &= \frac{1}{\| \bm{r}_E \|_2^2} \bm{r}_E^{\top} \bm{r}_E = 1. 
\end{align}
To obtain the last equation, we used the assumption that $E \bm{\mu}_0 = \bm{0}$. 

Therefore, we have
\begin{align}
\label{eq:pxind}
&p(\bm{x}) = \frac{1}{\sqrt{(2\pi \sigma_0^2)^{np}}} \exp \left[ -\frac{1}{2 \sigma_0^2} \| \bm{x} - \bm{\mu}_0 \|_2^2 \right] \nonumber \\
=& \frac{1}{\sqrt{(2\pi \sigma_0^2)^{np}}} \exp \left[ -\frac{1}{2 \sigma_0^2} \| \bm{u}_{E} T_{E} \sigma_0 + \bm{z}_{E} - \bm{\mu}_0 \|_2^2 \right] \nonumber \\
=& \frac{1}{\sqrt{(2\pi \sigma_0^2)^{np}}} \exp \left\{ -\frac{1}{2 \sigma_0^2} \left[ \sigma_0^2 T_{E}^2 \bm{u}_{E}^{\top} \bm{u}_{E} + 2 \sigma_0 T_{E} \bm{u}_{E}^{\top} (\bm{z}_{E} - \bm{\mu}_0) + \| \bm{z}_{E} - \bm{\mu}_0 \|_2^2 \right] \right\} \nonumber \\
=& \frac{1}{\sqrt{(2\pi \sigma_0^2)^{np}}} \exp \left[ -\frac{1}{2} T_{E}^2 -\frac{1}{2 \sigma_0^2} \| \bm{z}_{E} - \bm{\mu}_0 \|_2^2 \right] (\because (\ref{eq:uz_zero}), (\ref{eq:umu_zero}), (\ref{eq:uu_one})) \nonumber \\
=& \frac{1}{\sqrt{(2\pi \sigma_0^2)^{np}}} \exp \left( -\frac{1}{2} T_{E}^2 -\frac{1}{2 \sigma_0^2} \| \bm{z}_E \|_2^2 \right) \exp \left[ -\frac{1}{2 \sigma_0^2} (-2 \bm{z}_E^{\top} \bm{\mu}_0 + \| \bm{\mu}_0 \|_2^2) \right]. 
\end{align}

Next, we use the following Proposition 2.1 in \cite{Shao2003}: let $p_{\theta}$ be a probability density function of an exponential family distribution with parameter $\theta$, which is given by $p_{\theta} (\bm{x}) = h(\bm{x}) \exp \{ [\bm{\eta} (\theta)]^{\top} T(\bm{x}) - \xi (\theta) \}$. Then, $T$ is complete and sufficient for $\bm{\eta}$. From this proposition and (\ref{eq:pxind}), $\bm{z}_E$ is complete and sufficient for $\bm{\mu}_0$. 

We also show that $(T_E, \bm{u}_E)$ are ancillary for $\bm{\mu}_0$. To prove this, we first show that $T_E$ and $\bm{u}_E$ are mutually independent. Let $\bm{y} \equiv \tilde{D} V \bm{x} \in \mathbb{R}^{np-KH}$, where $V$ and $\tilde{D}$ are the matrices defined in (\ref{eq:EandD}) and (\ref{eq:TE}), respectively. From (\ref{eq:DDVnorm}) and the fact that $\tilde{D} V \bm{\mu}_0 = \bm{0}$ ($\because \| \tilde{D} V \bm{\mu}_0 \|_2^2 = \bm{\mu}_0^{\top} E \bm{\mu}_0 = 0$), we have $\bm{y} \sim N(\bm{0}, \sigma_0^2 I_{np-KH})$. Therefore, we have
\begin{align}
p(\bm{y}) = \frac{1}{\sqrt{(2\pi \sigma_0^2)^{np-KH}}} \exp \left( -\frac{1}{2 \sigma_0^2} \| \bm{y} \|_2^2 \right). 
\end{align}
From Proposition 2.1 in \cite{Shao2003} and the fact that $T_E^2 = \| \bm{y} \|_2^2 / \sigma_0^2$, $T_E^2$ is complete and sufficient for $\bm{\mu}_0$. Since there is a one-to-one correspondence between $T_E$ and $T_E^2$, $T_E$ is also complete and sufficient for $\bm{\mu}_0$. Let $\tilde{\bm{u}}_E \equiv \bm{y} / \| \bm{y} \|_2$. Since $\tilde{\bm{u}}_E$ follows a uniform distribution on the surface of unit sphere and $\bm{u}_E = V^{\top} \tilde{D}^{\top} \tilde{\bm{u}}_E$, $\bm{u}_E$ is ancillary for $\bm{\mu}_0$. By combining these results, $T_E$ and $\bm{u}_E$ are mutually independent from Basu's theorem \cite{Basu1955}. Therefore, we have $p(T_E, \bm{u}_E) = p(T_E) p(\bm{u}_E)$, where $p(\cdot)$ denotes a probability density function. From the above discussion about $p(\bm{u}_E)$ and the fact that $T_E \sim \chi_{(np-KH)}$ ($\because$ (\ref{eq:T_E_chi})), $(T_E, \bm{u}_E)$ are ancillary for $\bm{\mu}_0$. 

Based on the above results, $\bm{z}_E$ and $(T_E, \bm{u}_E)$ are independent from Basu's theorem \cite{Basu1955}. Therefore, we have
\begin{align}
\label{eq:p_T_E_given_u_E_z_E}
p(T_E, \bm{u}_E, \bm{z}_E) = p(\bm{z}_E | T_E, \bm{u}_E) p(T_E, \bm{u}_E) = p(\bm{z}_E) p(T_E) p(\bm{u}_E). 
\end{align}
From (\ref{eq:p_T_E_given_u_E_z_E}) and the fact that the ranges of $\bm{z}_E$, $T_E$, and $\bm{u}_E$ do not depend on each other, we also have $p(\bm{u}_E, \bm{z}_E) = p(\bm{u}_E) p(\bm{z}_E)$ and thus
\begin{align}
\label{eq:p_T_E_given_u_z}
p(T_E | \bm{u}_E, \bm{z}_E) &= \frac{p(T_E, \bm{u}_E, \bm{z}_E)}{p(\bm{u}_E, \bm{z}_E)} 
= \frac{p(\bm{z}_E) p(T_E) p(\bm{u}_E)}{p(\bm{u}_E, \bm{z}_E)} 
= \frac{p(\bm{z}_E) p(T_E) p(\bm{u}_E)}{p(\bm{u}_E) p(\bm{z}_E)} \nonumber \\
&= p(T_E), 
\end{align}
which concludes the proof. 
\end{proof}


\section{Sensitivity analysis with respect to the cooling schedule of simulated annealing}
\label{sec:sensitivity_cooling}

We conducted sensitivity analysis of the approximated version of the proposed test with respect to the cooling schedule of SA in the realizable case (i.e., $(K, H) = (K^{\mathrm{(N)}}, H^{\mathrm{(N)}})$). Aside from the settings of the mean vector $\bm{\mu}_0$ and the cooling schedule of SA, we employed the same settings as in Section \ref{sec: exp_approx}. We tried the following five cooling schedules: $T_t = 10 \times r^t$, for all $t \geq 0$, where $r = 0.99, 0.97, 0.95, 0.93, 0.91$. As for the mean vector, we used the following setting: 
\begin{eqnarray}
\bm{\mu}_0 = 0.6 \left[ \mathrm{vec} \left( \begin{bmatrix}
0.7 & 0.55 \\
0.5 & 0.6 \\
\end{bmatrix} \right) - 0.5 \right] + 0.5.
\end{eqnarray}

Figures \ref{fig:pvalues_p_approx_cooling} and \ref{fig:pvalues_n_approx_cooling}, respectively, show the histograms of the $p$-values of the proposed and naive approximated tests for different matrix sizes and cooling schedules $r$. We also plotted (i) the test statistics $D\sqrt{r}$ of the Kolmogorov-Smirnov test \cite{Conover1999}, for the $p$-values of the proposed and naive tests, and (ii) the accuracy of the approximated clustering algorithm in Figures \ref{fig:ks_test_approx_cooling} and \ref{fig:accuracy_approx_cooling}, respectively. Figure \ref{fig:ratios_realizable_approx_cooling} shows the FPR and TPR. From Figure \ref{fig:accuracy_approx_cooling}, we see that the accuracy of the SA algorithm got lower with the smaller value of $r$. As shown in Figure \ref{fig:ratios_realizable_approx_cooling}, the FPR was low in all the settings, while the TPR of both the proposed and naive tests got lower with the larger value of $r$. A possible reason for this result is that with small $r$, the SA algorithm tends to output ``bad'' solutions (i.e., solutions that yield large squared residues) and thus both the proposed and naive tests can easily reject the null hypothesis.

\begin{figure}[p]
  \centering
  \includegraphics[width=0.9\hsize]{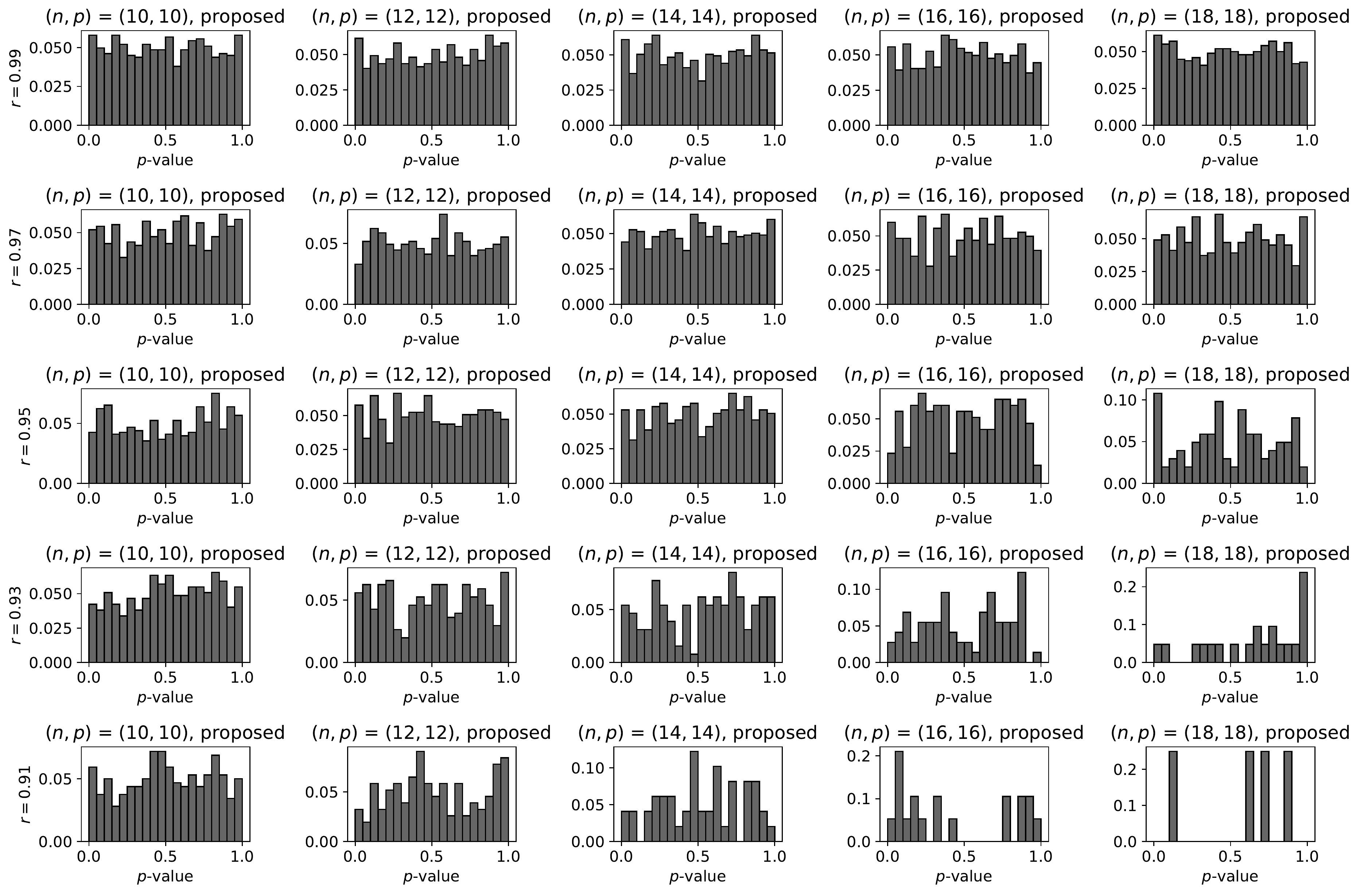}
  \caption{Histograms of $p$-values in the null case (i.e., $\hat{g} = g^{\mathrm{(N)}}$) for different matrix sizes and cooling schedules $r$, which was computed by the \textbf{approximated} version of the \textbf{proposed} test.}\vspace{3mm}
  \label{fig:pvalues_p_approx_cooling}
  \includegraphics[width=0.9\hsize]{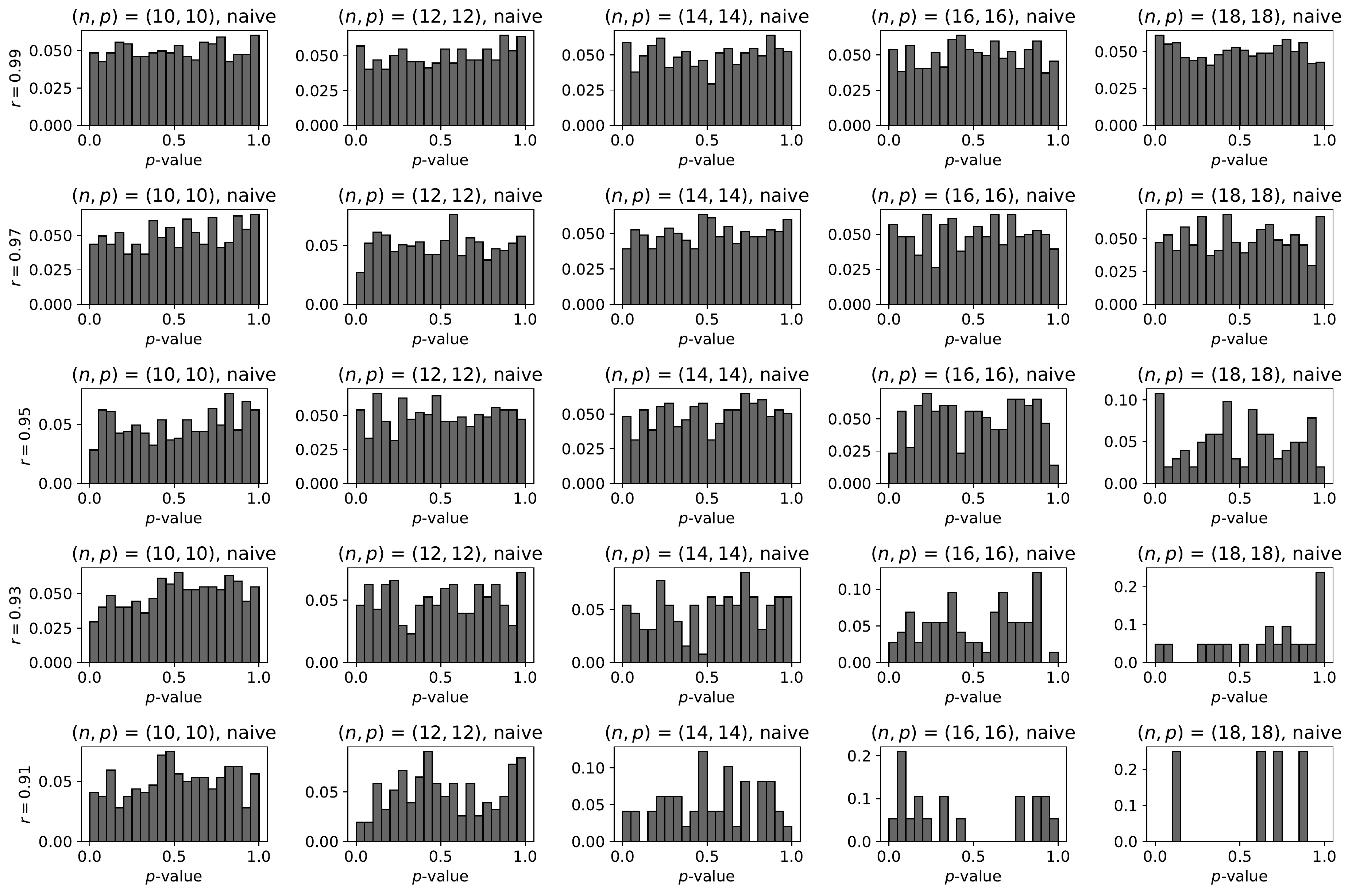}
  \caption{Histograms of $p$-values in the null case (i.e., $\hat{g} = g^{\mathrm{(N)}}$) for different matrix sizes and cooling schedules $r$, which was computed by the \textbf{approximated} version of the \textbf{naive} test (\ref{eq:pval_naive}).}
  \label{fig:pvalues_n_approx_cooling}
\end{figure}
\begin{figure}[t]
  \centering
  \includegraphics[width=0.95\hsize]{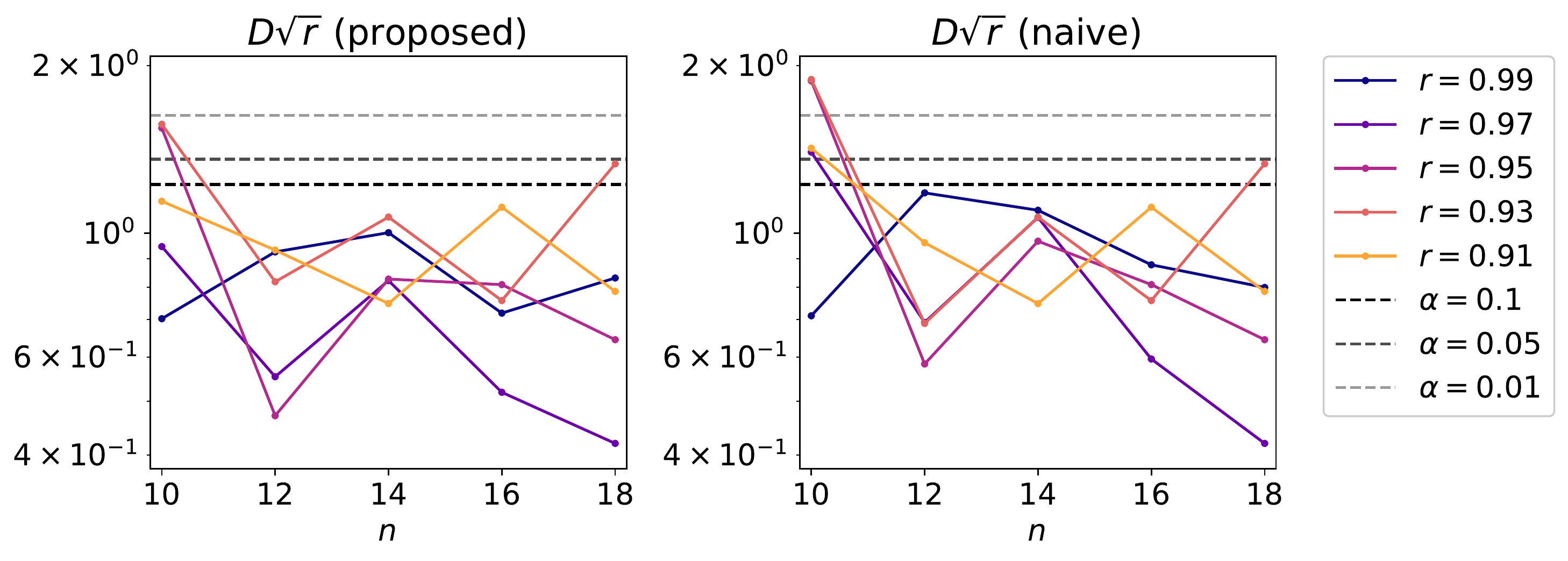}\vspace{-3mm}
  \caption{Test statistics $D\sqrt{r}$ of the Kolmogorov-Smirnov test \cite{Conover1999} for the $p$-values of the proposed (left) and naive (right) \textbf{approximated} tests under the different cooling schedule settings $r$.}\vspace{3mm}
  \label{fig:ks_test_approx_cooling}
  \includegraphics[width=0.7\hsize]{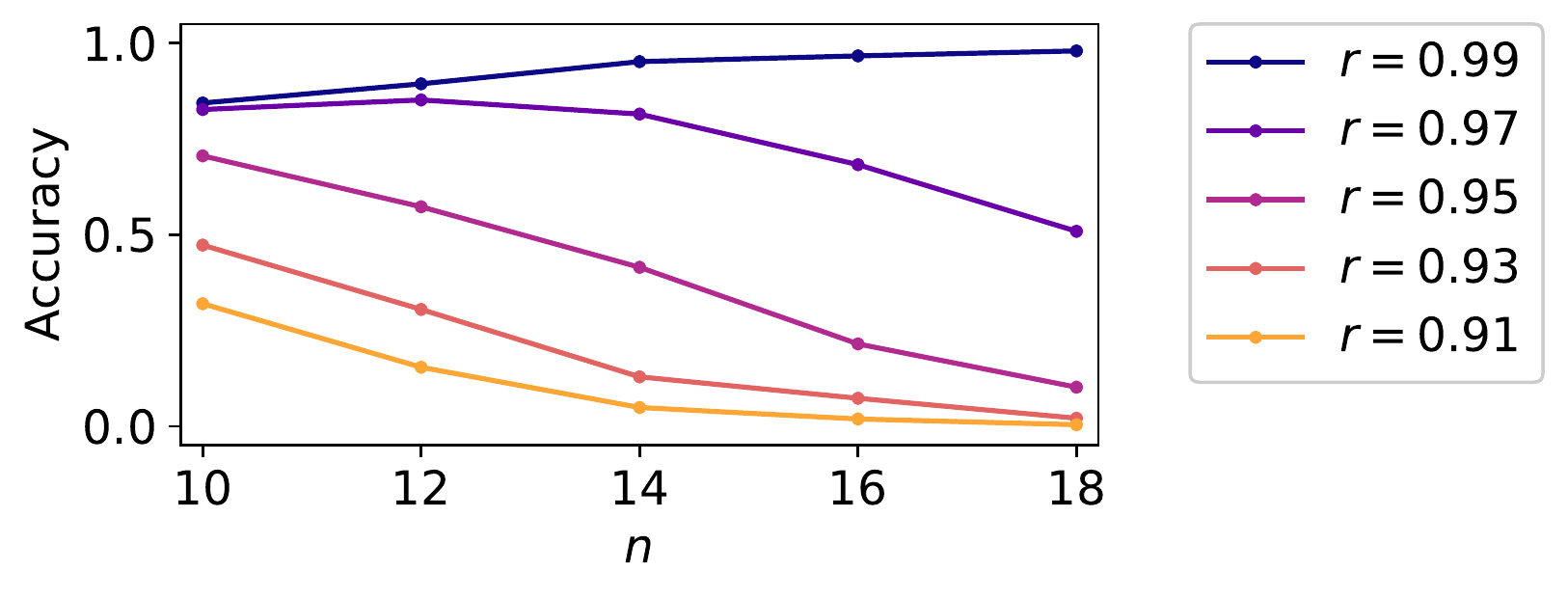}\vspace{-3mm}
  \caption{The ratio of the number of the null cases (i.e., $\hat{g} = g^{\mathrm{(N)}}$) for each setting of matrix size $(n, p)$ and cooling schedule $r$, where $\hat{g}$ is output by the \textbf{approximated} clustering algorithm in Section \ref{sec:test_approx}. For the experiment, we used the setting of $n = p$.}
  \label{fig:accuracy_approx_cooling}
\end{figure}
\begin{figure}[t]
  \centering
  \includegraphics[width=0.99\hsize]{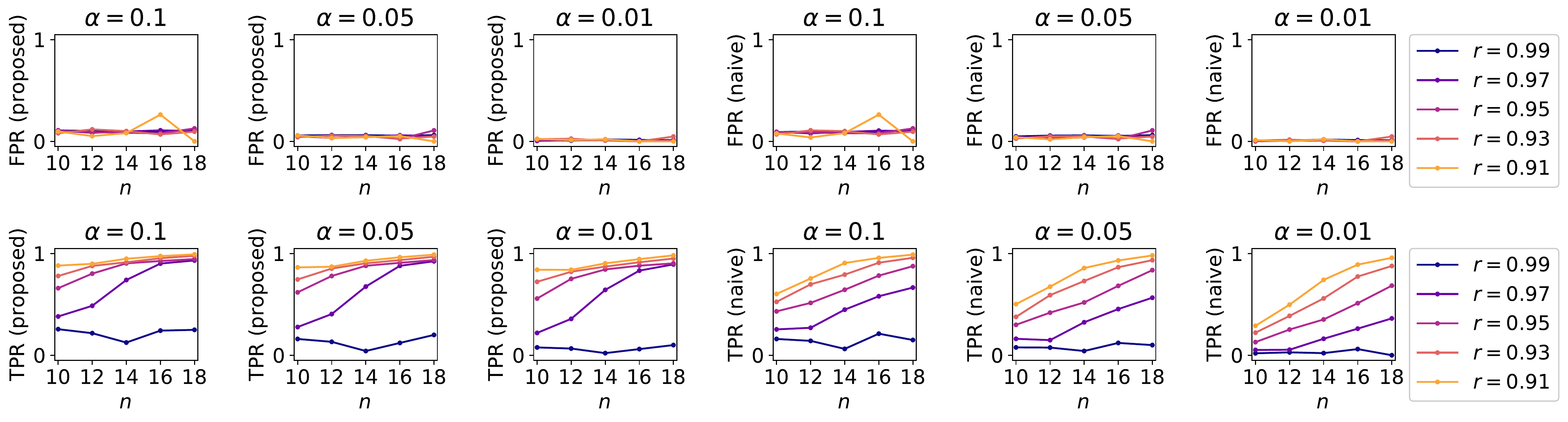}
  \caption{FPR and TPR in the realizable case with different significance rates (e.g., $\alpha = 0.1, 0.05$, and $0.01$), for the \textbf{approximated} version of the proposed (left) and naive (right) statistical tests under the different cooling schedule settings $r$.}\vspace{3mm}
  \label{fig:ratios_realizable_approx_cooling}
\end{figure}


\section{Application of computationally efficient biclustering algorithm for estimating the cluster memberships}
\label{sec:kmeans_biclustering}

The proposed approximated test based on the SA algorithm is guaranteed to converge in probability to the global minimum solution in terms of the squared residue under the conditions given in Section \ref{sec:test_approx}. However, this SA algorithm requires much computation time before convergence. As another option, we can use some computationally efficient biclustering algorithm for estimating the cluster memberships $\hat{g}$. 

There have been proposed various fast biclustering algorithms \cite{Chi2017, Lee2010, Tan2014}. Among these algorithms, we applied the biclustering algorithm that has been proposed by Tan and Witten \cite{Tan2014}, which is aim to minimize the loss function $\mathcal{L} (g, B; \bm{x})$ in (\ref{eq:log_lh}). In this algorithm, to find the local optimal solution, we iteratively estimate the block-wise mean $B$ and row and column cluster memberships, $g^{(1)}$ and $g^{(2)}$, respectively. The specific algorithm of this method is given in Algorithm \ref{algo:tan2014}. There is no theoretical guarantee that this algorithm converges to the global optimal solution in terms of the squared residue in any way, however, under the assumption that it yields a good approximation of the global optimal solution, we can use this algorithm instead of the proposed SA algorithm in Section \ref{sec:test_approx} for estimating $\hat{g}$. 

\begin{algorithm}[t]
\caption{Computationally efficient biclustering algorithm that has been proposed by Tan and Witten \cite{Tan2014}. }         
\label{algo:tan2014}
\begin{algorithmic}[1]
\REQUIRE A mean-centered observed matrix $\bar{A} = (\bar{A}_{ij})_{1 \leq i \leq n, 1 \leq j \leq p}$, $\bar{A}_{ij} = A_{ij} - \frac{1}{np} \sum_{i = 1}^n \sum_{j = 1}^p A_{ij}$. 
\ENSURE Approximated optimal set of cluster memberships $\hat{g} = (\hat{g}^{(1)}, \hat{g}^{(2)})$. 
\STATE Define that $\hat{I}_k \equiv \{ i: \hat{g}^{(1)}_i = k \}$ and $\hat{J}_h \equiv \{ j: \hat{g}^{(2)}_j = h \}$. 
\STATE Define initial row cluster memberships $\hat{g}^{(1)}$ by applying one-way k-means clustering to the rows of matrix $\bar{A}$. 
\STATE Define initial column cluster memberships $\hat{g}^{(2)}$ by applying one-way k-means clustering to the columns of matrix $\bar{A}$. 
\WHILE{\textbf{true}}
  \STATE $\hat{g}^{(1)}_0 \gets \hat{g}^{(1)}$, $\hat{g}^{(2)}_0 \gets \hat{g}^{(2)}$. 
  \STATE $\hat{B}_{kh} \gets \frac{1}{|\hat{I}_k||\hat{J}_h|} \sum_{i \in \hat{I}_k} \sum_{j \in \hat{J}_h} \bar{A}_{ij}$. 
  \FOR {$i = 1, \dots, n$}
    \STATE $\hat{g}^{(1)}_i \gets \argmin_{k \in \{1, \dots, K\}} \sum_{h = 1}^H \sum_{j \in \hat{J}_h} (\bar{A}_{ij} - \hat{B}_{kh})^2$. 
  \ENDFOR
  \STATE $\hat{B}_{kh} \gets \frac{1}{|\hat{I}_k||\hat{J}_h|} \sum_{i \in \hat{I}_k} \sum_{j \in \hat{J}_h} \bar{A}_{ij}$. 
  \FOR {$j = 1, \dots, p$}
    \STATE $\hat{g}^{(2)}_j \gets \argmin_{h \in \{1, \dots, H\}} \sum_{k = 1}^K \sum_{i \in \hat{I}_k} (\bar{A}_{ij} - \hat{B}_{kh})^2$. 
  \ENDFOR
  \IF{$\hat{g}^{(1)}_0 = \hat{g}^{(1)}$ and $\hat{g}^{(2)}_0 = \hat{g}^{(2)}$}
  \STATE \textbf{break}
  \ENDIF
\ENDWHILE
\end{algorithmic}
\end{algorithm}

We checked the behavior of the approximated test in a realizable case when using Algorithm \ref{algo:tan2014} for estimating the optimal cluster memberships $\hat{g}$. For finding the solution $\tilde{g}$ of the truncation interval, we used Algorithm \ref{algo:interval} as in the experiment in Section \ref{sec: exp_approx}. As in Section \ref{sec: exp_approx}, we generated data matrices and applied the approximated test. Aside from the method for estimating the cluster memberships, we used the same settings as in Section \ref{sec: exp_approx}. This experiment was conducted on an Intel Xeon E5-2680 v3 ($12$ cores @ $2.50$ GHz) server with $1,007$ GB of RAM. 

Figures \ref{fig:pvalues_p_approx_tan14} and \ref{fig:pvalues_n_approx_tan14}, respectively, show the histograms of the $p$-values of the proposed and naive approximated tests. We also plotted (i) the test statistics $D\sqrt{r}$ of the Kolmogorov-Smirnov test \cite{Conover1999}, for the $p$-values of the proposed and naive tests, and (ii) the accuracy of the biclustering algorithm in \cite{Tan2014} in Figures \ref{fig:ks_test_approx_tan14} and \ref{fig:accuracy_approx_tan14}, respectively. Figure \ref{fig:ratios_realizable_approx_tan14} shows the FPR and TPR. Finally, we plotted the computation time for each setting of mean vector $\bm{\mu}_0$ and matrix size $n$ in Figure \ref{fig:time}. From these figures, we see that the biclustering algorithm in \cite{Tan2014} was able to achieve accuracy comparable to or better than the proposed SA-based algorithm in less computation time.

\begin{figure}[p]
  \centering
  \includegraphics[width=0.9\hsize]{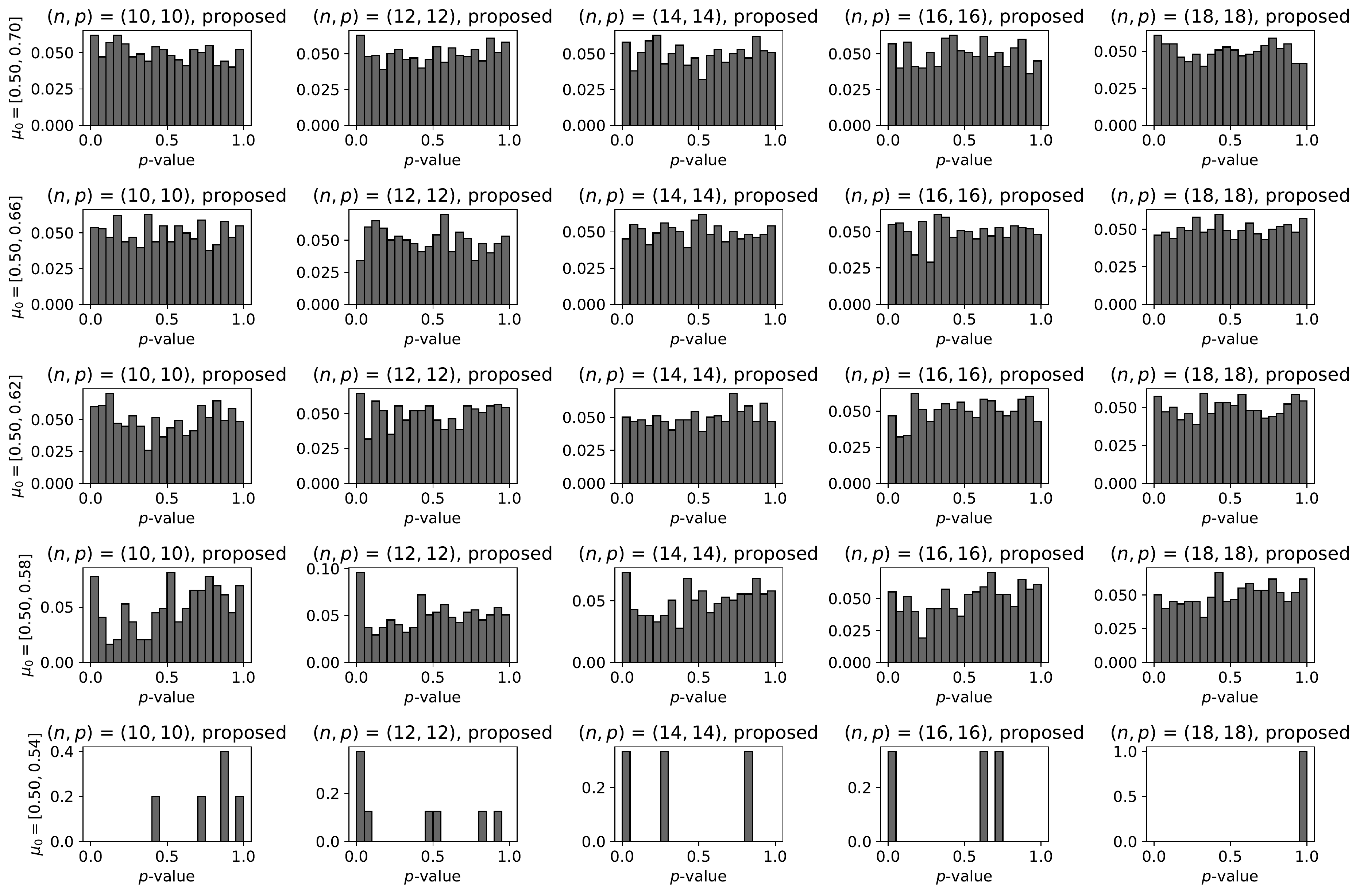}
  \caption{Histograms of $p$-values in the null case (i.e., $\hat{g} = g^{\mathrm{(N)}}$) for different matrix sizes, which was computed by the \textbf{approximated} version of the \textbf{proposed} test based on the biclustering algorithm in \cite{Tan2014}.}\vspace{3mm}
  \label{fig:pvalues_p_approx_tan14}
  \includegraphics[width=0.9\hsize]{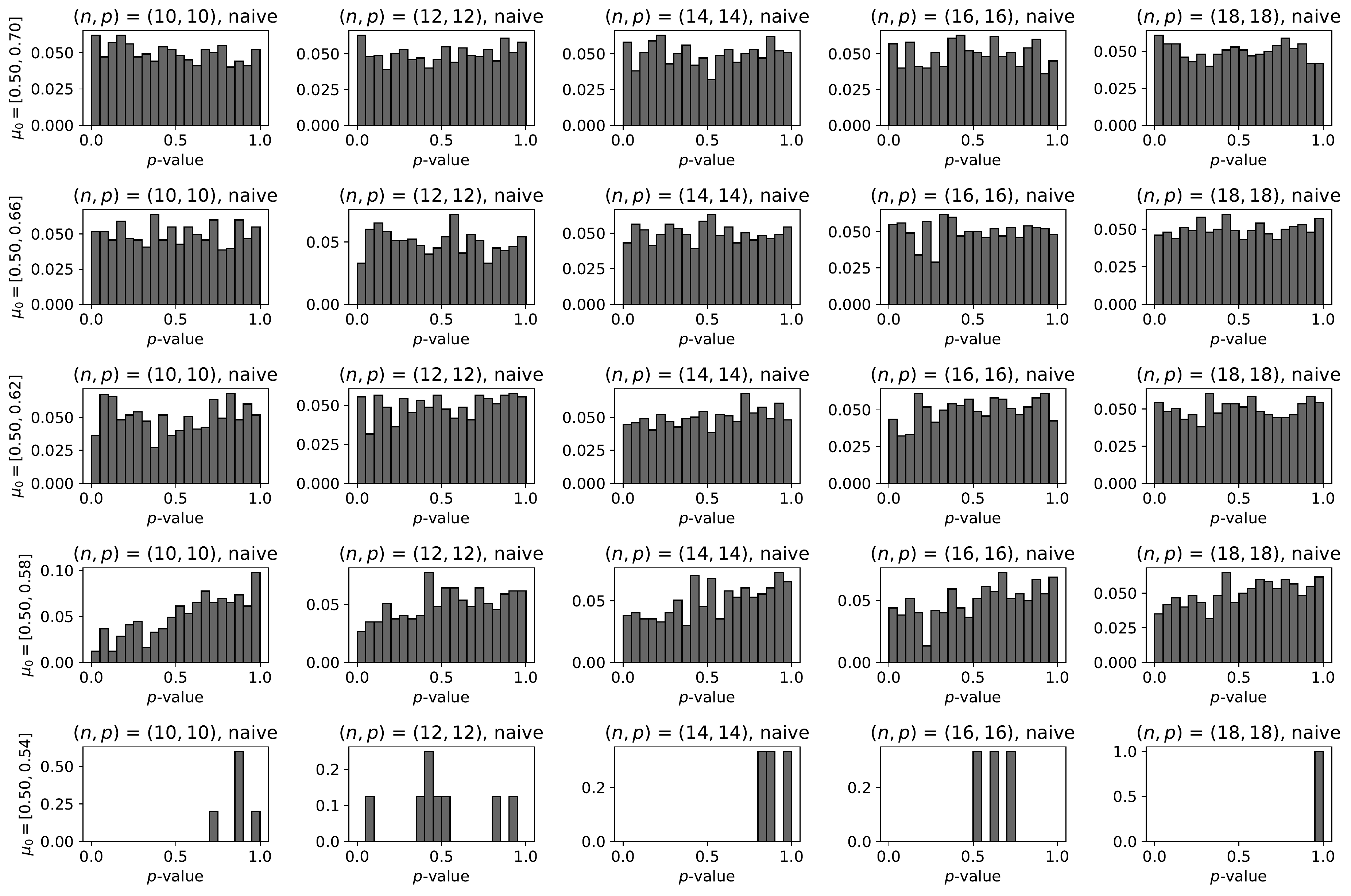}
  \caption{Histograms of $p$-values in the null case (i.e., $\hat{g} = g^{\mathrm{(N)}}$) for different matrix sizes, which was computed by the \textbf{approximated} version of the \textbf{naive} test (\ref{eq:pval_naive}) based on the biclustering algorithm in \cite{Tan2014}.}
  \label{fig:pvalues_n_approx_tan14}
\end{figure}
\begin{figure}[t]
  \centering
  \includegraphics[width=0.95\hsize]{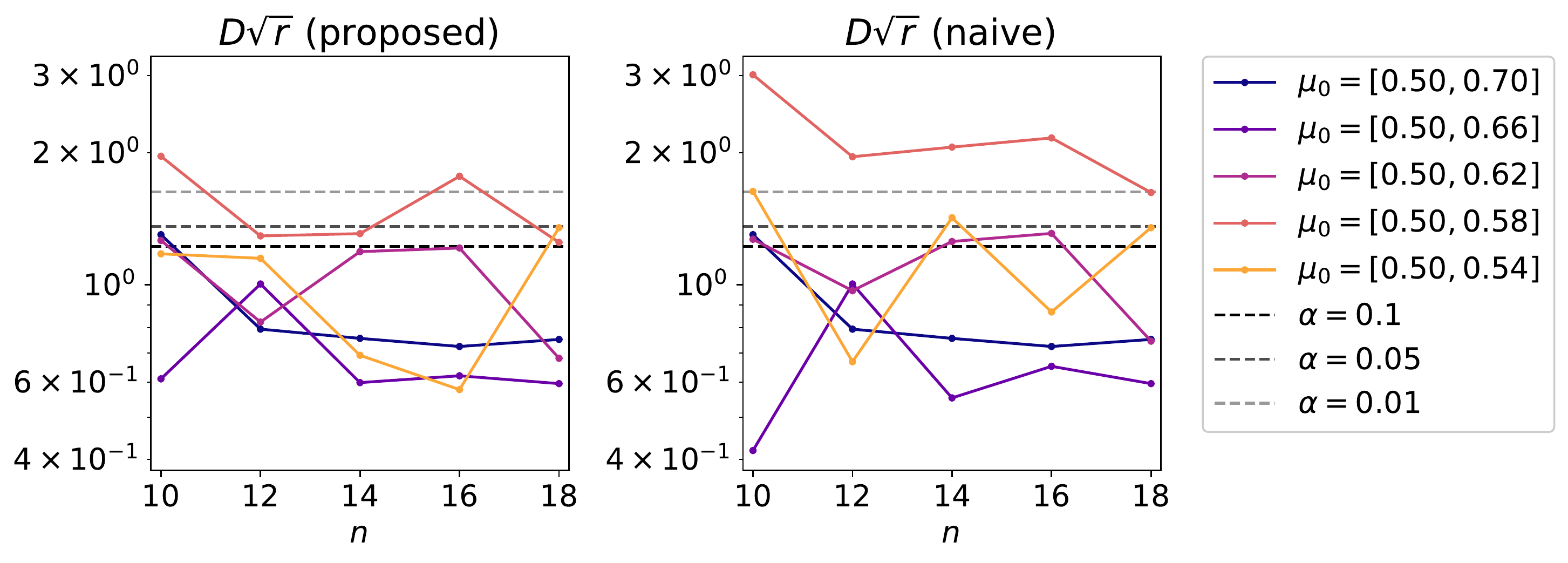}\vspace{-3mm}
  \caption{Test statistics $D\sqrt{r}$ of the Kolmogorov-Smirnov test \cite{Conover1999} for the $p$-values of the proposed (left) and naive (right) \textbf{approximated} tests based on the biclustering algorithm in \cite{Tan2014}.}\vspace{3mm}
  \label{fig:ks_test_approx_tan14}
  \includegraphics[width=0.7\hsize]{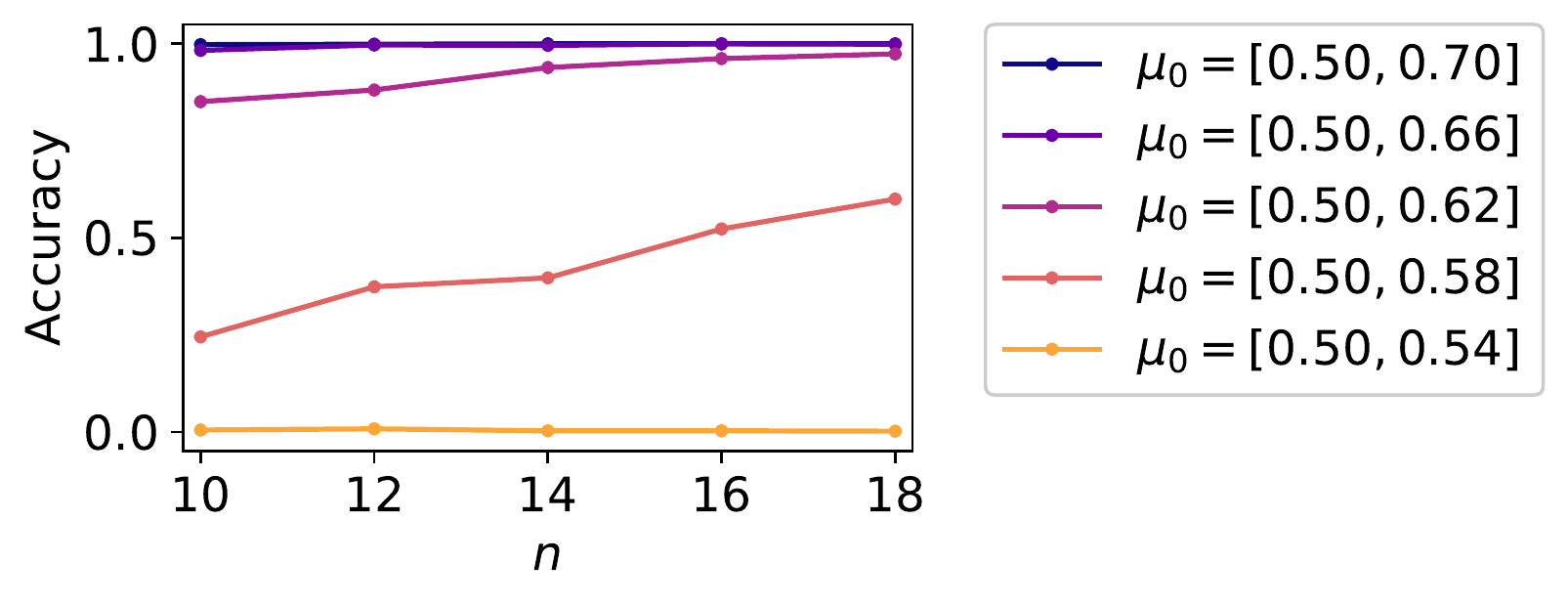}\vspace{-3mm}
  \caption{The ratio of the number of the null cases (i.e., $\hat{g} = g^{\mathrm{(N)}}$) for each setting of matrix size $(n, p)$ and mean vector $\bm{\mu}_0$, where $\hat{g}$ is output by the biclustering algorithm in \cite{Tan2014}. For the experiment, we used the setting of $n = p$.}
  \label{fig:accuracy_approx_tan14}
\end{figure}
\begin{figure}[t]
  \centering
  \includegraphics[width=0.99\hsize]{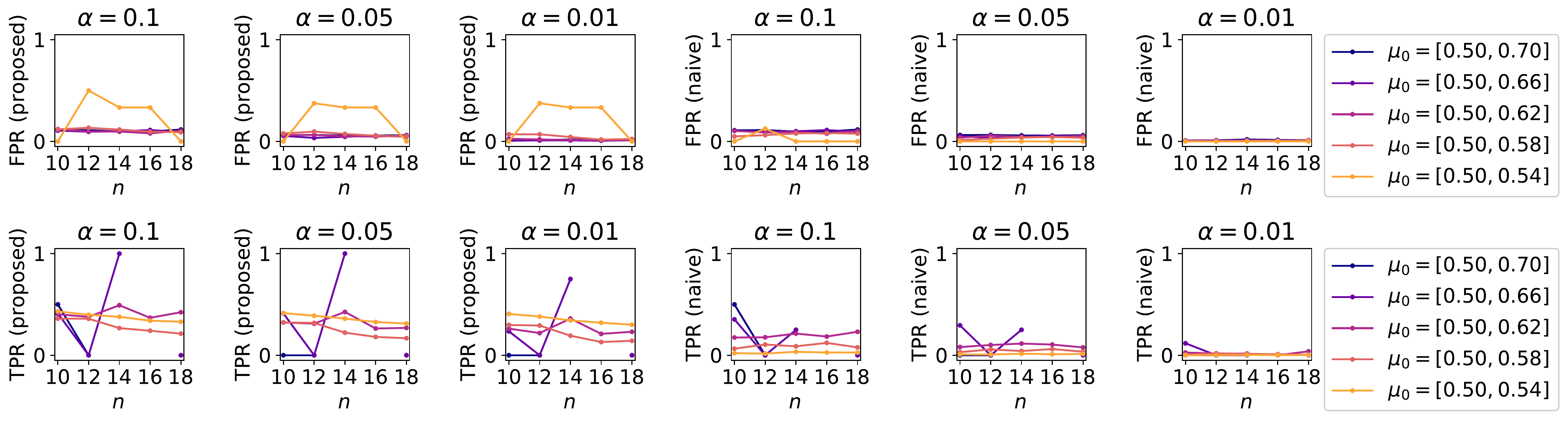}
  \caption{FPR and TPR in the realizable case with different significance rates (e.g., $\alpha = 0.1, 0.05$, and $0.01$), for the \textbf{approximated} version of the proposed (left) and naive (right) statistical tests based on the biclustering algorithm in \cite{Tan2014}.}\vspace{3mm}
  \label{fig:ratios_realizable_approx_tan14}
  \includegraphics[width=0.99\hsize]{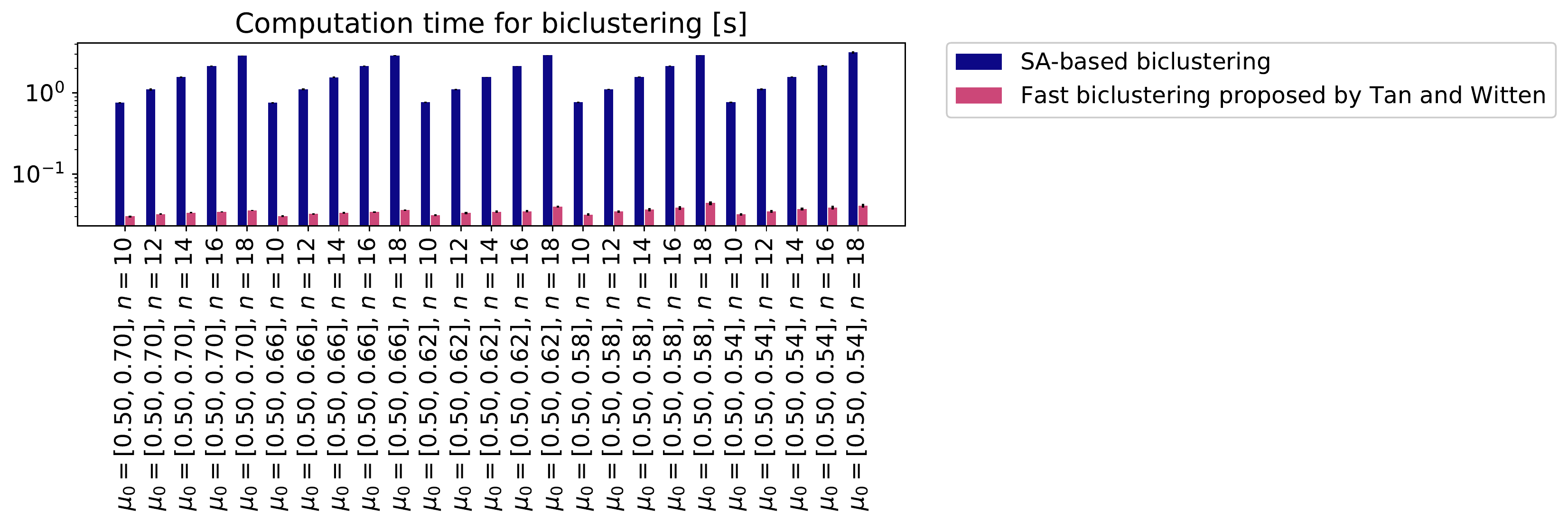}
  \caption{Mean computation time for estimating the cluster memberships $\hat{g}$ based on the proposed SA algorithm and fast biclustering algorithm in \cite{Tan2014}. The error bars indicate the sample standard deviation of the results for $1000$ trials.}\vspace{3mm}
  \label{fig:time}
\end{figure}


\section{Null distribution of test statistic with unknown variance $\sigma_0^2$}
\label{sec:unknown_sigma0}

We derive the null distribution of a new test statistic in case that variance $\sigma_0^2$ is unknown based on a general framework that has been proposed in \cite{Loftus2015}.

\begin{theorem}
\label{thm:TFdist}
Under the null hypothesis, we have
\begin{eqnarray}
\label{eq:defT_sgm_un}
T \equiv \frac{1}{c} \frac{(\| \bm{r} \|_2^2 - \| \bm{r}_1 \|_2^2)}{\| \bm{r}_1 \|_2^2} = \frac{1}{c} \frac{\| \bm{r}_2 \|_2^2}{\| \bm{r}_1 \|_2^2}, \ \ \ 
T | \{ \hat{g}, \bm{u}_1, \bm{u}_2, \bm{z}, \| \bm{r} \|_2 \} \sim F_{d_1, d_2 | \hat{M}^{(\hat{g})}}, 
\end{eqnarray}
where $\| \cdot \|_2$ and $F_{d_1, d_2 | M}$, respectively, denote the Euclid norm and the truncated F distribution with parameters $d_1$ and $d_2$ and with truncation interval of $M$ and
\begin{align}
\label{eq:defruz_sgm_un}
&d_1 \equiv |I_1| |J_1| - 1, \ \ \ d_2 \equiv np - KH - |I_1| |J_1| + 1, \ \ \ c \equiv \frac{d_1}{d_2}, \nonumber \\
&\mathcal{I}^{(k, h)} \equiv \{ n(j-1)+i: i \in I_k, j \in J_h \}, \nonumber \\
&Q^{(\hat{g})} \equiv (Q^{(\hat{g})}_{ij})_{1 \leq i \leq np, 1 \leq j \leq np}, \ \ \ Q^{(\hat{g})}_{ij} = \begin{cases}
E^{(\hat{g})}_{ij} & \mathrm{if}\ [i \in \mathcal{I}^{(1, 1)}] \cap [j \in \mathcal{I}^{(1, 1)}], \\
0 & \mathrm{otherwise}, \\
\end{cases} \nonumber \\
&\bar{Q}^{(\hat{g})} \equiv E^{(\hat{g})} - Q^{(\hat{g})}, \nonumber \\
&\bm{r}_1 \equiv Q^{(\hat{g})} \bm{x}, \ \ \ \bm{r}_2 \equiv \bar{Q}^{(\hat{g})} \bm{x}, \ \ \ \bm{r} \equiv E^{(\hat{g})} \bm{x}, \nonumber \\
&\bm{u}_1 \equiv \frac{1}{\| \bm{r}_1 \|_2} \bm{r}_1, \ \ \ \bm{u}_2 \equiv \frac{1}{\| \bm{r}_2 \|_2} \bm{r}_2, \nonumber \\
&\bm{u} \equiv \frac{1}{\| \bm{r} \|_2} \bm{r} = \frac{1}{\sqrt{cT + 1}} \bm{u}_1 + \sqrt{\frac{cT}{cT + 1}} \bm{u}_2, \nonumber \\
&\bm{z} \equiv \bm{x} - \bm{r}, \nonumber \\
&\hat{M}^{(\hat{g})} \equiv \left\{ t \geq 0: \hat{g} \in \hat{\mathcal{M}} \left[ \| \bm{r} \|_2 \left( \frac{1}{\sqrt{ct + 1}} \bm{u}_1 + \sqrt{\frac{ct}{ct + 1}} \bm{u}_2 \right) + \bm{z} \right] \right\}. 
\end{align}
\end{theorem}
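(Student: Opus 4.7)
The plan is to adapt the chi-distribution argument of Theorem \ref{thm:Tchidist} by splitting the projection $E^{(\hat{g})}$ into two orthogonal sub-projections $Q^{(\hat{g})}$ and $\bar{Q}^{(\hat{g})}$, and forming an $F$-type ratio in which the unknown $\sigma_0^2$ cancels. The role formerly played by the single chi statistic is now split between two independent chi pieces, and the role of the conditioning on $(\bm{u}, \bm{z})$ is replaced by conditioning on $(\bm{u}_1, \bm{u}_2, \bm{z}, \|\bm{r}\|_2)$.

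First I would verify the algebraic structure. By definition $Q^{(\hat{g})}$ keeps only the entries of $E^{(\hat{g})}$ indexed by $\mathcal{I}^{(1,1)} \times \mathcal{I}^{(1,1)}$; restricted to that sub-block, $E^{(\hat{g})}$ equals $I_{|I_1||J_1|} - (1/|I_1||J_1|)\mathbf{1}\mathbf{1}^{\top}$, which is a rank-$d_1$ projection. Hence $Q^{(\hat{g})}$ is an orthogonal projection of rank $d_1$; combined with Appendix \ref{sec:ap_rankE}, $\bar{Q}^{(\hat{g})} = E^{(\hat{g})} - Q^{(\hat{g})}$ is an orthogonal projection of rank $d_2$; and the two projections annihilate each other, $Q^{(\hat{g})} \bar{Q}^{(\hat{g})} = 0$. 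Under the null, $\bm{\mu}_0$ is block-constant, so its restriction to $\mathcal{I}^{(1,1)}$ is a constant vector and $Q^{(\hat{g})}\bm{\mu}_0 = \bm{0}$; together with $E^{(\hat{g})}\bm{\mu}_0 = \bm{0}$ this also gives $\bar{Q}^{(\hat{g})}\bm{\mu}_0 = \bm{0}$.

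Second, I would apply the chi argument embedded in the proof of Theorem \ref{thm:Tchidist} separately to each sub-projection, yielding $\|\bm{r}_1\|_2/\sigma_0 \sim \chi_{d_1}$ and $\|\bm{r}_2\|_2/\sigma_0 \sim \chi_{d_2}$. Orthogonality $Q^{(\hat{g})} \bar{Q}^{(\hat{g})} = 0$ makes $\bm{r}_1$ and $\bm{r}_2$ jointly Gaussian with zero cross-covariance, so the two squared norms are independent; taking the ratio, $\sigma_0^2$ cancels and the unconditional law of $T$ is $F_{d_1, d_2}$. Third, I would promote this to the conditional statement using a Basu-style argument extending Appendix \ref{sec:ap_indTuz}: $\bm{z} = (I - E^{(\hat{g})})\bm{x}$ remains complete and sufficient for $\bm{\mu}_0$; $\bm{u}_1$ and $\bm{u}_2$ are uniform on the unit spheres of the column spaces of $Q^{(\hat{g})}$ and $\bar{Q}^{(\hat{g})}$, hence ancillary; and the chi/F decomposition ensures that the ratio $T$ is independent of the total norm $\|\bm{r}\|_2$. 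Combining these by Basu's theorem gives independence of $T$ from $(\bm{u}_1, \bm{u}_2, \bm{z}, \|\bm{r}\|_2)$, so conditioning on the latter preserves the $F_{d_1, d_2}$ law.

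Finally, I would add the selection event. Given $(\bm{u}_1, \bm{u}_2, \bm{z}, \|\bm{r}\|_2)$, the relations $\|\bm{r}_1\|_2^2 = \|\bm{r}\|_2^2/(cT+1)$ and $\|\bm{r}_2\|_2^2 = cT\|\bm{r}\|_2^2/(cT+1)$ recover $\bm{r}_1$ and $\bm{r}_2$, hence $\bm{x} = \bm{r}_1 + \bm{r}_2 + \bm{z}$ becomes the deterministic function of $T$ appearing inside $\hat{M}^{(\hat{g})}$ in (\ref{eq:defruz_sgm_un}). Therefore the selection event $\hat{g} \in \hat{\mathcal{M}}(\bm{x})$ is precisely the restriction $T \in \hat{M}^{(\hat{g})}$, producing the truncated $F_{d_1, d_2 \mid \hat{M}^{(\hat{g})}}$ law. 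The hard part will be step three: compared to Theorem \ref{thm:Tchidist} an extra scalar $\|\bm{r}\|_2$ must be handled, so the decoupling of the $F$ ratio from the total norm — while standard for independent chi-squared variates — has to be combined carefully with the ancillarity/sufficiency argument so that all four conditioning variables are simultaneously independent of $T$. Once this independence is established, truncation by the selection event is immediate as in Theorem \ref{thm:Tchidist}.
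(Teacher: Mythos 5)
Your proposal is correct and follows essentially the same route as the paper: split $E^{(\hat{g})}$ into the orthogonal, disjointly supported projections $Q^{(\hat{g})}$ and $\bar{Q}^{(\hat{g})}$ of ranks $d_1$ and $d_2$, obtain two independent chi variables whose ratio is $F_{d_1,d_2}$ with $\sigma_0^2$ cancelling, establish independence from the conditioning variables via completeness/ancillarity and Basu's theorem, and then truncate by the selection event. The one point you flag as ``the hard part''---simultaneously decoupling $T$ from all four of $(\bm{u}_1,\bm{u}_2,\bm{z},\|\bm{r}\|_2)$---is exactly where the paper's bookkeeping differs slightly from yours and is worth noting: because $\sigma_0^2$ is unknown, the paper writes the Gaussian density as an exponential family in the natural parameter $\bm{\eta}=\bigl(-\tfrac{1}{2\sigma_0^2},\ \tfrac{1}{\sigma_0^2}\bm{\mu}_0^{\top}\bigr)^{\top}$, so that the \emph{pair} $(\bm{z},\|\bm{r}\|_2)$ (equivalently $(\|\bm{r}\|_2^2+\|\bm{z}\|_2^2,\bm{z})$) is jointly complete and sufficient, while $(T,\bm{u}_1,\bm{u}_2)$ is shown to be jointly ancillary; a single application of Basu's theorem then yields the joint independence in one step, rather than combining the classical independence of the $F$-ratio from the total sum of squares with a separate sufficiency argument for $\bm{z}$ alone. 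Your alternative (pairwise independences assembled by factorizing the joint density) can be made to work, but you should carry it out explicitly; also note that treating $\bm{z}$ alone as complete sufficient ``for $\bm{\mu}_0$'' is only valid if you fix $\sigma_0^2$ and argue for each fixed value separately, since in the unknown-variance family $\bm{z}$ by itself is not sufficient. Your explicit identification of the $(1,1)$-block of $E^{(\hat{g})}$ as $I_{|I_1||J_1|}-\tfrac{1}{|I_1||J_1|}\bm{1}\bm{1}^{\top}$, and the resulting rank and annihilation properties of $Q^{(\hat{g})}$ and $\bar{Q}^{(\hat{g})}$, is a correct and slightly more detailed version of what the paper defers to its Appendix on $\mathrm{rank}(E^{(\hat{g})})$.
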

\begin{proof}
Let $Q$ and $\bar{Q}$ be fixed $np \times np$ projection matrices with the ranks of $d_1$ and $d_2$, respectively, satisfying the following conditions: 
\begin{itemize}
\item $Q \bm{\mu}_0 = \bar{Q} \bm{\mu}_0 = \bm{0}$. 
\item There exists a set of row and column indices $I \subseteq \{ 1, \dots, np \}$ such that $Q_{ij} = 0$ if $i \notin I$ or $j \notin I$ holds and $\bar{Q}_{ij} = 0$ if $i \notin \{ 1, \dots, np \} \setminus I$ or $j \notin \{ 1, \dots, np \} \setminus I$ holds. 
\end{itemize}
It must be noted that $Q \bm{x}$ and $\bar{Q} \bm{x}$ are mutually independent from the second condition. 

Based on matrices $Q$ and $\bar{Q}$, we use the following notations: 
\begin{align}
\label{eq:def_T_u_Z_sgm_un}
&E \equiv Q + \bar{Q}, \nonumber \\
&\bm{r}_Q \equiv Q \bm{x}, \ \ \ \bm{r}_{\bar{Q}} \equiv \bar{Q} \bm{x}, \ \ \ \bm{r}_E \equiv E \bm{x}, \nonumber \\
&T_E \equiv \frac{1}{c} \frac{(\| \bm{r}_E \|_2^2 - \| \bm{r}_Q \|_2^2)}{\| \bm{r}_Q \|_2^2} = \frac{1}{c} \frac{\| \bm{r}_{\bar{Q}} \|_2^2}{\| \bm{r}_Q \|_2^2}, \nonumber \\
&\bm{u}_Q \equiv \frac{1}{\| \bm{r}_Q \|_2} \bm{r}_Q, \ \ \ \bm{u}_{\bar{Q}} \equiv \frac{1}{\| \bm{r}_{\bar{Q}} \|_2} \bm{r}_{\bar{Q}}, \nonumber \\
&\bm{u}_E \equiv \frac{1}{\| \bm{r}_E \|_2} \bm{r}_E = \frac{1}{\sqrt{cT_E + 1}} \bm{u}_Q + \sqrt{\frac{cT_E}{cT_E + 1}} \bm{u}_{\bar{Q}}, \nonumber \\
&\bm{z}_E \equiv \bm{x} - \bm{r}_E. 
\end{align}

From the similar discussion as in the proof of Theorem 3.1, we have
\begin{align}
\label{eq:chi_r_Q_barQ}
\frac{\| \bm{r}_Q \|_2}{\sigma_0} \sim \chi_{d_1}, \ \ \ 
\frac{\| \bm{r}_{\bar{Q}} \|_2}{\sigma_0} \sim \chi_{d_2}.
\end{align}
Since $\bm{r}_Q$ and $\bm{r}_{\bar{Q}}$ are mutually independent, so do $\| \bm{r}_Q \|_2$ and $\| \bm{r}_{\bar{Q}} \|_2$. By combining this fact, (\ref{eq:def_T_u_Z_sgm_un}), and (\ref{eq:chi_r_Q_barQ}), we have
\begin{eqnarray}
\label{eq:T_E_F}
T_E \sim F_{d_1, d_2}. 
\end{eqnarray}

Here, we show that $T_E$ and $(\bm{u}_Q, \bm{u}_{\bar{Q}}, \bm{z}_E, \| \bm{r}_E \|_2)$ are mutually independent. From the assumption, we have
\begin{align}
\label{eq:pxind_sgm_un}
&p(\bm{x}) = \frac{1}{\sqrt{(2\pi \sigma_0^2)^{np}}} \exp \left[ -\frac{1}{2 \sigma_0^2} (\bm{x} - \bm{\mu}_0)^{\top} (\bm{x} - \bm{\mu}_0) \right]. \nonumber \\
=& \frac{1}{\sqrt{(2\pi \sigma_0^2)^{np}}} \exp \left[ -\frac{1}{2 \sigma_0^2} (\| \bm{r}_E \|_2 \bm{u}_E + \bm{z}_E - \bm{\mu}_0)^{\top} (\| \bm{r}_E \|_2 \bm{u}_E + \bm{z}_E - \bm{\mu}_0) \right] \nonumber \\
=& \frac{1}{\sqrt{(2\pi \sigma_0^2)^{np}}} \exp \left[ -\frac{1}{2 \sigma_0^2} (\| \bm{r}_E \|_2^2 \| \bm{u}_E \|_2^2 + \| \bm{z}_E - \bm{\mu}_0 \|_2^2) \right]\ \ \ (\because \bm{u}_E^{\top} \bm{z}_E = \bm{u}_E^{\top} \bm{\mu}_0 = 0) \nonumber \\
=& \frac{1}{\sqrt{(2\pi \sigma_0^2)^{np}}} \exp \left[ -\frac{1}{2 \sigma_0^2} (\| \bm{r}_E \|_2^2 + \| \bm{z}_E - \bm{\mu}_0 \|_2^2) \right]\ \ \ (\because \| \bm{u}_E \|_2 = 1) \nonumber \\
=& \exp \left[ -\frac{1}{2 \sigma_0^2} (\| \bm{r}_E \|_2^2 + \| \bm{z}_E \|_2^2 -2 \bm{z}_E^{\top} \bm{\mu}_0) -\frac{\| \bm{\mu}_0 \|_2^2}{2 \sigma_0^2} - \frac{np}{2} \log \left( 2\pi \sigma_0^2 \right) \right]. 
\end{align}
By using the notation of $\bm{\eta} \equiv \begin{bmatrix} -\frac{1}{2 \sigma_0^2} & \frac{1}{\sigma_0^2} \bm{\mu}_0^{\top} \end{bmatrix}^{\top}$, we have
\begin{align}
-\frac{1}{2 \sigma_0^2} (\| \bm{r}_E \|_2^2 + \| \bm{z}_E \|_2^2 -2 \bm{z}_E^{\top} \bm{\mu}_0) = \bm{\eta}^{\top} \begin{bmatrix}
\| \bm{r}_E \|_2^2 + \| \bm{z}_E \|_2^2 & \bm{z}_E^{\top}
\end{bmatrix}^{\top}, 
\end{align}
and thus $(\| \bm{r}_E \|_2^2 + \| \bm{z}_E \|_2^2, \bm{z}_E^{\top})$ are complete and sufficient for $\bm{\eta}$ from Proposition 2.1 in \cite{Shao2003}. Since there is a one-to-one correspondence between $(\bm{z}_E, \| \bm{r}_E \|_2)$ and $(\| \bm{r}_E \|_2^2 + \| \bm{z}_E \|_2^2, \bm{z}_E^{\top})$, $(\bm{z}_E, \| \bm{r}_E \|_2)$ are also complete and sufficient for $\bm{\eta}$. 

Next, we show that $(T_E, \bm{u}_Q, \bm{u}_{\bar{Q}})$ are ancillary for $\bm{\eta}$. To prove this, we first show that $T_E$ and $(\bm{u}_Q, \bm{u}_{\bar{Q}})$ are mutually independent. We use the following notations: 
\begin{align}
\bm{y}_Q \equiv \tilde{D}_Q V_Q \bm{x} \in \mathbb{R}^{d_1}, \ \ \ 
\bm{y}_{\bar{Q}} \equiv \tilde{D}_{\bar{Q}} V_{\bar{Q}} \bm{x} \in \mathbb{R}^{d_2}, \ \ \ 
\bm{y}_E \equiv \tilde{D}_E V_E \bm{x} \in \mathbb{R}^{KH}, 
\end{align}
where $Q = V_Q^{\top} D_Q V_Q$, $\bar{Q} = V_{\bar{Q}}^{\top} D_{\bar{Q}} V_{\bar{Q}}$, and $I-E = V_E^{\top} D_E V_E$ are singular value decompositions of matrices $Q$, $\bar{Q}$, and $I-E$, respectively, and
\begin{align}
&\tilde{D}_Q \equiv \begin{bmatrix} I_{d_1} & O_{(d_1, np - d_1)} \end{bmatrix} \in \mathbb{R}^{d_1 \times np}, \nonumber \\
&\tilde{D}_{\bar{Q}} \equiv \begin{bmatrix} I_{d_2} & O_{(d_2, np - d_2)} \end{bmatrix} \in \mathbb{R}^{d_2 \times np}, \nonumber \\
&\tilde{D}_E \equiv \begin{bmatrix} I_{KH} & O_{(KH, np - KH)} \end{bmatrix} \in \mathbb{R}^{KH \times np}. 
\end{align}
It must be noted that we have
\begin{align}
\bm{y}_Q \sim N(\bm{0}, \sigma_0^2 I_{d_1}), \ \ \ 
\bm{y}_{\bar{Q}} \sim N(\bm{0}, \sigma_0^2 I_{d_2}), \ \ \ 
\bm{y}_E \sim N(\tilde{D}_E V_E \bm{\mu}_0, \sigma_0^2 I_{KH}). 
\end{align}
From (\ref{eq:pxind_sgm_un}), we have
\begin{align}
p(\bm{x}) &= p(\bm{y}_Q, \bm{y}_{\bar{Q}}, \bm{y}_E) \nonumber \\
&= \frac{1}{\sqrt{(2\pi \sigma_0^2)^{np}}} \exp \left[ -\frac{1}{2 \sigma_0^2} (\| \bm{y}_Q \|_2^2 + \| \bm{y}_{\bar{Q}} \|_2^2 + \| \bm{y}_E - \tilde{D}_E V_E \bm{\mu}_0 \|_2^2) \right] \nonumber \\
&= p(\bm{y}_Q) p(\bm{y}_{\bar{Q}}) p(\bm{y}_E), 
\end{align}
which results in that $\bm{y}_Q$, $\bm{y}_{\bar{Q}}$, and $\bm{y}_E$ are independent. Since $\bm{r}_Q = V_Q^{\top} \tilde{D}_Q^{\top} \bm{y}_Q$, $\bm{r}_{\bar{Q}} = V_{\bar{Q}}^{\top} \tilde{D}_{\bar{Q}}^{\top} \bm{y}_{\bar{Q}}$, and $\bm{z}_E = V_E^{\top} \tilde{D}_E^{\top} \bm{y}_E$ hold, $\bm{r}_Q$, $\bm{r}_{\bar{Q}}$, and $\bm{z}_E$ are also independent. Based on a similar discussion as in Appendix \ref{sec:ap_indTuz}, $\| \bm{r}_Q \|_2$ and $\bm{u}_Q$ are mutually independent, and so are $\| \bm{r}_{\bar{Q}} \|_2$ and $\bm{u}_{\bar{Q}}$. Therefore, we have
\begin{align}
&p(\| \bm{r}_Q \|_2, \bm{u}_Q, \| \bm{r}_{\bar{Q}} \|_2, \bm{u}_{\bar{Q}}, \bm{z}_E) = p(\| \bm{r}_Q \|_2, \bm{u}_Q) p(\| \bm{r}_{\bar{Q}} \|_2, \bm{u}_{\bar{Q}}) p(\bm{z}_E) \nonumber \\
=& p(\| \bm{r}_Q \|_2) p(\bm{u}_Q) p(\| \bm{r}_{\bar{Q}} \|_2) p(\bm{u}_{\bar{Q}}) p(\bm{z}_E) \nonumber \\
=& p(\| \bm{r}_Q \|_2, \| \bm{r}_{\bar{Q}} \|_2) p(\bm{u}_Q, \bm{u}_{\bar{Q}}) p(\bm{z}_E), 
\end{align}
which results in that $(\| \bm{r}_Q \|_2, \| \bm{r}_{\bar{Q}} \|_2)$ and $(\bm{u}_Q, \bm{u}_{\bar{Q}})$ are mutually independent. Based on this fact, $T_E = \frac{1}{c} \frac{\| \bm{r}_{\bar{Q}} \|_2^2}{\| \bm{r}_Q \|_2^2}$ and $(\bm{u}_Q, \bm{u}_{\bar{Q}})$ are mutually independent. By using this result and the fact that $\bm{u}_Q$ and $\bm{u}_{\bar{Q}}$ are also mutually independent, we have $p(T_E, \bm{u}_Q, \bm{u}_{\bar{Q}}) = p(T_E) p(\bm{u}_Q) p(\bm{u}_{\bar{Q}})$. Based on a similar discussion as in Appendix \ref{sec:ap_indTuz} about $p(\bm{u}_Q)$ and $p(\bm{u}_{\bar{Q}})$ and the fact that $T_E \sim F_{d_1, d_2}$ from (\ref{eq:T_E_F}), $(T_E, \bm{u}_Q, \bm{u}_{\bar{Q}})$ are ancillary for $\bm{\eta}$. Therefore, $(T_E, \bm{u}_Q, \bm{u}_{\bar{Q}})$ and $(\bm{z}_E, \| \bm{r}_E \|_2)$ are mutually independent from Basu's theorem \cite{Basu1955}.

By combining the above results, we have
\begin{align}
\label{eq:T_E_ind_uzr}
&p(T_E | \bm{u}_Q, \bm{u}_{\bar{Q}}, \bm{z}_E, \| \bm{r}_E \|_2) = \frac{p(T_E, \bm{u}_Q, \bm{u}_{\bar{Q}}, \bm{z}_E, \| \bm{r}_E \|_2)}{p(\bm{u}_Q, \bm{u}_{\bar{Q}}, \bm{z}_E, \| \bm{r}_E \|_2)} \nonumber \\
=& \frac{p(T_E, \bm{u}_Q, \bm{u}_{\bar{Q}}) p(\bm{z}_E, \| \bm{r}_E \|_2)}{p(\bm{u}_Q, \bm{u}_{\bar{Q}}, \bm{z}_E, \| \bm{r}_E \|_2)} 
= \frac{p(T_E, \bm{u}_Q, \bm{u}_{\bar{Q}}) p(\bm{z}_E, \| \bm{r}_E \|_2)}{p(\bm{u}_Q, \bm{u}_{\bar{Q}}) p(\bm{z}_E, \| \bm{r}_E \|_2)} \nonumber \\
&= \frac{p(T_E, \bm{u}_Q, \bm{u}_{\bar{Q}})}{p(\bm{u}_Q, \bm{u}_{\bar{Q}})} 
= p(T_E). 
\end{align}
To derive the third equation, we used the fact that $(\bm{u}_Q, \bm{u}_{\bar{Q}})$ and $(\bm{z}_E, \| \bm{r}_E \|_2)$ are mutually independent based on a similar discussion as above. From (\ref{eq:T_E_ind_uzr}), $T_E$ and $(\bm{u}_Q, \bm{u}_{\bar{Q}}, \bm{z}_E, \| \bm{r}_E \|_2)$ are mutually independent.

By combining the above fact and (\ref{eq:T_E_F}), we have
\begin{eqnarray}
\label{eq:tildeT_uz_sgm_un}
T_E | \bm{u}_Q, \bm{u}_{\bar{Q}}, \bm{z}_E, \| \bm{r}_E \|_2 \sim F_{d_1, d_2}. 
\end{eqnarray}

Next, we consider adding a condition of selection event of $\hat{g}$ to the distribution of $T_E | \bm{u}_Q, \bm{u}_{\bar{Q}}, \bm{z}_E, \| \bm{r}_E \|_2$ in (\ref{eq:tildeT_uz_sgm_un}). Given $(\bm{u}_Q, \bm{u}_{\bar{Q}}, \bm{z}_E, \| \bm{r}_E \|_2)$, the result of selection depends solely on the value of $T_E$, since $\bm{x} = \| \bm{r}_E \|_2 \left( \frac{1}{\sqrt{cT_E + 1}} \bm{u}_Q + \sqrt{\frac{cT_E}{cT_E + 1}} \bm{u}_{\bar{Q}} \right) + \bm{z}_E$ holds. Therefore, adding the selection condition to (\ref{eq:tildeT_uz_sgm_un}) corresponds to truncation of $T_E$ to the region where $\hat{\mathcal{M}} \left[ \| \bm{r}_E \|_2 \left( \frac{1}{\sqrt{cT_E + 1}} \bm{u}_Q + \sqrt{\frac{cT_E}{cT_E + 1}} \bm{u}_{\bar{Q}} \right) + \bm{z}_E \right] = \hat{g}$ holds: 
\begin{eqnarray}
\label{eq:tildeT_uzg_sgm_un}
T_E | \bm{u}_Q, \bm{u}_{\bar{Q}}, \bm{z}_E, \| \bm{r}_E \|_2, \hat{g} \sim F_{d_1, d_2 | \hat{M}^{(\hat{g})} (E)}. 
\end{eqnarray}

Third, we consider replacing $Q$ and $\bar{Q}$ in (\ref{eq:tildeT_uzg_sgm_un}) with $Q^{(\hat{g})}$ and $\bar{Q}^{(\hat{g})}$, which is the output by clustering algorithm $\mathcal{A}$ based on the data vector $\bm{x}$. Based on a similar discussion as in Appendix \ref{sec:ap_rankE}, the matrices $Q^{(\hat{g})}$ and $\bar{Q}^{(\hat{g})}$ are also projection matrices with the ranks of $d_1$ and $d_2$, respectively, and they satisfy the following conditions: 
\begin{itemize}
\item $Q^{(\hat{g})} \bm{\mu}_0 = \bar{Q}^{(\hat{g})} \bm{\mu}_0 = \bm{0}$. 
\item There exists a set of row and column indices $I \subseteq \{ 1, \dots, np \}$ such that $Q^{(\hat{g})}_{ij} = 0$ if $i \notin I$ or $j \notin I$ holds and $\bar{Q}^{(\hat{g})}_{ij} = 0$ if $i \notin \{ 1, \dots, np \} \setminus I$ or $j \notin \{ 1, \dots, np \} \setminus I$ holds. 
\end{itemize}

Since matrices $Q^{(\hat{g})}$ and $\bar{Q}^{(\hat{g})}$ depend on the data vector $\bm{x}$ only through the choice of $\hat{g}$, under the condition that the selection result $\hat{g}$ is given, (\ref{eq:tildeT_uzg_sgm_un}) still holds with matrices $Q^{(\hat{g})}$ and $\bar{Q}^{(\hat{g})}$, which concludes the proof. 
\end{proof}

\end{appendices}


\clearpage
\bibliographystyle{abbrv}
\bibliography{paper}

\end{document}